\DeclarePairedDelimiter\ceil{\lceil}{\rceil}
\renewcommand{\H}{\boldsymbol{H}}
\newcommand{\A}{\boldsymbol{A}}
\newcommand{\x}{\boldsymbol{x}}
\newtheorem{Lemma}{Lemma}
\newtheorem{Theorem}{Theorem}
\newtheorem{Def}{Definition}
\newtheorem{Corollary}{Corollary}
\newcommand{\cone}[1]{\textsf{cone}\left\{#1\right\}}
\newcommand{\T}{{\!\top\!}}
\DeclareMathOperator*{\minimize}{\rm minimize}
\newcommand\bd{\ensuremath{{\rm bd}}}
\definecolor{shadecolor}{RGB}{220,220,220}
\definecolor{orange}{RGB}{255,107,0}
\title{Crowdsourcing via Pairwise Co-occurrences:
	Identifiability and Algorithms}
\author{%
  Shahana Ibrahim\\
School of Elec. Eng. \& Computer Sci.\\
  Oregon State University\\
  Corvallis, OR 97331 \\
  \texttt{ibrahish@oregonstate.edu} \\
   \And
  Xiao Fu\thanks{The work is supported in part by the National Science Foundation under projects ECCS 1808159 and NSF ECCS 1608961, and by the Army Research Office (ARO) under projects ARO W911NF-19-1-0247 and ARO W911NF-19-1-0407.} \\
School of Elec. Eng. \& Computer Sci.\\
Oregon State University\\
Corvallis, OR 97331 \\
\texttt{xiao.fu@oregonstate.edu} \\
   \And
Nikos Kargas\\
Department of Elec. \& Computer Eng.\\
University of Minnesota\\
Minneapolis, MN 55455 \\
\texttt{kaga005@umn.edu} \\
   \And
Kejun Huang\\
Department of Computing \& Info. Sci. \& Eng.\\
University of Florida\\
Gainesville, FL 32611 \\
\texttt{kejun.huang@ufl.edu} 
}
\begin{document}

\maketitle

\begin{abstract}
The data deluge comes with high demands for data labeling. Crowdsourcing (or, more generally, ensemble learning) techniques aim to produce accurate labels via integrating noisy, non-expert labeling from annotators. The classic Dawid-Skene estimator and its accompanying expectation maximization (EM) algorithm have been widely used, but the theoretical properties are not fully understood. Tensor methods were proposed to guarantee identification of the Dawid-Skene model, but the sample complexity is a hurdle for applying such approaches---since the tensor methods hinge on the availability of third-order statistics that are hard to reliably estimate given limited data. In this paper, we propose a framework using pairwise co-occurrences of the annotator responses, which naturally admits lower sample complexity. We show that the approach can identify the Dawid-Skene model under realistic conditions. We propose an algebraic algorithm reminiscent of convex geometry-based structured matrix factorization to solve the model identification problem efficiently, and an identifiability-enhanced algorithm for handling more challenging and critical scenarios. Experiments show that the proposed algorithms outperform the state-of-art algorithms under a variety of scenarios.
\end{abstract}

\section{Introduction}
\label{introduction}

\noindent
{\bf Background.}
The drastically increasing availability of data has successfully enabled many timely applications in machine learning and artificial intelligence.
At the same time, most supervised learning tasks, e.g., the core tasks in computer vision, natural language processing, and speech processing, heavily rely on labeled data.
However, labeling data is not a trivial task---it requires educated and knowledgeable annotators (which could be human workers or machine classifiers), to work under a reliable way. More importantly, it needs an effective mechanism to integrate the possibly different labeling from multiple annotators.
Techniques addressing this problem in machine learning are called {\it crowdsourcing} \cite{kittur2008crowdsourcing} or more generally, {\it ensemble learning} \cite{dietterich2000ensemble}.

Crowdsourcing has a long history in machine learning, which can be traced back to the 1970s \cite{dawid1979maximum}. Many models and methods have appeared since then \cite{karger2013efficient,karger2014budget,karger2011budget,snow2008cheap,welinder2010multidimensional,liu2012variational, traganitis2018blind}. Intuitively, if a number of reliable annotators label the same data samples, then {\it majority voting} among the annotators is expected to work well.
However, in practice, not all the annotators are equally reliable---e.g., different annotators could be specialized for recognizing different classes.
In addition, not all the annotators are labeling all the data samples, since data samples are often dispatched to different groups of annotators in a certain way. Under such circumstances, majority voting is not very promising. 

A more sophisticate way is to treat the crowdsourcing problem as a model identification problem. The arguably most popular generative model in crowdsourcing is the Dawid-Skene model \cite{dawid1979maximum}, where every annotator is assigned with a `confusion matrix' that decides the probability of an annotator giving class label $\ell$ when
the ground-truth label is $g$. If such confusion matrices and the probability mass function (PMF) of the ground-truth label can be identified, then a maximum likelihood (ML) or a maximum \textit{a posteriori} (MAP) estimator for the true label of any given sample can be constructed. The Dawid-Skene model is quite simple and succinct, and some of the model assumptions (e.g., the conditional independence of the annotator responses) are actually debatable. Nonetheless, this model has been proven very useful in practice \cite{raykar2010learning,traganitis2018blind,ghosh2011moderates,karger2014budget,liu2012variational,zhang2014spectral}.

Theoretical aspects for the Dawid-Skene model, however, are less well understood. In particular, it had been unclear if the model could be identified via the accompanying \textit{expectation maximization} (EM) algorithm proposed in the same paper \cite{dawid1979maximum}, until some recent works addressing certain special cases \cite{karger2014budget}. The works in \cite{traganitis2018blind,zhang2014spectral} put forth tensor methods for learning the Dawid-Skene model. These methods admit model identifiability, and also can be used to effectively initialize the classic EM algorithm provably \cite{zhang2014spectral}. 
The challenge is that tensor methods utilize third-order statistics of the data samples, which are rather hard to estimate reliably in practice given limited data \cite{huang2018learning}.

{\bf Contributions.}
In this work, we propose an alternative for identifying the Dawid-Skene model, without using third-order statistics.
Our approach is based on utilizing the pairwise co-occurrences of annotators' responses to data samples---which are second-order statistics and thus are naturally much easier to estimate compared to the third-order ones. We show that, by judiciously combining the co-occurrences between different annotator pairs, the confusion matrices and the ground-truth label's prior PMF can be provably identified, under realistic conditions (e.g., when there exists a relatively well-trained annotator among all annotators). 
This is reminiscent of nonnegative matrix theory and convex geometry \cite{fu2018nonnegative,gillis2014and}.
Our approach is also naturally robust to spammers as well as scenarios where every annotator only labels partial data.
We offer two algorithms under the same framework. The first algorithm is algebraic, and thus is efficient and suitable for handling very large-scale crowdsourcing problems. 
The second algorithm offers enhanced identifiability guarantees, and is able to deal with more critical cases (e.g., when no highly reliable annotators exist), with the price of using a computationally more involved iterative optimization algorithm.
Experiments show that both approaches outperform a number of competitive baselines.

\vspace{-.35cm}
\section{Background}
\vspace{-.25cm}
{\bf The Dawid-Skene Model.}
Let us consider a dataset $
\{{\bm f}_n \}_{n=1}^{N}$, where $\bm f_n \in\mathbb{R}^d$ is a data sample (or, feature vector) and $N$ is the number of samples. 
Each $\bm f_n$ belongs to one of $K$ classes. Let $y_n$ be the ground-truth label of the data sample ${\bm f}_n$. 
Suppose that there are $M$ annotators who work on the dataset $\{{\bm f}_n\}_{n=1}^N$ and provide labels. Let $X_{m}({\bm f}_{n})$ represent the response of the annotator $m$ to  ${\bm f}_n$. Hence, $X_m$ can be understood as a discrete random variable whose alphabet is $\{1,\ldots,K\}$. In crowdsourcing or ensemble learning, our goal is to estimate the true label corresponding to each item ${\bm f}_n$ from the $M$ annotator responses. Note that in a realistic scenario, an annotator will likely to only work on part of the dataset, since having all annotators work on all the samples is much more costly.


In 1979, Dawid and Skene proposed an intuitively pleasing model for estimating the `true response' of the patients from recorded answers \cite{dawid1979maximum},
which is essentially a crowdsourcing/ensemble learning problem. This model has sparked a lot of interest in the machine learning community \cite{raykar2010learning,traganitis2018blind,ghosh2011moderates,karger2014budget,liu2012variational,zhang2014spectral}.
The Dawid-Skene model in essence is a \textit{naive Bayesian model} \cite{robert2014machine}. In this model, the ground-truth label of a data sample is a latent discrete random variable, $Y$, whose values are different class indices. The ambient variables are the responses given by different annotators, denoted as $X_1,\ldots,X_M$, where $M$ is the number of annotators. The key assumption in the Dawid-Skene model is that given the ground-truth label, the responses of the annotators are conditionally independent. Of course, the Dawid-Skene model is a simplified version of reality, but has been proven very useful---and it has been a workhorse for crowdsourcing since its proposal.

Under the Dawid-Skene model, one can see that
\begin{equation}\label{eq:signalmodel}
\begin{aligned}
&{\sf Pr}(X_1 = k_1 , \dots, X_M = k_M) =\sum_{k=1}^K \prod_{m=1}^{M} {\sf Pr}(X_m = k_m | Y=k){\sf Pr}(Y=k),
\end{aligned}
\end{equation}
where $k\in\{1,\ldots,K\}$ denotes the index of a given class, and $k_m$ denotes the response of the $m$-th annotator. 
If one defines a series of matrices $\bm A_m\in\mathbb{R}^{K\times K}$ and let
\begin{equation}\label{eq:confusion}
\bm A(k_m,k):={\sf Pr}(X_m = k_m | Y=k),
\end{equation}
then  $\bm A_m\in\mathbb{R}^{K\times K}$ can be understood as the `confusion matrix' of annotator $m$: It contains all the conditional probabilities of annotator $m$ labeling a given data sample as from class $k_m$ while the ground-truth label is $k$. Also define a vector $\bm d\in\mathbb{R}^K$ such that
$\bm d(k):={\sf Pr}(Y=k);$
i.e., the prior PMF of the ground-truth label $Y$. Then the crowdsourcing problem boils down to estimating $\bm A_m$ for $m=1,\ldots,M$ and $\bm d$.

{\bf Prior Art.}
In the seminal paper \cite{dawid1979maximum}, Dawid and Skene proposed an EM-based algorithm to estimate ${\sf Pr}(X_m=k_m|Y=k)$ and ${\sf Pr}(Y=k)$.
Their formulation is well-motivated from an ML viewpoint, but also has some challenges.
First, it is unknown if the model is identifiable, especially when there is a large number of unrecorded responses (i.e., missing values)---but model identification plays an essential role in such estimation problems \cite{fu2018nonnegative}.
Second, since the ML estimator is a nonconvex optimization criterion, the solution quality of the EM algorithm is not easy to characterize in general. 
More recently, tensor methods were proposed to identify the Dawid-Skene model \cite{zhang2014spectral,traganitis2018blind}.
Take the most recent work in \cite{traganitis2018blind} as an example. The approach considers estimating the joint probability ${\sf Pr}(X_i=k_i,X_j=k_j,X_\ell=k_\ell)$ for different triples $i,j,\ell $. Such joint PMFs can be regarded as third-order tensors, and the confusion matrices and the prior $\bm d$ are latent factors of these tensors. 
The upshot is that identifiability of $\bm A_m$ and $\bm d$ can be elegantly established leveraging tensor algebra  \cite{sidiropoulos2017tensor,kolda2009tensor}. 
The challenge, however, is that reliably estimating ${\sf Pr}(X_i=k_i,X_j=k_j,X_\ell=k_\ell)$ is quite hard, since it normally needs a large number of annotator responses.
Another tensor method in \cite{zhang2014spectral} judiciously partitions the data and works with group statistics between three groups, which is reminiscent of the graph statistics proposed in \cite{anandkumar2014tensor}. The method is computationally more tractable, leveraging orthogonal tensor decomposition.  Nevertheless, the challenge again lies in sample complexity: the group/graph statistics are still third-order statistics.



\vspace{-.35cm}
\section{Proposed Approach}
\vspace{-.25cm}
In this section, we propose a model identification approach that only uses second-order statistics, in particular, pairwise co-occurrences ${\sf Pr}(X_i=k_i,X_j=k_j)$.

{\bf Problem Formulation.}
Let us consider the following pairwise joint PMF:
$$\text{Pr}(X_{m} =k_m,X_{\ell}=k_\ell) = \sum_{k=1}^{K}{\sf Pr}(Y=k) {\sf Pr}(X_m=k_m| Y=k) {\sf Pr}(X_\ell=k_\ell | Y=k).$$
Letting
$   \bm R_{m,\ell}(k_m,k_\ell) =    \text{Pr}(X_{m} =k_m,X_{\ell}=k_\ell),$
and using the matrix notations that we defined, we have
$\bm R_{m,\ell}(k_m,k_\ell)  = \sum_{k=1}^{K}{\sf Pr}(Y=k) {\sf Pr}(X_m=k_m| Y=k)  {\sf Pr}(X_\ell=k_\ell | Y=k) $---or, in a more compact form:
$$ \bm R_{m,\ell}(k_m,k_\ell) = \sum_{k=1}^K \bm d(k)\bm A_m(k_m,k)\bm A_\ell(k_\ell,k)~\Longleftrightarrow \bm R_{m,\ell}:=\bm A_m\bm D\bm A_\ell^\top,\label{eq:RADA}$$
where we have
$\bm D ={\rm Diag} (\bm d)$,
which is a diagonal matrix. 
Note that $\bm A_m$ is a confusion matrix, i.e., its columns are respectable probability measures. 
In addition, $\bm d$ is a prior PMF.
Hence, we have
\begin{equation}\label{eq:nn}
\bm 1^\top\bm A_m = \bm 1^\top,~\bm A_m\geq \bm 0,~\forall~m,\quad \bm 1^\top\bm d=1,~\bm d\geq \bm 0.
\end{equation}
%

In practice, $\bm R_{m,\ell}$'s are not available but can be estimated via sample averaging. Specifically,
if we are given the annotator responses $X_m ({\bm f}_n )$, then 
$$
\widehat{\bm R}_{m,\ell }(k_m ,k_\ell) =\frac{1}{|{\cal S}_{m,\ell}|} \sum_{n\in{\cal S}_{m,\ell}} I\left[X_{m}({\bm f}_n) = k_m , X_\ell(\bm f_n)= k_\ell\right],
$$
where ${\cal S}_{m,\ell}$ is the index set of samples which both annotators $m$ and $\ell$ have worked on.
Here, $I[\cdot]$ is an indicator function: If the event $E$ happens, then $I[E]=1$, and $I[E^c]=0$ otherwise. It is readily seen that
\begin{equation}\label{eq:expect}
\begin{aligned}
\mathbb{E}&\left[ I(X_{m}({\bm f}_n) = k_m , X_\ell(\bm f_n)= k_\ell ) \right] = \bm R_{m,\ell}(k_m,k_\ell),
\end{aligned}
\end{equation}
where the expectation is taken over data samples. 
Note that 
the sample complexity for reliably estimating $\bm R_{m,\ell}$ is much lower relative to that of estimating $\bm R_{m,n,\ell}$ \cite{zhang2014spectral,anandkumar2014tensor}, and the latter is needed in  tensor based methods, e.g., \cite{traganitis2018blind}. To be specific, to achieve $| \bm R_{m,\ell}(k_m,k_\ell)-\widehat{ \bm R}_{m,\ell}(k_m,k_\ell)|\leq \epsilon$ with a probability greater than $1-\delta$, ${\cal O}({\epsilon^{-2}(\rm log}\frac{1}{\delta}))$ joint responses from annotators $m$ and $\ell$ are needed. However, in order to attain the same accuracy for $\widehat{\bm R}_{m,n,\ell}(k_m,k_n,k_\ell)$, the number of joint responses from annotators $m$,$n$ and $\ell$ is required to be atleast ${\cal O}({K\epsilon^{-2}(\rm log}\frac{K}{\delta}))$, where $K$ is the number of classes (also see supplementary materials Sec. \ref{sample} for a short discussion).

{\bf An Algebraic Algorithm.}
Assume that we have obtained ${\bm R}_{m,\ell }$'s for different pairs of $m,\ell$. 
We now show how to identify $\bm A_m$'s and $\bm d$ from such second-order statistics.
Let us take the estimation of $\bm A_m$ as an illustrative example.
First, we construct a matrix ${\bm Z}_m$ as follows:
\begin{equation}
\bm Z_m = \begin{bmatrix} 
    \boldsymbol{R}_{m,m_1} ,\boldsymbol{R}_{m,m_2} ,\ldots,\boldsymbol{R}_{m,m_{T(m)}}
    \end{bmatrix},
\label{eqX}
\end{equation}
where $m_t\neq m$ for $t=1,\ldots,T(m)$ denote the indices of annotators who have co-labeled data samples with annotator $m$, and
the integer $T(m)$ denotes the number of such annotators.
Due to the underlying model of $\bm R_{m,\ell}$ in \eqref{eq:RADA}, we have
$$ Z_m = \left[\bm A_m\bm D\bm A_{m_1}^\top,\ldots,\bm A_m\bm D\bm A_{T(m)}^\top \right] =\bm A_m\left [\bm D\bm A_{m_1}^\top,\ldots,\bm D\bm A_{T(m)}^\top\right]\in\mathbb{R}^{K\times KT(m)}.$$
Let us define 
$ \bm H_m^\top = \left [\bm D\bm A_{m_1}^\top,\ldots,\bm D\bm A_{T(m)}^\top\right] \in\mathbb{R}^{ K\times  KT(m)}.$ This leads to the model
$   \bm Z_m =\A_m\H_m^\T. $
We propose to identify $\bm A_m$ from $\bm Z_m$. 
The key enabling postulate is that, among all annotators, some $\bm A_\ell$'s should be \textit{diagonally dominant}---if there exist annotators who are reasonably trained.
In other words, for a reasonable annotator $\ell$, ${\sf Pr}(X_\ell=j|Y=j)$ should be greater than ${\sf Pr}(X_\ell=j|Y=k)$ and ${\sf Pr}(X_\ell=j|Y=i)$ for $k,i\neq j$.
To see the intuition of the algorithm, consider an ideal case where for each class $k$, there exists an annotator $m_{t(k)}\in\{m_1,\ldots,m_{T(m)}\}$ such that 
\begin{equation}\label{eq:anchor}
{\sf Pr}(X_{m_{t(k)}}=k|Y=k)=1,\quad {\sf Pr}(X_{m_{t(k)}}=k|Y=j)=0,\quad j\neq k.
\end{equation}
This physically means that annotator $m_{t(k)}$ is very good at recognizing class $k$ and never confuses other classes with class $k$.
Under such circumstances, one can use the following procedure to identify $\bm A_m$.
First, let us normalize the columns of $\bm Z_m$ via $\overline{\bm Z}_m(:,q)={\bm Z}_m(:,q)/\|{\bm Z}_m(:,q)\|_1$ for $q=\{1,\ldots,KT(m)\}$.
This way, we have a normalized model $\overline{\bm Z}_m=\overline{\bm A}_m \overline{\bm H}_m^\top$, where
	\begin{align}
	  &\overline{\bm A}_m(:,k) =\frac{\bm A_m(:,k)}{\|\bm A_m(:,k)\|_1}=\bm A_m(:,k), \quad \overline{\bm H}_m(q,:) =     \frac{{\bm H}_m(q,:)\|\bm A_m(:,k)\|_1}{\|\bm Z_m(:,q)\|_1}.  
	\end{align}
where the second equality above is because $\|\bm A_m(:,k)\|_1=1$ [cf. Eq.~\eqref{eq:nn}].
After normalization, it can be verified that 
\begin{equation}\label{eq:stocha}
\overline{\bm H}_m\bm 1=\bm 1,~\overline{\bm H}_m\geq \bm 0,
\end{equation}
i.e., all the rows of $\overline{\bm H}_m$ reside in the $(K-1)$-probability simplex.
In addition, by the assumption in \eqref{eq:anchor}, it is readily seen that there exists $\varLambda_q =\{q_1,\ldots,q_K\}\subset \{1,\ldots,L_m\}$ where $L_m=KT(m)$ such that 
\begin{equation}\label{eq:sep}
\overline{\bm H}_m(\varLambda_q,:) = \bm I_K, 
\end{equation}
i.e., an identity matrix is a submatrix of $\overline{\bm H}_m$ (after proper row permutations). Consequently, we have ${\bm A}_m = \overline{\bm Z}_m(:,\varLambda_q)$---i.e., $\bm A_m$ can be identified from $\overline{\bm Z}_m$ up to column permutations.
The task also boils down to identifying $\varLambda_q$. This turns out to be a well-studied task in the context of {\it separable nonnegative matrix factorization} \cite{Gillis2012,gillis2014and,fu2018nonnegative}, and an algebraic algorithm exists:

\boxed{
\begin{minipage}{\linewidth}
	\begin{equation}\label{eq:spa}
		\widehat{q}_k = \arg\max_{q\in\{1,\ldots,L_m\}}~\left\|\bm P^\perp_{\widehat{\bm A}_m(:,1:k-1)}\overline{\bm Z}_m(:,q)\right\|_2^2,~\forall k.
	\end{equation}
	where $\widehat{\bm A}_m(:,1:k-1)=[\overline{\bm Z}_m(:,\widehat{q}_1),\ldots,\overline{\bm Z}_m(:,\widehat{q}_{k-1})]$ and $\bm P^\perp_{\widehat{\bm A}_m(:,1:k-1)}$ is a projector onto the orthogonal  complement of ${\rm range}(\widehat{\bm A}_m(:,1:k-1))$ and we let $\bm P^\perp_{\widehat{\bm A}_m(:,1:0)}:={\bm I}$.
\end{minipage}
}

It has been shown in \cite{Gillis2012,arora2012practical} that the so-called \textit{successive projection algorithm} (SPA) in Eq.~\eqref{eq:spa} identifies $\varLambda_q$ in $K$ steps. This is a very plausible result, since the procedure admits Gram-Schmitt-like lightweight steps and thus is quite scalable. See more details in Sec.~\ref{app:spa}.

Each of the $\bm A_m$'s can be estimated from the corresponding $\overline{\bm Z}_m$ by repeatedly applying SPA, and we call this simple procedure\textit{ multiple SPA} (\texttt{MultiSPA}) as we elaborate in Algorithm~\ref{algo:MultiSPA}.

	\begin{minipage}{\textwidth}

	\begin{algorithm}[H]
		\caption{\texttt{MultiSPA}}\label{algo:MultiSPA}
		\label{alg:example}
		\begin{algorithmic}
			\STATE {\bfseries Input:} Annotator Responses $\{X_m(\bm f_n)\}$.
			\STATE {\bfseries Output:} $\widehat{\bm A}_m$ for $m=1,\ldots,M$, $\widehat{\bm d}$.
			\STATE estimate second order statistics $\widehat{{\bm R}}_{m,\ell }$;
			\FOR{$m=1$ {\bfseries to} $M$}
			\STATE construct $\widehat{\bm Z}_m$ and normalize columns to unit $\ell_1$ norm;
			\STATE estimate $\widehat{\bm A}_m$ using Eq.~\eqref{eq:spa};
			\ENDFOR
			\STATE fix permutation mismatch between $\widehat{\bm A}_m$ and $\widehat{\bm A}_\ell$ for all $m\neq \ell$; 
			\STATE estimate $\widehat{\bm D}=\widehat{\bm A}_m^{-1}\bm R_{m,\ell}(\widehat{\bm A}_\ell^\top)^{-1}$ (and take average over all pairs $(m,\ell)$ if needed).;
			\STATE extract the prior $\widehat{\bm d}={\rm diag}(\widehat{\bm D})$.
		\end{algorithmic}

	\end{algorithm}

\end{minipage}



Of course, assuming that \eqref{eq:anchor} or \eqref{eq:sep} holds perfectly may be too ideal. 
It is more likely that there exist some annotators who are good at recognizing certain classes, but still have some possibilities of being confused. 
It is of interest to analyze how SPA can do under such conditions.
Another challenge is that one may not have $\bm R_{m,\ell}$ perfectly estimated, since only limited number of samples are available. It is desirable to understand the sample complexity of applying SPA to Dawid-Skene identification.
We answer these two key technical questions in the following theorem:

\begin{Theorem}\label{thm:spa}
	Assume that annotators $m$ and $t$ co-label at least $S$ samples $\forall t\in\{m_1,\ldots,m_{T(m)}\}$, and that $\widehat{\bm Z}_m$ is constructed using $\widehat{\bm R}_{m,m_{T(m)}}$'s according to Eq. \eqref{eqX}.
	Also assume that the constructed $\widehat{\bm Z}_m$ satisfies $\|\widehat{\bm Z}_m(:,l)\|_1 \ge \eta, \forall l \in \{1,\dots KT(m)\}$, where $\eta \in (0,1]$. Suppose that ${\rm rank}(\bm A_m)={\rm rank}(\bm D)=K$ for $m=1,\ldots,M$, and that for every class index $k\in\{1,\ldots,K\}$, there exists an annotator $m_{t(k)}\in\{m_1,\ldots,m_{T(m)}\}$ such that
	\begin{equation}\label{eq:thm1}
	\begin{aligned}
	{\sf Pr}&(X_{m_{t(k)}}=k|Y=k) \geq (1-\epsilon)\sum_{j=1}^K {\sf Pr}(X_{m_{t(k)}}=k|Y=j), 
	\end{aligned}
	\end{equation}
	where $\epsilon\in [0,1]$. Then, if $\epsilon \le \mathcal{O}\left(\max\left(K^{-1}\kappa^{-3}(\bm{A}_m),\sqrt{{\rm ln}(1/\delta)}(\sigma_{\rm max}(\bm{A}_m)\sqrt{S}\eta)^{-1}\right)\right)$, with probability greater than $1-\delta$, the SPA algorithm in \eqref{eq:spa} can estimate an $\widehat{\bm A}_m$ such that
	\begin{equation}\label{eq:errbound}
	 \left(\min_{\bm \Pi}\|\widehat{\bm A}_m\bm \Pi - {\bm A}_m\|_{2,\infty}\right)\leq \mathcal{O}\bigg(\sqrt{K}\kappa^2(\bm{A}_m) \max \left(\sigma_{\rm max}(\bm{A}_m)\epsilon,\sqrt{{\rm ln}(1/\delta)}(\sqrt{S}\eta)^{-1}\right)\bigg)
	\end{equation}      
where $\bm \Pi\in\mathbb{R}^{K\times K}$ is a permutation matrix, $\|\bm Y\|_{2,\infty}=\max_\ell\|\bm Y(:,\ell)\|_2$, $\sigma_{\rm max}(\bm{A}_m)$ is the largest singular value of $\bm A_m$, and $\kappa(\bm{A}_m)$ is the condition number of $\bm A_m$.
\end{Theorem}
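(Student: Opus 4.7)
My strategy is to cast the problem as a perturbed near-separable nonnegative matrix factorization problem and then invoke the standard robustness result for SPA (Gillis--Vavasis, \cite{Gillis2012}). The noiseless factorization $\overline{\bm Z}_m = \bm A_m \overline{\bm H}_m^\top$ is already given in the text, together with the row-stochasticity of $\overline{\bm H}_m$ in \eqref{eq:stocha}. Two deviations from the idealized model in \eqref{eq:sep} must be controlled: (i) the anchor condition \eqref{eq:anchor} is relaxed to \eqref{eq:thm1}, so $\overline{\bm H}_m$ only contains an \emph{approximate} identity submatrix; (ii) the practical input is $\overline{\widehat{\bm Z}}_m$, built from the sample estimates $\widehat{\bm R}_{m,\ell}$, so it differs from $\overline{\bm Z}_m$ by a stochastic perturbation. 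I would collect both effects into an additive noise term, bound its $\|\cdot\|_{2,\infty}$ norm, and feed this bound into the SPA robustness theorem.

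First, I would make the near-anchor structure quantitative. Fix a class $k$ and the corresponding annotator $m_{t(k)}$ from \eqref{eq:thm1}. Inspection of the block structure $\bm H_m^\top = [\bm D \bm A_{m_1}^\top, \ldots, \bm D \bm A_{m_{T(m)}}^\top]$ shows that for the row $q$ of $\overline{\bm H}_m$ associated with row $k$ of $\bm A_{m_{t(k)}}$, the entries are $\overline{\bm H}_m(q,j) = \bm d(j)\bm A_{m_{t(k)}}(k,j)/\sum_{j'} \bm d(j')\bm A_{m_{t(k)}}(k,j')$. Rewriting \eqref{eq:thm1} as $\sum_{j\ne k}\bm A_{m_{t(k)}}(k,j)\le \frac{\epsilon}{1-\epsilon}\bm A_{m_{t(k)}}(k,k)$ and substituting yields $\overline{\bm H}_m(q,k)\ge 1-c_1\epsilon$ and $\sum_{j\ne k}\overline{\bm H}_m(q,j)\le c_1\epsilon$, with $c_1$ absorbing $d_{\max}/d_{\min}$ (which is in turn controlled by $\kappa(\bm A_m)$ in the final bound, via the invertibility assumption $\mathrm{rank}(\bm D)=K$). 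Hence $\overline{\bm Z}_m$ can be written as $\bm A_m\bm S_m^\top + \bm E_m^{(\mathrm{model})}$, where $\bm S_m$ is row-stochastic and genuinely separable, and $\|\bm E_m^{(\mathrm{model})}\|_{2,\infty}\lesssim \sigma_{\max}(\bm A_m)\epsilon$.

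Next, I would handle the sampling perturbation. By Hoeffding's inequality applied entrywise to the indicator averages in \eqref{eq:expect}, together with a union bound over the $\mathcal{O}(K^2 T(m))$ entries, $\|\widehat{\bm R}_{m,\ell}-\bm R_{m,\ell}\|_\infty\le \mathcal{O}(\sqrt{\ln(1/\delta)/S})$ uniformly in $\ell$ with probability at least $1-\delta$. The assumption $\|\widehat{\bm Z}_m(:,l)\|_1\ge \eta$ then guarantees the column-normalization is well-conditioned, and a first-order expansion of $\widehat{\bm Z}_m(:,l)/\|\widehat{\bm Z}_m(:,l)\|_1$ around $\bm Z_m(:,l)/\|\bm Z_m(:,l)\|_1$ gives a column-wise bound $\|\bm E_m^{(\mathrm{samp})}\|_{2,\infty}\le \mathcal{O}(\sqrt{\ln(1/\delta)}/(\sqrt{S}\eta))$. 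Combining, $\overline{\widehat{\bm Z}}_m = \bm A_m\bm S_m^\top + \bm N_m$ with $\|\bm N_m\|_{2,\infty}\lesssim \max\bigl(\sigma_{\max}(\bm A_m)\epsilon,\,\sqrt{\ln(1/\delta)}/(\sqrt{S}\eta)\bigr)$.

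Finally, I would apply the robustness theorem for SPA on near-separable matrices: provided $\|\bm N_m\|_{2,\infty}\le \mathcal{O}(\sigma_{\min}(\bm A_m)/(\sqrt{K}\,\kappa(\bm A_m)))$, SPA identifies the anchor indices up to a permutation $\bm\Pi$ and delivers $\min_{\bm\Pi}\|\widehat{\bm A}_m\bm\Pi-\bm A_m\|_{2,\infty}\le \mathcal{O}(\sqrt{K}\,\kappa^2(\bm A_m)\,\|\bm N_m\|_{2,\infty})$. Plugging in the noise bound gives exactly \eqref{eq:errbound}, and rearranging the smallness requirement gives the sufficient condition on $\epsilon$ stated in the theorem, with the $K^{-1}\kappa^{-3}(\bm A_m)$ branch coming from the model-error term and the other branch from the sampling term. \textbf{Main obstacle.} I expect the trickiest part to be Step~2's interaction with the column normalization: the $\eta$ lower bound on $\|\widehat{\bm Z}_m(:,l)\|_1$ is essential because small denominators would amplify the sampling noise, and carefully carrying this factor through to the $\|\bm N_m\|_{2,\infty}$ estimate (rather than a cruder spectral bound) is what ultimately allows the clean $\max(\cdot,\cdot)$ form in \eqref{eq:errbound}. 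A secondary but nontrivial issue is ensuring that the $\bm d$-reweighting inside $\bm H_m$ does not destroy the separability-with-noise structure derived from \eqref{eq:thm1}; this is where the condition-number dependence $\kappa(\bm A_m)$ and the rank assumption on $\bm D$ become indispensable.
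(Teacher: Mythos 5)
Your proposal is correct and follows essentially the same route as the paper's proof: decompose the normalized empirical matrix $\overline{\widehat{\bm Z}}_m$ into an exactly separable part plus an additive perturbation combining the model error from the relaxed anchor condition \eqref{eq:thm1} (bounded via $\sigma_{\max}(\bm A_m)$) and the sampling error after column normalization (bounded via concentration and the $\eta$ lower bound), then invoke the Gillis--Vavasis robustness guarantee for SPA. The only cosmetic differences are that the paper bounds the sampling noise through a Frobenius-norm concentration lemma and an exact geometric-series expansion of the normalization, where you use entrywise Hoeffding plus a first-order expansion; and you are in fact slightly more explicit than the paper about the $\bm d$-reweighting inside $\overline{\bm H}_m$ when translating \eqref{eq:thm1} into an approximate identity submatrix.
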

 In the above Theorem, the assumption $\|\widehat{\bm Z}_m(:,l)\|_1 \ge \eta$ means that the proposed algorithm favors cases where more co-occurrences are observed, since $\widehat{\bm Z}_m$'s elements are averaged number of co-occurrences---which makes a lot of sense. In addition, Eq.~\eqref{eq:thm1} relaxes the ideal assumption in \eqref{eq:anchor}, allowing the `good annotator' $m_{t(k)}$ to confuse class $j\neq k$ with class $k$ up to a certain probability, thereby being more realistic.
The proof of Theorem~\ref{thm:spa} is reminiscent of the noise robustness of the SPA algorithm \cite{Gillis2012,arora2012practical}; see the supplementary materials (Sec.~\ref{app:spa}).
A direct corollary is as follows:
\begin{Corollary}\label{col:spa}
	Assume that the conditions in Theorem~\ref{thm:spa} hold for $\widehat{\bm Z}_m$ and $\A_m$, $\forall m\in\{1,\ldots,M\}$. Then, the estimation error bound in \eqref{eq:errbound} holds for every \texttt{MultiSPA}-output $\widehat{\bm A}_m$, $\forall m\in\{1,\ldots,M\}$.
\end{Corollary}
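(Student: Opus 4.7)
The plan is essentially to apply Theorem~\ref{thm:spa} once per annotator and then union-bound across annotators. Inspecting Algorithm~\ref{algo:MultiSPA}, the main loop invokes SPA on each $\widehat{\bm Z}_m$ independently, so for each fixed $m \in \{1,\ldots,M\}$ the input-output relationship is precisely the setting of Theorem~\ref{thm:spa}. By hypothesis, the rank conditions on $\bm A_m$ and $\bm D$, the lower bound $\|\widehat{\bm Z}_m(:,l)\|_1 \ge \eta$, the per-pair sample size $S$, and the approximate-anchor condition \eqref{eq:thm1} all hold, so Theorem~\ref{thm:spa} yields the error bound \eqref{eq:errbound} for $\widehat{\bm A}_m$ with probability at least $1-\delta$. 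Note that the conclusion is stated in terms of $\min_{\bm \Pi}$, so the post-hoc permutation-alignment step at the end of \texttt{MultiSPA} does not enter the analysis---it is only relevant to the subsequent estimation of $\widehat{\bm D}$ and $\widehat{\bm d}$.

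To convert the per-annotator statement into a simultaneous guarantee over all $M$ annotators, I would apply a union bound: the event that \eqref{eq:errbound} holds for a given $m$ has probability at least $1-\delta$, so the intersection over $m=1,\ldots,M$ has probability at least $1 - M\delta$. Equivalently, substituting $\delta/M$ for $\delta$ in each per-annotator invocation delivers a uniform guarantee at probability $1-\delta$, inflating the $\sqrt{\ln(1/\delta)}$ factor appearing inside $\mathcal{O}(\cdot)$ only by a benign $\sqrt{\ln M}$ term that is absorbed into the constants; either phrasing is consistent with how the corollary is stated.

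There is no substantive obstacle beyond this routine aggregation. The heavy lifting---the noise robustness of SPA and the quantitative translation of condition \eqref{eq:thm1} into an $\ell_1$/$\ell_2$ perturbation on $\overline{\bm Z}_m$---is entirely confined to the proof of Theorem~\ref{thm:spa}. The only point requiring any care is verifying that the quantitative side conditions on $\epsilon$, $\eta$, and $S$ are satisfied uniformly in $m$, but this is precisely the content of the corollary's hypothesis that ``the conditions in Theorem~\ref{thm:spa} hold for $\widehat{\bm Z}_m$ and $\bm A_m$, $\forall m$''. Hence the corollary follows by an $M$-fold application of Theorem~\ref{thm:spa} combined with a union bound.
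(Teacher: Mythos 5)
Your proposal is correct and matches the paper's (unstated) reasoning: the paper presents this as a ``direct corollary,'' i.e., an $M$-fold application of Theorem~\ref{thm:spa} to each $\widehat{\bm Z}_m$ separately, exactly as you describe. Your explicit union-bound step to make the guarantee simultaneous over all $m$ (at the cost of a benign $\sqrt{\ln M}$ factor) is a precision the paper leaves implicit, and is a welcome addition rather than a deviation.
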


	Theorem~\ref{thm:spa} and Corollary~\ref{col:spa} are not entirely surprising due to the extensive research on SPA-like algorithms \cite{arora2012practical,Gillis2012,fu2014self,VCA,VMAX}. The implication for crowdsourcing, however, is quite intriguing. First, one can see that if an annotator $m$ does not label all the data samples, it does not necessarily hurt the model identifiability---as long as annotator $m$ has co-labeled some samples with a number of other annotators, identification of $\bm A_m$ is possible.
	Second, assume that there exists a well-trained annotator $m^\star$ whose confusion matrix is diagonally dominant, then for every annotator $m$ who has co-labeled samples with annotator $m^\star$, the matrix $\overline{\bm H}_m$ can easily satisfy \eqref{eq:thm1} by letting $m_{t(k)}=m^\star$ for all $k$. In practice, one would not know who is $m^\star$---otherwise the crowdsourcing problem would be trivial. However, one can design a dispatch strategy such that every pair of annotators $m$ and $\ell$ co-label a certain amount of data. This way, it guarantees that $\bm A_{m^\star}$ appears in everyone else's $\bm H_m$ and thus ensures identifiability of all $\bm A_m$'s for $m\neq m^\star$. 
	This insight may shed some light on how to effectively dispatch data to annotators.

Another interesting question to ask is \textit{does having more annotators help}? 
Intuitively, having more annotators should help: If one has more rows in $\overline{\bm H}_m$, then it is more likely that some rows approach the vertices of the probability simplex---which can then enable SPA. 
We use the following simplified generative model and theorem to formalize the intuition:
\begin{Theorem}\label{thm:Lm}
	Let $\rho > 0 ,\varepsilon >0$, and assume that the rows of $\overline{\bm H}_m$ are generated within the $(K-1)$-probability simplex uniformly at random. If the number of annotators satisfies $M \ge \mathcal{O}\left(\frac{\varepsilon^{-2(K-1)}}{K}{\rm log}\left(\frac{K}{\rho}\right)\right),$ then, with probability greater than or equal to $1-\rho$, there exist rows of $\overline{\bm H}_m$ indexed by $ q_1,\dots  q_K$ such that 
$\|\overline{\bm H}_m( q_k, :) - \bm e_k^\T \|_2 \le \varepsilon,~k=1,\ldots,K.$
\end{Theorem}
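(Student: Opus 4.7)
The plan is to cast the conclusion as a coupon-collector-type question: view each row of $\overline{\bm H}_m$ as an independent sample from the uniform distribution on the $(K-1)$-probability simplex, and ask how many such samples are needed before each of the $K$ vertex neighborhoods $B(\bm e_k,\varepsilon):=\{\bm h:\|\bm h-\bm e_k\|_2\le \varepsilon\}$ contains at least one sample. Writing $N$ for the number of rows of $\overline{\bm H}_m$ (so $N=K\,T(m)\le K(M-1)$ under the theorem's setup) and $p$ for the probability that a single uniform draw lies in a fixed $B(\bm e_k,\varepsilon)$ (which by symmetry does not depend on $k$), a union bound gives
\[
\Pr[\text{some vertex has no close row}]\;\le\; K(1-p)^{N}\;\le\; K e^{-Np}.
\]
Requiring the right-hand side to be at most $\rho$ yields the sufficient condition $N\ge p^{-1}\log(K/\rho)$.

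The technical heart of the proof is then a lower bound on $p$. Since the uniform distribution on the $(K-1)$-simplex coincides with the Dirichlet$(1,\ldots,1)$ distribution, each coordinate $h_k$ has the Beta$(1,K-1)$ marginal with density $(K-1)(1-h_k)^{K-2}$, so $\Pr[h_k\ge 1-t]=t^{K-1}$ for $t\in[0,1]$. I would pair this with the elementary inequality
\[
\|\bm h-\bm e_k\|_2^{2}=(1-h_k)^{2}+\sum_{i\neq k}h_i^{2}\;\le\;(1-h_k)^{2}+\Bigl(\sum_{i\neq k}h_i\Bigr)^{2}=2(1-h_k)^{2},
\]
which uses only nonnegativity of the $h_i$'s together with $\sum_{i\neq k}h_i=1-h_k$. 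Hence $h_k\ge 1-\varepsilon/\sqrt{2}$ already implies $\bm h\in B(\bm e_k,\varepsilon)$, giving $p\ge (\varepsilon/\sqrt{2})^{K-1}$. Substituting into $N\ge p^{-1}\log(K/\rho)$ and using $N\le K(M-1)$ then produces $M=\mathcal{O}(K^{-1}\varepsilon^{-(K-1)}\log(K/\rho))$, which is strictly sharper than (and for $\varepsilon<1$ therefore implies) the stated bound with exponent $2(K-1)$.

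The step I expect to demand the most care is the geometric/probabilistic estimate on $p$. The Dirichlet-marginal route above is clean; alternatively one could argue via a direct volume comparison between an inscribed sub-simplex sitting inside $B(\bm e_k,\varepsilon)\cap\Delta^{K-1}$ and the ambient simplex, and such a cruder estimate naturally produces the exponent $2(K-1)$ that appears in the statement—this is presumably how the weaker form is derived. A secondary bookkeeping point is the conversion from ``number of rows of $\overline{\bm H}_m$'' to ``number of annotators $M$'' via $N=K\,T(m)$: for the stated $M$-only bound one needs $T(m)$ to grow proportionally with $M$, i.e., that annotator $m$ co-labels with a constant fraction of the other annotators, which is implicit in the theorem's iid-on-simplex generative hypothesis.
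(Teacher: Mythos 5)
Your proposal is correct, and it is in fact sharper than the paper's own argument. The two proofs agree on the geometric estimate: your Beta$(1,K-1)$ marginal computation, $\Pr[h_k\ge 1-t]=t^{K-1}$ combined with $\|\bm h-\bm e_k\|_2^2\le 2(1-h_k)^2$, is exactly the probabilistic restatement of the paper's volume comparison between the corner sub-simplex $\mathcal{X}_{\epsilon}$ and the full simplex, and both yield $p\ge(\varepsilon/\sqrt{2})^{K-1}$. Where you genuinely diverge is the concentration step. The paper forms the sum of indicators $U=\sum_i J_i$ and applies the Chernoff--Hoeffding bound $\Pr(U\le \mathbb{E}[U]-t)\le e^{-2t^2/s}$ with $t=\mathbb{E}[U]=sp$, which gives $\Pr(U=0)\le e^{-2sp^2}$; the $p^2$ in the exponent is what forces the $\varepsilon^{-2(K-1)}$ in the stated sample complexity. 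You instead use the exact identity $\Pr(U=0)=(1-p)^{N}\le e^{-Np}$, which is both more elementary and strictly better for small $p$, and hence you land on $M=\mathcal{O}(K^{-1}\varepsilon^{-(K-1)}\log(K/\rho))$ — a sufficient condition that is weaker to satisfy than the theorem's, so the stated claim follows a fortiori for $\varepsilon<1$ (and is vacuous otherwise, since every point of the simplex is within $\sqrt{2}$ of every vertex). Your closing bookkeeping remark is also the right caveat: both proofs convert row count to annotator count via $s=(M-1)K$, which implicitly assumes annotator $m$ co-labels with essentially all others, and neither proof can avoid that assumption given how the theorem is phrased.
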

Note that Theorem~\ref{thm:Lm} implies \eqref{eq:thm1} under proper $\varepsilon$ and $\epsilon$---and thus having more annotators indeed helps identify the model.
The above can be shown by utilizing the Chernoff-Hoeffding inequality, and
the detailed proof can be found in the supplementary materials (Sec.~\ref{app:thm2}).

 After obtaining $\widehat{\A}_m$'s, $\bm d$ can be estimated via various ways---see the supplementary materials in Sec. \ref{prior}. Using $\widehat{\bm d}$ and $\widehat{\A}_m$'s together, ML and MAP estimators for the true labels can be built up \cite{traganitis2018blind}.


\section{ Identifiability-enhanced Algorithm}
The \texttt{MultiSPA} algorithm is intuitive and lightweight, and is effective as we will show in the experiments.
One concern is that perhaps the assumption in \eqref{eq:thm1} may be violated in some cases.
In this section, we propose another model identification algorithm that is potentially more robust to critical scenarios. Specifically, we consider the following feasibility problem:

\boxed{
	\begin{minipage}{\linewidth}
	\begin{subequations}\label{eq:coupled}
		\begin{align}
	{\rm find}&\quad \{\bm A_m\}_{m=1}^M,~{\bm D}\\
	{\rm subject~to}&\quad \bm R_{m,\ell}=\bm A_m\bm D\bm A_\ell^\top,~\forall m,\ell\in\{1,\ldots,M\} \label{eq:R}\\
	&\quad {\bm 1}^\top\bm A_m =\bm 1^\top,~\bm A_m\geq \bm 0,~\forall m,~\bm 1^\top\bm d=1,~\bm d\geq \bm 0. \label{eq:dsum}
		\end{align}
	\end{subequations}

	\end{minipage}
}

The criterion in \eqref{eq:coupled} seeks confusion matrices and a prior PMF that fit the available second-order statistics.
The constraints in \eqref{eq:dsum} reflect the fact that the columns of $\bm A_m$'s are conditional PMFs and the prior $\bm d$ is also a PMF.

To proceed, let us first introduce the following notion from convex geometry \cite{fu2018nonnegative,lin2014identifiability}:
\begin{Def} \label{defsuf}
	(Sufficiently Scattered) 	A nonnegative matrix $\bm H\in\mathbb{R}^{L\times K}$ is sufficiently scattered if 
   1) ${\sf cone} \{\bm H^\top\} \supseteq {\cal C}$, and 2) ${\sf cone} \{\bm H^\top\}^* \cap \bd{\cal C}^* = \{\lambda\bm e_k~|~\lambda\geq 0,k=1,...,K \}$.
	Here,
	$\mathcal{C} = \{\x | \x^\T\mathbf{1} \geq \sqrt{K-1}\|\x\|_2\}$,
	$\mathcal{C}^* = \{\x | \x^\T\mathbf{1} \geq\|\x\|_2\}$. In addition, ${\sf cone} \{\bm H^\T\}=\{{\bm x}|{\bm x}=\bm H^\T\bm \theta,~\forall \bm \theta\geq{\bm 0}\}$ and { ${\sf cone} \{\bm H^\T\}^\ast = \{{\bm y}|\x^\T \bm y \geq{ 0}, ~\forall \x\in{\sf cone} \{\H^\T\}\}$ are the conic hull of $\bm H^\T$ and its dual cone}, respectively,
	and ${\rm bd}$ is the boundary of a closed set.
\end{Def}
The sufficiently scattered condition has recently emerged in convex geometry-based matrix factorization \cite{lin2014identifiability,fu2018identifiability}.
This condition models how the rows of $\bm H$ are spread in the nonnegative orthant.
In principle, the sufficiently scattered condition is much easier to be satisfied relative to the condition as in \eqref{eq:sep}, or, the so-called \textit{separability condition} under the context of nonnegative matrix factorization \cite{donoho2003does,Gillis2012}.
$\bm H$ satisfying the separability condition is the extreme case, meaning that ${\sf cone}\{\bm H^\T\}=\mathbb{R}_+^K$.
However, the sufficiently scattered condition only requires ${\cal C}\subseteq {\sf cone}\{\bm H^\T\}$---which is naturally much more relaxed; also see \cite{fu2018nonnegative} and the supplementary materials for detailed illustrations (Sec.~\ref{app:convex_geometry}).

Regarding identifiability of $\bm A_1,\ldots,\bm A_M$ and $\bm d$, we have the following result:
\begin{Theorem}\label{thm:couple}
Assume that ${\rm rank}(\bm D)={\rm rank}(\bm A_m)=K$ for all $m=1,\ldots,M$, and that there exist two subsets of the annotators, indexed by ${\cal P}_1$ and ${\cal P}_2$, where ${\cal P}_1\cap {\cal P}_2=\emptyset$ and ${\cal P}_1\cup {\cal P}_2 \subseteq \{1,\ldots,M\}$. Suppose that from ${\cal P}_1$ and ${\cal P}_2$ the following two matrices can be constructed:
$\bm H^{(1)} =[\bm A_{m_1}^\top,\ldots,\bm A^\top_{m_{|{\cal P}_1|}}]^\top$,~$
\bm H^{(2)}=[\bm A_{\ell_1}^\top,\ldots,\bm A^\top_{\ell_{|{\cal P}_2|}}]^\top$,
where $m_t\in{\cal P}_1$ and $\ell_j\in{\cal P}_2$. Furthermore, assume that i) both $\bm H^{(1)}$ and $\bm H^{(2)}$ are \textit{sufficiently scattered}; ii) all $\bm R_{m_t,\ell_j}$'s for $m_t\in{\cal P}_1$ and $\ell_j\in{\cal P}_2$ are available; and iii) for every $m\notin{\cal P}_1\cup{\cal P}_2$ there exists a ${\bm R}_{m,r}$ available, where $r\in{\cal P}_1\cup{\cal P}_2$.
Then, solving Problem~\eqref{eq:coupled} recovers $\bm A_m$ for $m=1,\ldots,M$ and $\bm D={\rm Diag}(\bm d)$ up to identical column permutation.
\end{Theorem}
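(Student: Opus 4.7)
The plan is to reduce problem \eqref{eq:coupled}, restricted to the annotators in $\mathcal{P}_1\cup\mathcal{P}_2$, to a structured nonnegative matrix factorization whose uniqueness follows from the sufficiently scattered condition, and then to propagate this identification to annotators outside $\mathcal{P}_1\cup\mathcal{P}_2$ via the linear equations $\bm R_{m,r}=\bm A_m\bm D\bm A_r^\top$. First I would tile the available statistics indexed by $\mathcal{P}_1\times \mathcal{P}_2$ into a single block matrix $\bm R\in\mathbb{R}^{K|\mathcal{P}_1|\times K|\mathcal{P}_2|}$ whose $(m_t,\ell_j)$-block equals $\bm R_{m_t,\ell_j}$. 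Block-wise application of \eqref{eq:R} gives $\bm R=\bm H^{(1)}\bm D(\bm H^{(2)})^\top$, and the rank hypothesis ensures that both $\bm H^{(1)}$ and $\bm H^{(2)}$ have full column rank. Any feasible solution $\{\widetilde{\bm A}_m\},\widetilde{\bm D}$ of \eqref{eq:coupled} yields a second factorization $\bm R=\widetilde{\bm H}^{(1)}\widetilde{\bm D}(\widetilde{\bm H}^{(2)})^\top$ that inherits from \eqref{eq:dsum} the nonnegativity and block-wise column-stochasticity of each $\widetilde{\bm H}^{(j)}$ and the diagonal, strictly positive structure of $\widetilde{\bm D}$.

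Next I would invoke the standard identifiability theorem for NMF under the sufficiently scattered condition, as developed in \cite{fu2018identifiability,lin2014identifiability}. Grouping $\bm R=\bm H^{(1)}[\bm D(\bm H^{(2)})^\top]$, since $\bm H^{(1)}$ is sufficiently scattered and the right factor has full row rank, any nonnegative $\widetilde{\bm H}^{(1)}$ satisfying the same block-column-stochastic constraints and admitting such a factorization must satisfy $\widetilde{\bm H}^{(1)}=\bm H^{(1)}\bm\Pi$ for a permutation $\bm\Pi$. Applying the result on the other side to $\bm R=[\bm H^{(1)}\bm D](\bm H^{(2)})^\top$, since $\bm H^{(2)}$ is also sufficiently scattered, gives $\widetilde{\bm H}^{(2)}=\bm H^{(2)}\bm\Pi'$ for another permutation $\bm\Pi'$. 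Plugging both identifications back into $\widetilde{\bm H}^{(1)}\widetilde{\bm D}(\widetilde{\bm H}^{(2)})^\top=\bm H^{(1)}\bm D(\bm H^{(2)})^\top$ and cancelling the full-column-rank factors $\bm H^{(1)}$ on the left and $(\bm H^{(2)})^\top$ on the right yields $\bm\Pi\widetilde{\bm D}\bm{\Pi'}^\top=\bm D$. Because $\bm D$ and $\widetilde{\bm D}$ are diagonal with strictly positive diagonals and $\bm\Pi,\bm\Pi'$ are permutations, this equation forces $\bm\Pi=\bm\Pi'$ and $\widetilde{\bm D}=\bm\Pi^\top\bm D\bm\Pi$. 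Reading off $K\times K$ row-blocks then gives $\widetilde{\bm A}_m=\bm A_m\bm\Pi$ for every $m\in\mathcal{P}_1\cup\mathcal{P}_2$.

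For an annotator $m\notin\mathcal{P}_1\cup\mathcal{P}_2$, assumption (iii) supplies an index $r\in\mathcal{P}_1\cup\mathcal{P}_2$ for which $\bm R_{m,r}$ is available, so the feasibility constraint requires $\widetilde{\bm A}_m\widetilde{\bm D}\widetilde{\bm A}_r^\top=\bm A_m\bm D\bm A_r^\top$. Substituting the already identified $\widetilde{\bm A}_r=\bm A_r\bm\Pi$ and $\widetilde{\bm D}=\bm\Pi^\top\bm D\bm\Pi$, and using $\bm\Pi^\top=\bm\Pi^{-1}$ together with the invertibility of $\bm A_r$ and $\bm D$ (guaranteed by the rank hypothesis), a short algebraic manipulation collapses the right-hand side to $\bm A_m\bm\Pi$, so $\widetilde{\bm A}_m=\bm A_m\bm\Pi$ with the same global permutation.

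The main obstacle I anticipate is the middle step: invoking sufficient scattering on both factors simultaneously and then reconciling the two permutations. The textbook NMF identifiability result is one-sided (sufficient scattering on one factor, full row rank on the other), so it must be applied twice and the two permutations $\bm\Pi,\bm\Pi'$ must be tied together. This is exactly where the diagonal structure of $\bm D$ (and of $\widetilde{\bm D}$) is essential: without it, one could only conclude identifiability up to an arbitrary invertible $\bm M$ with $\bm M^{-1}\bm D\bm M^{-\top}$ diagonal, and the residual degrees of freedom would not collapse to a single permutation. Formalising this collapse via the equation $\bm\Pi\widetilde{\bm D}\bm{\Pi'}^\top=\bm D$ is the crux of the argument and deserves the most care.
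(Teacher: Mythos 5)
Your overall architecture---tiling the $\bm R_{m_t,\ell_j}$'s into one block matrix $\bm R=\bm H^{(1)}\bm D(\bm H^{(2)})^\top$, invoking an NMF uniqueness result, and then propagating to $m\notin{\cal P}_1\cup{\cal P}_2$ by solving the linear system $\bm R_{m,r}=\bm A_m\bm D\bm A_r^\top$ (which is uniquely solvable since $\bm D$ and $\bm A_r$ are invertible)---is exactly the paper's strategy, and your last step is correct. The gap is in the middle step. There is no ``textbook'' one-sided identifiability theorem of the form you invoke: sufficient scattering of $\bm H^{(1)}$ together with mere full row rank of the right factor does \emph{not} force $\widetilde{\bm H}^{(1)}=\bm H^{(1)}\bm \Pi$. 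Concretely, take $\bm Q=(1-t)\bm I+\tfrac{t}{K}\bm 1\bm 1^\top$ for small $t>0$: then $\bm 1^\top\bm Q=\bm 1^\top$, $\bm H^{(1)}\bm Q\geq \bm 0$, each $K\times K$ block of $\bm H^{(1)}\bm Q$ is still column-stochastic, and $\bm Q^{-1}\bm D(\bm H^{(2)})^\top$ still has full row rank---yet $\bm Q$ is not a permutation. Sufficient scattering on one side plus a rank condition on the other is the setting of \emph{volume-minimization} identifiability, not of the plain feasibility problem \eqref{eq:coupled}; the result that applies here (Huang--Sidiropoulos--Swami \cite{huang2014non}, quoted as a lemma in the paper's appendix) requires \emph{both} nonnegative factors to be sufficiently scattered, and then yields uniqueness up to a \emph{common} permutation and a diagonal scaling.

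The fix is to apply that two-sided lemma once, to a symmetrized pair. The paper writes $\bm R=(\bm H^{(1)}\bm D^{1/2})(\bm H^{(2)}\bm D^{1/2})^\top$ and observes that column scaling does not change $\cone{(\bm H^{(i)})^\T}$, so both factors remain sufficiently scattered; the lemma then gives $\widehat{\bm P}_1=\bm P_1\bm\Pi\bm\Sigma$, $\widehat{\bm P}_2=\bm P_2\bm\Pi\bm\Sigma^{-1}$ with a single $\bm\Pi$, and the constraints $\bm 1^\top\bm A_m=\bm 1^\top$, $\bm 1^\top\bm d=1$ eliminate $\bm\Sigma$. (Equivalently, you could apply the lemma to the pair $\bm H^{(1)}$ and $\bm H^{(2)}\bm D$.) Once you use the correct two-sided lemma, your reconciliation of two separate permutations $\bm\Pi,\bm\Pi'$ via $\bm\Pi\widetilde{\bm D}\bm\Pi'^\top=\bm D$ becomes unnecessary---there is only one permutation from the start---and your diagnosis that the diagonality of $\bm D$ is ``the crux'' is misplaced: the real role of $\bm D$ is that its square root can be absorbed symmetrically into both factors without destroying sufficient scattering, while the sum-to-one constraints kill the residual scaling.
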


The proof of Theorem~\ref{thm:couple} is relegated to the supplementary results (Sec. \ref{app:thm3}).
Note that the theorem holds under the the existence of ${\cal P}_1$ and ${\cal P}_2$, but there is no need to know the sets \textit{a priori}.
Generally speaking, a `taller' matrix $\bm H^{(i)}$ would have a better chance to have its rows sufficiently spread in the nonnegative orthant under the same intuition of Theorem~\ref{thm:Lm}. Thus, having more annotators also helps to attain the sufficiently scattered condition. 
Nevertheless, formally showing the relationship between the number of annotators and $\H^{(i)}$ for $i=1,2$ being sufficiently scattered is more challenging than the case in Theorem~\ref{thm:Lm}, since the sufficiently scattered condition is a bit more abstract relative to the separability condition---the latter specifically assumes $\bm e_k$'s exist as rows of $\H^{(i)}$ while the former depends on the `shape' of the conic hull of $(\H^{(i)})^\T$, which contains an infinite number of cases.
Towards this end, let us first define the following notion:

\begin{Def} \label{def:suffepsilon}
		Assume that there exist $\widetilde{\bm H}\in\mathbb{R}^{L\times K}$ such that $\widetilde{\H}$ is sufficiently scattered. Also assume ${\cal V}$ is the row index set of $\widetilde{\H}$ such that $\widetilde{\H}({\cal V},:)$ collects the extreme rays of ${\sf cone}\{ {\widetilde{\H}}~^\T\}$.
		If there exist row indices $\ell_v \in \{1,\dots,L\}$ for all $v\in{\cal V}$, such that $\|\widetilde{\bm H}(v,:)-\H(\ell_v,:)\|_2\leq  \varepsilon$, then $\H\in\mathbb{R}^{L\times K}$ is called $\varepsilon$-sufficiently scattered.
\end{Def}
One can see that an $\varepsilon$-sufficiently scattered matrix is sufficiently scattered when $\varepsilon\rightarrow 0$. With this definition, we show the following theorem:
\begin{Theorem}\label{thm:SSM}
	Let $\rho>0, \frac{\alpha}{2} > \varepsilon > 0, $, and assume that the rows of ${\bm H}^{(1)}$ and ${\bm H}^{(2)}$  are generated from $\mathbb{R}^{K}$ uniformly at random. If the number of annotators satisfies $M \ge \mathcal{O}\left(\frac{(K-1)^2}{K\alpha^{2(K-2)}\varepsilon^{2}}{\rm log}\left(\frac{K(K-1)}{\rho}\right)\right)$, where $\alpha=1$ for $K=2$, $\alpha=2/3$ for $K=3$ and $\alpha=1/2$ for $K>3$,	
  then with probability greater than or equal to $1-\rho$, ${\bm H}^{(1)}$ and ${\bm H}^{(2)}$ are $\varepsilon$-sufficiently scattered. 
\end{Theorem}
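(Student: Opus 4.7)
The plan is to reduce the claimed $\varepsilon$-sufficient scatteredness of $\H^{(1)}$ and $\H^{(2)}$ to a coverage event over a small, explicit collection of extreme rays, and then bound the probability that this event fails by a Chernoff--Hoeffding/union bound argument in the same spirit as the proof of Theorem~\ref{thm:Lm}.

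First I would exhibit an explicit reference matrix $\widetilde{\H}\in\mathbb{R}^{K\times K}$ whose rows are the $K$ symmetric vectors $\alpha\bm e_k+\tfrac{1-\alpha}{K-1}(\mathbf{1}-\bm e_k)$, $k=1,\ldots,K$. These rows lie on the $(K-1)$-probability simplex and, for the three stated values $\alpha=1,\,2/3,\,1/2$ corresponding to $K=2,\,3,\,>3$, a direct convex-geometric verification should show $\mathcal{C}\subseteq\cone{\widetilde{\H}^{\top}}$ and $\cone{\widetilde{\H}^{\top}}^{*}\cap\bd\,\mathcal{C}^{*}=\{\lambda\bm e_k:\lambda\ge 0,\,k=1,\ldots,K\}$, so $\widetilde{\H}$ is sufficiently scattered in the sense of Definition~\ref{defsuf}. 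By the symmetry of the construction, the index set $\mathcal{V}$ of extreme rays of $\cone{\widetilde{\H}^{\top}}$ coincides with these $K$ rows, so $|\mathcal{V}|=K$. The stated values of $\alpha$ are the smallest ones for which this construction stays inside $\mathcal{C}$ as $K$ grows, which is precisely why these specific constants surface in the final bound.

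Next, for each fixed $v\in\mathcal{V}$ I would lower bound the probability $p_v$ that a single row drawn uniformly on the simplex lies within Euclidean distance $\varepsilon$ of $\widetilde{\H}(v,:)$. After using the sum-to-one constraint to eliminate one coordinate, the $K-2$ remaining off-diagonal coordinates must concentrate in an $O(\alpha)$-neighborhood of $\tfrac{1-\alpha}{K-1}$ and the heavy coordinate must be within $O(\varepsilon)$ of $\alpha$. Applying the Chernoff--Hoeffding inequality coordinatewise then yields a bound of the form
\begin{equation*}
p_v \;\gtrsim\; \frac{K\,\alpha^{2(K-2)}\,\varepsilon^{2}}{(K-1)^{2}},
\end{equation*}
where the $\alpha^{2(K-2)}$ factor encodes the $K-2$ independent tolerance requirements on the off-diagonal coordinates and the $\varepsilon^{2}$ factor is the usual Hoeffding quadratic. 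Once each $v\in\mathcal{V}$ is hit by at least one row of $\H^{(i)}$, Definition~\ref{def:suffepsilon} immediately gives $\varepsilon$-sufficient scatteredness of $\H^{(i)}$.

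Finally, using independence of the rows of $\H^{(i)}$ and a union bound over the $K$ extreme rays and over $i\in\{1,2\}$, the overall failure probability is at most $2K(1-p_v)^{M}\le 2K\,e^{-Mp_v}$. Requiring this to be $\le\rho$ and substituting the lower bound on $p_v$ reproduces exactly $M\ge\mathcal{O}\!\big((K-1)^{2}/(K\alpha^{2(K-2)}\varepsilon^{2})\cdot\log(K(K-1)/\rho)\big)$, matching the theorem. The main obstacle I foresee is the first step: writing down a concrete $\widetilde{\H}$ and verifying both clauses of Definition~\ref{defsuf} analytically is the geometric heart of the proof and is what pins down the three regimes of $\alpha$. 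Once $\widetilde{\H}$ is in hand, the probabilistic step and the union bound are essentially a re-run of the argument underlying Theorem~\ref{thm:Lm}.
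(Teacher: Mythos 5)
There is a genuine gap, and it sits exactly at the step you flag as the ``geometric heart'': your proposed reference matrix $\widetilde{\bm H}$ with rows $\bm h_k=\alpha\bm e_k+\tfrac{1-\alpha}{K-1}(\mathbf{1}-\bm e_k)$ is \emph{not} sufficiently scattered for any $\alpha<1$, so Definition~\ref{def:suffepsilon} cannot be invoked with it. To see this, note that $\mathcal{C}$ is tangent to every facet of the nonnegative orthant: the vector $\mathbf{1}-\bm e_j$ satisfies $(\mathbf{1}-\bm e_j)^\T\mathbf{1}=K-1=\sqrt{K-1}\,\|\mathbf{1}-\bm e_j\|_2$, hence lies in $\mathcal{C}$ and has a zero coordinate. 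If $\mathcal{C}\subseteq\cone{\widetilde{\bm H}^\T}$ held, we could write $\mathbf{1}-\bm e_j=\sum_k\theta_k\bm h_k$ with $\bm\theta\ge\bm 0$; but every $\bm h_k$ has $j$-th coordinate at least $\tfrac{1-\alpha}{K-1}>0$, forcing $\bm\theta=\bm 0$, a contradiction. In other words, your rows form a simplex shrunk isotropically toward the centroid, and such a configuration contains the inscribed ball $\mathcal{C}$ only in the degenerate case $\alpha=1$ (separability). Any valid reference configuration must place extreme rays \emph{on the boundary} of $\mathbb{R}_+^K$, which is why the paper instead uses Lin et al.'s condition (its Lemma on the pixel purity level $\gamma$): the reference points are the $K(K-1)$ \emph{edge} points $\alpha_{kk'}\bm e_k+(1-\alpha_{kk'})\bm e_{k'}$, which have $K-2$ zero coordinates, and the three regimes of $\alpha$ come from that lemma rather than from your symmetric construction. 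This also explains why the theorem's log factor is $\log(K(K-1)/\rho)$ --- the union bound runs over $K(K-1)$ edge events, not over $K$ vertices.

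A second, related problem is your probability estimate $p_v\gtrsim K\alpha^{2(K-2)}\varepsilon^2/(K-1)^2$. The probability that a uniform row of the simplex falls within Euclidean distance $\varepsilon$ of a \emph{fixed point} is a volume ratio of order $\varepsilon^{K-1}$ (this is precisely the scaling in Theorem~\ref{thm:Lm}), and no coordinatewise Hoeffding argument can improve a volume of a ball to $\varepsilon^2$. The improved $\varepsilon^{-2}$ dependence of $M$ in Theorem~\ref{thm:SSM} is obtained because the target sets are $\varepsilon$-neighborhoods of entire \emph{edge segments} of length proportional to $\alpha$ (the sets $\widetilde{\mathcal{Q}}_{k,k'}(\varepsilon,\alpha)$ in the paper), whose volume the paper lower-bounds by a shell difference $\mathrm{vol}(\mathcal{X}_\alpha)-\mathrm{vol}(\mathcal{X}_{\alpha-2\varepsilon})\propto\alpha^{K-2}\varepsilon$, i.e., linear in $\varepsilon$; the $\alpha^{2(K-2)}\varepsilon^{2}$ then appears only after squaring inside the Chernoff--Hoeffding exponent. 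So the overall architecture you propose (valid reference configuration $\to$ per-region hitting probability $\to$ Chernoff--Hoeffding $\to$ union bound) does mirror the paper, but both of your two quantitative inputs --- the reference configuration and the hitting probability --- need to be replaced by the edge-based versions for the argument to go through.
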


 The proof of Theorem \ref{thm:SSM} is relegated to the supplementary materials (Sec.~\ref{app:thm4}). One can see that to satisfy $\varepsilon$-sufficiently scattered condition, $M$ is smaller than that in Theorem~\ref{thm:Lm}. 
Conditions \textit{i)-iii)} in Theorem~\ref{thm:couple} and Theorem \ref{thm:SSM}  together imply that if we have enough annotators, and if many pairs co-label a certain number of data, then it is quite possible that one can identify the Dawid-Skene model via simply finding a feasible solution to \eqref{eq:coupled}. This feasibility problem is nonconvex, but can be effectively approximated; see the supplementary materials (Sec. \ref{algo}). In a nutshell, we reformulate the problem as a Kullback-Leibler (KL) divergence-based constrained fitting problem and handle it using alternating optimization.
Since nonconvex optimization relies on initialization heavily, we use \texttt{MultiSPA} to initialize the fitting stage---which we will refer to as the \texttt{MultiSPA-KL} algorithm.

\vspace{-.35cm}
\section{Experiments}
\vspace{-.25cm}

{\bf Baselines.}
The performance of the proposed approach is compared with a number of competitive baselines, namely, \texttt{Spectral-D\&S} \cite{zhang2014spectral}, \texttt{TensorADMM} \cite{traganitis2018blind}, and \texttt{KOS} \cite{karger2013efficient}, \texttt{EigRatio} \cite{dalvi2013aggregating}, \texttt{GhoshSVD} \cite{ghosh2011moderates} and  \texttt{MinmaxEntropy} \cite{zhou2014aggregated}. 
The performance of the \texttt{Majority Voting} scheme and the Majority Voting initialized Dawid-Skene (\texttt{MV-D\&S}) estimator \cite{dawid1979maximum} are also presented. 
We also use \texttt{MultiSPA} to initialize EM algorithm (named as \texttt{MultiSPA-D\&S}).
 Note that \texttt{KOS}, \texttt{EigRatio} and \texttt{MinmaxEntropy} work with more complex models relative to the Dawid-Skene model, but are considered as good baselines for the crowdsourcing/ensemble learning tasks.
After identifying the model parameters, we construct a MAP predictor following \cite{traganitis2018blind} and observe the result. The algorithms are coded in Matlab.

\color{black}

\noindent
{\bf Synthetic-data Simulations.} Due to page limitations, synthetic data experiments demonstrating model identifiability of the proposed algorithms are presented in the supplementary materials (Sec.~\ref{app:exp}).

\noindent
{\bf Integrating Machine Classifiers.}
We employ different UCI datasets (\url{https://archive.ics.uci.edu/ml/datasets.html}; details in Sec.~\ref{real_dataset}).
For each of the datasets under test, we use a collection of different classification algorithms to annotate the data samples. 
Different classification algorithms from the MATLAB machine learning toolbox (\url{https://www.mathworks.com/products/statistics.html}) such as various $k$-nearest neighbour classifiers, support vector machine classifiers, and decision tree classifiers are employed to serve as our machine annotators. In order to train the annotators, we use $20\%$ of the samples to act as training data.
After the data samples are trained, we use the annotators to label the unseen data samples.
In practice, not all samples are labeled by an annotator due to several factors such as annotator capacity, difficulty of the task, economical issues and so on.
To simulate such a scenario, each of the trained algorithms is allowed to label a data sample with probability $p \in (0,1]$. We test the performance of all the algorithms under different $p$'s---and a smaller $p$ means a more challenging scenario. All the results are averaged from 10 random trials. 

Table~\ref{tab:real} shows the classification error of the algorithms under test. Since \texttt{GhoshSVD} and \texttt{EigenRatio} works only on binary tasks, they are not evaluated for the Nursery dataset where $K=4$. The `single best' and `single worst' rows correspond to the results of using the classifiers individually when $p=1$, as references. The best and second-best performing algorithms are highlighted in the table. One can see that the proposed methods are quite promising for this experiment.
Both algorithms largely outperform the tensor based methods \texttt{TensorADMM} and \texttt{Spectral-D\&S} in this case, perhaps because the limited number of available samples makes the third-order statistics hard to estimate. It is also observed that the proposed algorithms enjoy favorable runtime;s ee supplementary materials (cf. Table \ref{tab:realtime} in Sec. \ref{real_dataset}). Using the \texttt{MultiSPA} to initialize EM (i.e. \texttt{MultiSPA-D\&S}) also works well, which offers another viable option that strikes a good balance between runtime and accuracy.

\begin{table}[t!]
  \centering
  \caption{Classification Error ($\%$) on UCI Datasets; see runtime tabulated in Sec. \ref{real_dataset}.}
  \resizebox{\linewidth}{!}{\large
    \begin{tabular}{|l|l|l|l|l|l|l|l|l|l|}
    \toprule
          & \multicolumn{3}{c|}{\textbf{Nursery}} & \multicolumn{3}{c|}{\textbf{Mushroom}} & \multicolumn{3}{c|}{\textbf{Adult}} \\
    \midrule
    \textbf{Algorithms} & \multicolumn{1}{l|}{\textbf{$p=1$}} & \multicolumn{1}{l|}{\textbf{$p=0.5$}} & \multicolumn{1}{l|}{\textbf{$p=0.2$}} & \multicolumn{1}{l|}{\textbf{$p=1$}} & \multicolumn{1}{l|}{\textbf{$p=0.5$}} & \multicolumn{1}{l|}{\textbf{$p=0.2$}} & \multicolumn{1}{l|}{\textbf{$p=1$}} & \multicolumn{1}{l|}{\textbf{$p=0.5$}} & \multicolumn{1}{l|}{\textbf{$p=0.2$}} \\
    \midrule
    \texttt{MultiSPA} & 2.83  & 4.54  & 17.96 & 0.02 & 0.293 & 6.35  & \textbf{15.71} & \textbf{16.05} & 17.66 \\
    \midrule
    \texttt{MultiSPA-KL} & \textbf{2.72} & \textbf{4.26} & \textbf{13.06} & \textbf{0.00} & \textbf{0.152} & \textbf{5.89} & \textbf{15.66} & \textbf{15.98} & \textbf{17.63} \\
    \midrule
    \texttt{MultiSPA-D\&S} & \textbf{2.82} & \textbf{4.44} & 13.39 & \textbf{0.00} & 0.194 & 6.17  & 15.74 & 16.29 & 23.88 \\
    \midrule
    \texttt{Spectral-D\&S} & 3.14  & 37.2  & 44.29 & \textbf{0.00} & 0.198 & 6.17  & 15.72 & 16.31 & 23.97 \\
    \midrule
    \texttt{TensorADMM} & 17.97 & 7.26  & 19.78 & 0.06 & 0.237 & 6.18  & 15.72 & \textbf{16.05} & 25.08 \\
    \midrule
    \texttt{MV-D\&S} & 2.92  & 66.48 & 66.61 & \textbf{0.00} & 47.99 & 48.63 & 15.76 & 75.21 & 75.13 \\
    \midrule
    \texttt{Minmax-entropy}    & 3.63  & 26.31 & \textbf{11.09} & \textbf{0.00} & \textbf{0.163} & 8.14  & 16.11 & 16.92 & \textbf{15.64} \\
    \midrule
    \texttt{EigenRatio} &  N/A    &  N/A     &    N/A   & 0.06 & 0.329 & \textbf{5.97} & 15.84 & 16.28 & 17.69 \\
    \midrule
    \texttt{KOS}   & 4.21  & 6.07  & 13.48 & 0.06 & 0.576 & 6.42  & 17.19 & 24.97 & 38.29 \\
    \midrule
    \texttt{Ghosh-SVD} &   N/A    &   N/A     &  N/A    & 0.06 & 0.329 & \textbf{5.97} & 15.84 & 16.28 & 17.71 \\
    \midrule
    \texttt{Majority Voting} & 2.94  & 4.83  & 19.75 & 0.14 & 0.566 & 6.57  & 15.75 & 16.21 & 20.57 \\
    \midrule
    Single Best & 3.94  &    N/A   &    N/A   & 0.00     &   N/A     &   N/A    & 16.23 &  N/A      &  N/A \\
    \midrule
    Single Worst & 15.65 &    N/A   &   N/A    & 7.22  &    N/A    &   N/A   & 19.27 & N/A       &  N/A \\
    \bottomrule
    \end{tabular}%
    }
  \label{tab:real}%
    \vspace{-.35cm}
\end{table}%

\noindent
{\bf Amazon Mechanical Turk Crowdsourcing Data.} 
In this section, the performance of the proposed algorithms are evaluated using the Amazon Mechanical Turk (AMT) data (\url{https://www.mturk.com}) in which human annotators label various classification tasks. Data description is given in the supplementary materials Sec. \ref{real_dataset}. 
Table \ref{tab:amazon} shows the classification error and the runtime performance of the algorithms under test. One can see that \texttt{MultiSPA} has a very favorable execution time, because it is a Gram-Schmitt-like algorithm. \texttt{MultiSPA-KL} uses more time, because it is an iterative optimization method---with better accuracy paid off. Since $\texttt{TensorADMM}$ algorithm does not scale well, the results are not reported for very large datasets (i.e., TREC and RTE). 
Similar as before, since Web and Dog are multi-class datasets, \texttt{EigenRatio} and $\texttt{GhoshSVD}$ are not applicable.
From the results, it can be seen that the proposed algorithms outperform many existing crowdsourcing algorithms in both classification accuracy and runtime. In particular, one can see that the algebraic algorithm $\texttt{MultiSPA}$ gives very similar results compared to the computationally much more involved algorithms. This shows the potential for its application in big data crowdsourcing.

\begin{table}[t!]
  \centering
   \caption{Classification Error ($\%$) and Run-time (sec) : AMT Datasets}
   \resizebox{\linewidth}{!}{\Huge
    \begin{tabular}{|l|l|l|l|l|l|l|l|l|l|l|}
    \toprule
    \textbf{Algorithms       } & \multicolumn{2}{c|}{\textbf{TREC          }} & \multicolumn{2}{c|}{\textbf{Bluebird       }} & \multicolumn{2}{c|}{\textbf{RTE             }} & \multicolumn{2}{c|}{\textbf{Web            }} & \multicolumn{2}{c|}{\textbf{Dog             }} \\
    \midrule
          & \textbf{(\%) Error} & \textbf{(sec) Time} & \textbf{(\%) Error} & \textbf{(sec) Time} & \textbf{(\%) Error} & \textbf{(sec) Time} & \textbf{(\%) Error} & \textbf{(sec) Time} & \textbf{(\%) Error} & \multicolumn{1}{l|}{\textbf{(sec) Time}} \\
    \midrule
    \texttt{MultiSPA}          & 31.47 & 50.68 & 13.88 & 0.07  & 8.75  & 0.28  & 15.22 & 0.54  & 17.09 & 0.07 \\
    \midrule
    \texttt{MultiSPA-KL} & \textbf{29.23} & 536.89 & \textbf{11.11} & 1.94  & \textbf{7.12} & 17.06 & \textbf{14.58} & 12.34 & \textbf{15.48} & 15.88 \\
    \midrule
    \texttt{MultiSPA-D\&S}      & 29.84 & 53.14 & 12.03 & 0.09  & \textbf{7.12} & 0.32  & 15.11 & 0.84  & \textbf{16.11} & 0.12 \\
    \midrule
    \texttt{Spectral-D\&S}      & \textbf{29.58} & 919.98 & 12.03 & 1.97  & \textbf{7.12} & 6.40  & 16.88 & 179.92 & 17.84 & 51.16 \\
    \midrule
    \texttt{TensorADMM} & N/A      &   N/A    & 12.03 & 2.74  & N/A     &   N/A   &  N/A    &  N/A    & 17.96 & 603.93 \\
    \midrule
    \texttt{MV-D\&S} & 30.02 & 3.20  & 12.03 & 0.02  & 7.25  & 0.07  & 16.02 & 0.28  & 15.86 & 0.04 \\
    \midrule
    \texttt{Minmax-entropy}    & 91.61 & 352.36 & \textbf{8.33} & 3.43  & 7.50  & 9.10  & \textbf{11.51} & 26.61 & 16.23 & 7.22 \\
    \midrule
    \texttt{EigenRatio} & 43.95 & 1.48  & 27.77 & 0.02  & 9.01  & 0.03  &  N/A     &  N/A     & N/A      & N/A \\
    \midrule
    \texttt{KOS}   & 51.95 & 9.98  & \textbf{11.11} & 0.01  & 39.75 & 0.03  & 42.93 & 0.31  & 31.84 & 0.13 \\
    \midrule
    \texttt{GhoshSVD} & 43.03 & 11.62 & 27.77 & 0.01  & 49.12 & 0.03  &  N/A    &  N/A    &  N/A     & N/A \\
    \midrule
    \texttt{Majority Voting}   & 34.85 &   N/A   & 21.29 &   N/A   & 10.31 &  N/A     & 26.93 &  N/A    & 17.91 & N/A \\
    \bottomrule
    \end{tabular}%
    }
  \label{tab:amazon}%
  \vspace{-.5cm}
\end{table}%
  \vspace{-.35cm}
\section{Conclusion}
  \vspace{-.35cm}
In this work, we have revisited the classic Dawid-Skene model for multi-class crowdsourcing.
We have proposed a second-order statistics-based approach that guarantees identifiability of the model parameters, i.e., the confusion matrices of the annotators and the label prior.
The proposed method naturally admits lower sample complexity relative to existing methods that utilize tensor algebra to ensure model identifiability.
The proposed approach also has an array of favorable features.
In particular, our framework enables a lightweight algebraic algorithm, which is reminiscent of the Gram-Schmitt-like SPA algorithm for nonnegative matrix factorization.
We have also proposed a coupled and constrained matrix factorization criterion that enjoys enhanced-identifiability, as well as an alternating optimization algorithm for handling the identification problem.
Real-data experiments show that our proposed algorithms are quite promising for integrating crowdsourced labeling.


\bibliographystyle{icml2019}

\clearpage
\appendix

{\bf Supplementary Materials for ``Crowdsourcing via Pairwise Co-occurrences: Identifiability and Algorithms''}

\section{Synthetic Data Experiments} \label{app:exp}
In the first experiment, we consider that $M=25$ annotators are available to annotate $N=10,000$ items, each belonging to one of $K=3$ classes. The true label for each item is sampled uniformly from $\{1,\dots,K\}$, i.e, the prior probability vector $\bm d$ is fixed to be $\bm d =[\nicefrac{1}{3},\nicefrac{1}{3},\nicefrac{1}{3}]^{\top}$. For generating the confusion matrices, two different cases are considered
\begin{itemize}
    \item \textbf{Case 1}: an annotator is chosen uniformly at random and is assigned an ideal confusion matrix, ie., an identity matrix ${\bm I}_3$. This ensures the assumption as given by Eq.\eqref{eq:sep} (or Eq.~\eqref{eq:anchor}).
    \item  \textbf{Case 2}: an annotator $m$ is chosen uniformly at random and its confusion matrix is made diagonally dominant such that ${\bm A}_m(k,k) > {\bm A}_m(k',k), \text{for}~ k',k \in \{1,\dots,K\},k\neq k'$. To achieve this, the elements of each column of ${\bm A}_m$ is drawn from a uniform distribution between 0 and 1. The columns are then normalized using their respective $\ell_1$-norms. After that, for each column, the elements are re-organized such that the corresponding diagonal entry is dominant in that column and then normalized with respect to $\ell_1$-norm. In this way, Eq. \eqref{eq:thm1} in Theorem~\ref{thm:spa} may be (approximately) satisfied.
\end{itemize}
 In both the cases, for the remaining annotators, the confusion matrices ${\bm A}_m$ are randomly generated; the elements are first drawn following the uniform distribution between 0 and 1, and then the columns are normalized with respect to the $\ell_1$-norm. Once ${\bm A}_m$'s are generated, the responses from each annotator $m$ for the items with true labels $g \in \{1,\dots,K\}$ are randomly chosen from $\{1,\dots,K\}$ using the probability distribution ${\bm A}_m(:,g)$. An annotator response for each item is retained for the estimation of ${\bm A}_m$  with probability $p \in (0,1]$. In other words, with probability $1-p$, each response is made 0. In this way, our simulated scenario is expected to mimic realistic situations where we have a combination of reliable and unreliable annotators, each labeling parts of the items. Using the generated responses, we construct $\widehat{ \bm R}_{m,\ell}$'s and then follow the proposed approach to identify the confusion matrices and the prior $\bm d$.

The accuracy of the estimation is measured using \textit{mean squared error} (MSE) defined as
\begin{align}
    \text{MSE} = \underset{\pi(k) \in \{1,\dots,K\}}{\text{min}} \frac{1}{K} \sum_{k=1}^K  \lVert {\bm A}_m(:,\pi(k))-\widehat{{\bm A}}_m(:,k)\rVert_2^2
\end{align}
where $\widehat{{\bm A}}_m$ is the estimate of ${\bm A}_m$ and $\pi(k)$'s are used to fix the column permutation.

The average (MSE) of the confusion matrices for various values of $p$ under the above mentioned cases are shown in Table \ref{tab:mse1} and Table \ref{tab:mse2} where the proposed methods, \texttt{MultiSPA} and \texttt{MultiSPA-KL} are compared with the baselines \texttt{Spectral-E\&M}, \texttt{TensorADMM} and \texttt{MV-D\&S} since these methods are also Dawid-Skene model identification approaches. As \texttt{MV-D\&S} becomes numerically unstable for smaller values of $p$, those results are not reported in the table. All the results are averaged from 10 trials. 

From the two tables, one can see that \texttt{MultiSPA} works reasonably well for both cases. As expected, it exhibits lower MSEs for case 1, since the condition in \eqref{eq:anchor} is perfectly enforced. Nevertheless, in both cases, using \texttt{MultiSPA} to initialize the \texttt{KL} algorithm identifies the confusion matrices to a very high accuracy.
It is observed that \texttt{MultiSPA-KL} outperforms the baselines in terms of the estimation accuracy ---which may be a result of using second order statistics.


\begin{table}[htbp]
  \centering
  \caption{ Average MSE of the confusion matrices ${\bm A}_m$ for case 1.}
    \begin{tabular}{|l|r|r|r|r|}
    \toprule
    \textbf{Algorithms} & \multicolumn{1}{l|}{$p=0.2$} & \multicolumn{1}{l|}{$p=0.3$} & \multicolumn{1}{l|}{$p=0.5$} & \multicolumn{1}{l|}{$p=1$} \\
    \midrule
    \texttt{MutliSPA} & 0.0184 & 0.0083 & 0.0063 & 0.0034 \\
    \midrule
    \texttt{MultiSPA-KL} & \textbf{0.0019} & \textbf{0.0009} & \textbf{0.0004} & \textbf{1.73E-04} \\
    \midrule
    \texttt{Spectral D\&S} & 0.0320 & 0.0112 & 0.0448 & 1.74E-04 \\
    \midrule
    \texttt{TensorADMM} & 0.0026 & 0.0011 & 0.0005 & 1.88E-04 \\
    \midrule
    \texttt{MV-D\&S} &    --   &    --   & 0.0173 & 1.84E-04 \\
    \bottomrule
    \end{tabular}%
   \label{tab:mse1}%
\end{table}%

\begin{table}[htbp]
  \centering
  \caption{Average MSE of the confusion matrices ${\bm A}_m$ for case 2.}
    \begin{tabular}{|l|r|r|r|r|}
    \toprule
    \textbf{Algorithms} & \multicolumn{1}{l|}{$p=0.2$} & \multicolumn{1}{l|}{$p=0.3$} & \multicolumn{1}{l|}{$p=0.5$} & \multicolumn{1}{l|}{$p=1$} \\
    \midrule
    \texttt{MutliSPA} & 0.0229 & 0.0188 & 0.0115 & 0.0102 \\
    \midrule
    \texttt{MultiSPA-KL} & \textbf{0.0029} & \textbf{0.0014} & \textbf{0.0005} & \textbf{1.67E-04} \\
    \midrule
    \texttt{Spectral D\&S} & 0.0348 & 0.0265 & 0.0391 & \textbf{1.67E-04} \\
    \midrule
    \texttt{TensorADMM} & 0.0031 & 0.0016 & 0.0006 & 1.93E-04 \\
    \midrule
    \texttt{MV-D\&S} &   --    &   --    & 0.0028 & 5.88E-04 \\
    \bottomrule
    \end{tabular}%
   \label{tab:mse2}%
\end{table}%

Under the same settings as in case 2, the true labels are estimated using the MAP/ML predictor as in \cite{traganitis2018blind} (in this case, ML and MAP are the same since the prior PMF is a uniform distribution). The classification error and the runtime of the crowdsourcing algorithms are computed and shown in Table \ref{tab:acc}.

\begin{table}[t]
  \centering
  \caption{Classification Error(\%) \& Averge run-time when ${\bm d} =[\frac{1}{3},\frac{1}{3},\frac{1}{3}]^{\top} $}
    \begin{tabular}{|l|r|r|r|r|}
    \toprule
    \textbf{Algorithms} & \multicolumn{1}{l|}{\textbf{$p=0.2$}} & \multicolumn{1}{l|}{\textbf{$p=0.3$}} & \multicolumn{1}{l|}{\textbf{$p=0.5$}} & \multicolumn{1}{l|}{\textbf{Run-time(sec)}} \\
    \midrule
    \texttt{MultiSPA}          & 37.24 & 26.39 & 19.21 & 0.049 \\
    \midrule
    \texttt{MultiSPA-KL} & \textbf{31.71} & \textbf{21.10} & \textbf{12.79} & 18.07 \\
    \midrule
    \texttt{MultiSPA-D\&S}      & \textbf{31.95} & \textbf{21.11} & \textbf{12.80} & 0.069 \\
    \midrule
    \texttt{Spectral-D\&S}      & 46.37 & 23.92 & 12.89 & 27.17 \\
    \midrule
    \texttt{TensorADMM} & 32.16 & 21.34 & 12.91 & 56.09 \\
    \midrule
    \texttt{MV-D\&S} & 66.91 & 57.92 & 13.09 & 0.096 \\
    \midrule
    \texttt{Minmax-entropy}    & 62.83 & 65.50  & 67.31  & 200.91 \\
    \midrule
    \texttt{KOS}   & 71.47 & 61.05 & 13.12 & 5.653 \\
    \midrule
    \texttt{Majority Voting}   & 67.57 & 68.37 & 71.39 & -- \\
    \bottomrule
    \end{tabular}%
   \label{tab:acc}%
\end{table}%

In the next experiment with case 2, the true labels are sampled with unequal probability. Specifically, ${\bm d}$ is set to be $[\frac{1}{6},\frac{2}{3},\frac{1}{6}]^{\top}$ with all other parameters and conditions same as in the first experiment. Using the MAP predictor, the true labels are estimated for the proposed algorithms for various values of $p$ and the results are shown in Table \ref{tab:acc1}. 
It can be inferred from the results that both the proposed algorithms \texttt{MultiSPA} and \texttt{MultiSPA-KL} grantee better classification accuracy when the true label distribution of the items is not balanced.

\begin{table}[t]
  \centering
  \caption{Classification Error(\%) \& Averge run-time when ${\bm d} =[\frac{1}{6},\frac{2}{3},\frac{1}{6}]^{\top} $}
    \begin{tabular}{|l|r|r|r|r|}
    \toprule
    \textbf{Algorithms} & \multicolumn{1}{l|}{\textbf{$p=0.2$}} & \multicolumn{1}{l|}{\textbf{$p=0.3$}} & \multicolumn{1}{l|}{\textbf{$p=0.5$}} & \multicolumn{1}{l|}{\textbf{Run-time(sec)}} \\
    \midrule
    \texttt{MultiSPA}          & \textbf{30.75} & \textbf{21.29} & \textbf{13.67} & 0.105 \\
    \midrule
    \texttt{MultiSPA-KL} & \textbf{23.19} & \textbf{16.62} & \textbf{10.13} & 18.93 \\
    \midrule
    \texttt{MultiSPA-D\&S}      & 40.12 & 32.1  & 21.46 & 0.122 \\
    \midrule
    \texttt{Spectral-D\&S}      & 56.17 & 49.41 & 39.17 & 28.01 \\
    \midrule
    \texttt{TensorADMM} & 34.17 & 25.53 & 11.97 & 152.76 \\
    \midrule
    \texttt{MV-D\&S} & 83.14 & 83.15 & 32.98 & 0.090 \\
    \midrule
    \texttt{Minmax-entropy}    & 83.04 & 63.08 & 74.29 & 232.82 \\
    \midrule
    \texttt{KOS}   & 70.79 & 67.55 & 78.00    & 6.19 \\
    \midrule
    \texttt{Majority Voting}   & 65.37 & 65.57 & 66.06 &--  \\
    \bottomrule
    \end{tabular}%
   \label{tab:acc1}%
\end{table}%

 In the next experiment, the effect of the number of annotators ($M$)  in the estimation accuracy of the confusion matrices is investigated. According to Theorem \ref{thm:Lm} and \ref{thm:SSM}, the proposed methods will benefit from the availability of more annotators (i.e., a larger $M$).  For $N=10,000$, $K=3$, ${\bm d} = [\frac{1}{6},\frac{2}{3},\frac{1}{6}]^{\top}$, $p=0.5$ and the true confusion matrices ${\bm A}_m$ being generated as in case 2, the MSEs under various values of $M$ are plotted in Figure \ref{mvar}. One can see that \texttt{MultiSPA-KL} achieves better accuracy relative to \texttt{MultiSPA} under the same $M$'s, which corroborates our results in Theorem \ref{thm:SSM}. 
  
 \begin{figure}	
\centering
\includegraphics[scale=0.5]{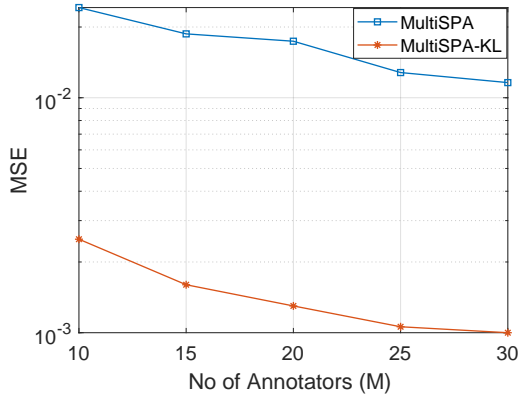}
\caption{MSE of the confusion matrices for various values of $M$}
\label{mvar}
\end{figure}

\section{More Details on UCI and AMT Dataset Experiments} \label {real_dataset}

\noindent
{\bf UCI data.} The details of the UCI datasets employed in the real data experimemts is given in Table \ref{tab:real-data}.
To be more specific, the Adult dataset predicts the income of a person into $K=2$ classes based on 14 attributes.  
The Mushroom dataset has 22 attributes of certain variations of mushrooms and the task there predicts either `edible' or `poisonous'. The Nursery dataset predicts applications to one of the 4 categories based on 8 attributes of the financial and social status of the parents. 

\begin{table}[t]
  \centering
   \caption{Details of UCI Datasets.}
    \begin{tabular}{|l|r|r|r|r|}
    \toprule
    UCI dataset name  & \multicolumn{1}{l|}{\# classes} & \multicolumn{1}{l|}{\# items} & \multicolumn{1}{l|}{\# annotators}  \\
    \midrule
    Adult & 2     & 7017  & 10    \\
    \midrule
    Mushroom & 2     & 6358  & 10   \\
    \midrule
    Nursery & 4     & 3575  & 10    \\
    \bottomrule
    \end{tabular}%
  \label{tab:real-data}%
\end{table}%

The proposed methods and the baselines are compared in terms of runtime for various datasets and the results are reported in Table \ref{tab:realtime}. All the results are averaged from 10 different trials.

\begin{table}[t]
  \centering
  \caption{Average runtime (sec) for UCI datset experiments.}
    \begin{tabular}{|l|r|r|r|}
    \toprule
    \textbf{Algorithms} & \multicolumn{1}{l|}{\textbf{Nursery}} & \multicolumn{1}{l|}{\textbf{Mushroom}} & \multicolumn{1}{l|}{\textbf{Adult}} \\
    \midrule
    \texttt{MultiSPA} & 0.021 & 0.012 & 0.018 \\
    \midrule
    \texttt{MultiSPA-KL} & 1.112 & 0.663 & 0.948 \\
    \midrule
    \texttt{MultiSPA-D\&S} & 0.035 & 0.027 & 0.027 \\
    \midrule
    \texttt{Spectral-D\&S} & 10.09 & 0.496 & 0.512 \\
    \midrule
    \texttt{TensorADMM} & 5.811 & 0.743 & 4.234 \\
    \midrule
    \texttt{MV-D\&S} & 0.009 & 0.007 & 0.008 \\
    \midrule
    \texttt{Minmax-entropy}    & 19.94 & 2.304 & 6.959 \\
    \midrule
    \texttt{EigenRatio} &   --    & 0.005 & 0.007 \\
    \midrule
    \texttt{KOS}   & 0.768 & 0.085 & 0.118 \\
    \midrule
    \texttt{Ghosh-SVD} &  --     & 0.081 & 0.115 \\
    \bottomrule
    \end{tabular}%
  \label{tab:realtime}%
\end{table}%

\begin{table}[h!]
	\centering
	\caption{AMT Dataset description.}
	\begin{tabular}{|l|r|r|r|r|}
		\toprule
		\textbf{Dataset} & \multicolumn{1}{l|}{\textbf{\# classes}} & \multicolumn{1}{l|}{\textbf{\# items}} & \multicolumn{1}{l|}{\textbf{\# annotators}} & \multicolumn{1}{l|}{\textbf{\# annotator labels}} \\
		\midrule
		Bird  & 2     & 108   & 30    & 3240 \\
		\midrule
		RTE   & 2     & 800   & 164   & 8,000 \\
		\midrule
		TREC  & 2     & 19,033 & 762   & 88,385 \\
		\midrule
		Dog   & 4     & 807   & 52    & 7,354 \\
		\midrule
		Web   & 5     & 2,665 & 177   & 15,567 \\
		\bottomrule
	\end{tabular}%
	\label{tab:amt}%
\end{table}%

\noindent
{\bf AMT data.} The Amazon Mechanical Turk (AMT) datasets used in our crowdsourcing data experiments is given in Table \ref{tab:amt}.
Specifically, the tasks involving the Bird dataset \cite{welinder2010multidimensional}, the RTE dataset  \cite{snow2008cheap}, and the TREC dataset \cite{lease2011Overview}, are binary classification tasks. The tasks associated with the Dog dataset \cite{deng2009imagenet} and the web dataset \cite{zhou2014aggregated} are multi-class tasks (i.e., 4 and 5 classes, respectively). 

We would like to add one remark regarding the two-stage approaches that involving an initial stage and a refinement stage (e.g., \texttt{Spectral-D\&S}, \texttt{MV-D\&S}, and \texttt{MultiSPA-KL}).
Due to very high sparsity of the annotator responses in most of the AMT data, the estimated confusion matrices from the first stage may contain many zero entries, which may sometimes lead to numerical issues in the second stage, as observed in \cite{zhang2014spectral}.
In our experiments, we follow an empirical thresholding strategy proposed in \cite{zhang2014spectral}. Specifically, the confusion matrix entries that are smaller than a threshold $\Delta$ are reset to $\Delta$ and the columns are normalized before initialization. In our experiments, we use $\Delta = 10^{-6}$ for most of the cases except the extremely large dataset TREC, which enjoys better performance of all methods using $\Delta=10^{-5}$.

\section{Algorithm for Criterion~\eqref{eq:coupled}} \label{algo}

In this section, the \texttt{MultiSPA-KL} algorithm is discussed in detail. To implement the identification criterion in \eqref{eq:coupled}, we lift the constraint \eqref{eq:R} and employ the following coupled matrix factorization cirterion:
\begin{subequations}\label{eq:KL}
	\begin{align}
\minimize_{ \{\bm A_m\}_{m=1}^M,~{\bm D}}&~\sum_{m,\ell} {\sf KL}\left( \widehat{ \bm R}_{m,\ell}||\bm A_m\bm D\bm A_\ell^\T \right),\\
{\rm subject~to:}&~ {\bm 1}^\top\bm A_m =\bm 1^\top,~\bm A_m\geq \bm 0,~\bm 1^\top\bm d=1,~\bm d\geq \bm 0,
	\end{align}
\end{subequations}
where $\bm D={\rm Diag}(\bm d)$ and the Kullback-Leibler (KL) divergence is employed as the distance measure. The reason is that ${\bm R}_{m,\ell}$ is a joint PMF of two random variables, and the KL-divergence is the most natural distance measure under such circumstances.
Problem~\eqref{eq:KL} is a nonconvex optimization problem, but can be handled by a simple alternating optimization procedure. 

Specifically, we propose to solve the following subproblems cyclically:
\begin{subequations}
	\begin{align}
	\bm A_m &\leftarrow \arg\min_{{\bm 1}^\top\bm A_m =\bm 1^\top,~\bm A_m\geq \bm 0}~\sum_{\ell\in{\cal S}_m} {\sf KL}\left(  \widehat{\bm R}_{m,\ell}||\bm A_m\bm D\bm A_\ell^\T \right)\label{eq:Aupdate}\\
	\bm d &\leftarrow \arg\min_{{\bm 1}^\top\bm d=1,~\bm d\geq \bm 0}~\sum_{\ell\in{\cal S}_m} {\sf KL}\left(  \widehat{\bm R}_{m,\ell}||\bm A_m\bm D\bm A_\ell^\T \right)\label{eq:dupdate}
	\end{align}
\end{subequations}
where ${\cal S}_m$ denotes the index set of $\ell$'s such that $\bm R_{m,\ell}$ is available. 
Both of the above problems are convex optimization problems, and thus can be effectively solved via a number of off-the-shelf optimization algorithms, e.g., ADMM \cite{huang2016flexible} and mirror descent \cite{arora2012practical}. 
The detailed summarized algorithm is in Algorithm~\ref{algo:MultiSPA-KL}.
The alternating optimization algorithm is also guaranteed to converge to a stationary point under mild conditions \cite{bertsekas1999nonlinear,razaviyayn2013unified}.

\begin{algorithm}[h]
	\caption{\texttt{MultiSPA-KL}}\label{algo:MultiSPA-KL}
	\begin{algorithmic}
		\STATE {\bfseries Input:} Annotator Responses $\{X_m(\bm f_n)\}$.
		\STATE {\bfseries Output:} $\widehat{\bm A}_m$ for $m=1,\ldots,M$, $\widehat{\bm d}$.
		\STATE Estimate second order statistics $\widehat{\bm R}_{m,\ell }$;
		\STATE get initial estimates of $\{\widehat{\bm A}_m\}$ using \texttt{MultiSPA}
		\FOR{$t=1$ {\bfseries to} $\texttt{MaxIter}$}
		\FOR{$m=1$ {\bfseries to} $M$}
		\STATE  update $\bm A_m \leftarrow \eqref{eq:Aupdate}$;
			\ENDFOR
		\STATE update $\bm d\leftarrow\eqref{eq:dupdate}$;
		\ENDFOR
	\end{algorithmic}
\end{algorithm}

Note that this coupled factorization formulation bears some resemblance to the coupled tensor factorization formulation in \cite{traganitis2018blind}.
However, the two are very different in essence. The formulation in \cite{traganitis2018blind} relies on the third-order statistics to establish identifiability, while the formulation in \eqref{eq:KL} establishes identifiability using nonnegativity of the confusion matrices and the prior.
The KL-divergence based fitting criterion also fits the statistical learning problem better than the least squares based criterion in \cite{traganitis2018blind}.

\section{Estimation of Prior Probability Vector} \label{prior}
In this section, we discuss different methods to estimate the prior probability vector ${\bm d}$ once the confusion matrices are estimated via \texttt{MultiSPA} algorithm.

It is to be noted that the SPA-estimated $\widehat{\bm A}_m$ is up to column permutation, even if there is no noise, i.e.,
$\widehat{\bm A}_m = \bm A_m \bm \Pi_m $
in the best case.
Since our algorithm runs SPA separately for different $\overline{\bm Z}_m$'s, the permutation matrices resulted by each run of SPA need not to be identical; i.e., it is highly likely that
$\bm \Pi_\ell \neq \bm \Pi_m$ for $m\neq \ell.$
To estimate the prior PMF $\bm d$, one will need to use estimators such as $$\widehat{\bm D}=\widehat{\bm A}_m^{-1}\bm R_{m,\ell}(\widehat{\bm A}_\ell^\top)^{-1},$$
which cannot be applied before the permutation mismatch is fixed. In practice, the mismatch can be removed by a number of simple methods. For example, if annotator $\ell$ has co-labeled data with annotator $m$, then $\tilde{\bm A_\ell}=\bm A_\ell\bm \Pi_m$ can be estimated from $\bm R_{m,\ell}$ via
$  \tilde{\bm A_\ell} = \widehat{\bm A}_m^{-1}\bm R_{m,\ell}. $
We also have $\widehat{\bm A}_\ell=\bm A_\ell\bm \Pi_\ell$ estimated from $\overline{\bm Z}_\ell$.
Using a permutation matching algorithm, e.g., the Hungarian algorithm \cite{jonker1986improving}, one can easily remove the permutation mismatch between $\widehat{\bm A}_\ell$ and $\tilde{\bm A}_\ell$.
Another more heuristic yet more efficient way is to rearrange the columns of $\widehat{\bm A}_m$ so that it is diagonally dominant---this makes a lot of sense if one believes that all the annotators are reasonably trained.

\section{Geometry of The Sufficiently Scattered Condition}\label{app:convex_geometry}
In this section, we present more discussion on the sufficiently scattered condition that is used in Theorem~\ref{thm:couple}.
To simplify the notation, we omit the superscript of $\bm H^{(i)}$ for $i=1,2$ and use $\bm H$ to denote these two matrices.
The sufficiently scattered condition is geometrically intuitive. 
The key to understand this condition is the second-order cone ${\cal C}$, which is shown in Fig.~\ref{fig:coneplot}.
This cone is very special since it is tangent to all the facets of the nonnegative orthant.

A case where $\bm H\in\mathbb{R}^{N\times K}$ satisfies the sufficiently scattered condition is plotted in Fig.~\ref{fig:scattered}.
One can see that the sufficiently scattered condition is much more relaxed compared to the condition that enables SPA (cf. Fig.~\ref{fig:separable}).
In order to apply SPA to $\overline{\bm Z}_m$, one needs that there are rows in $\overline{\bm H}_m$ that attain the extreme rays of the nonnegative orthant.

\begin{figure}[h]
	\centering
	\includegraphics[width=0.6\linewidth]{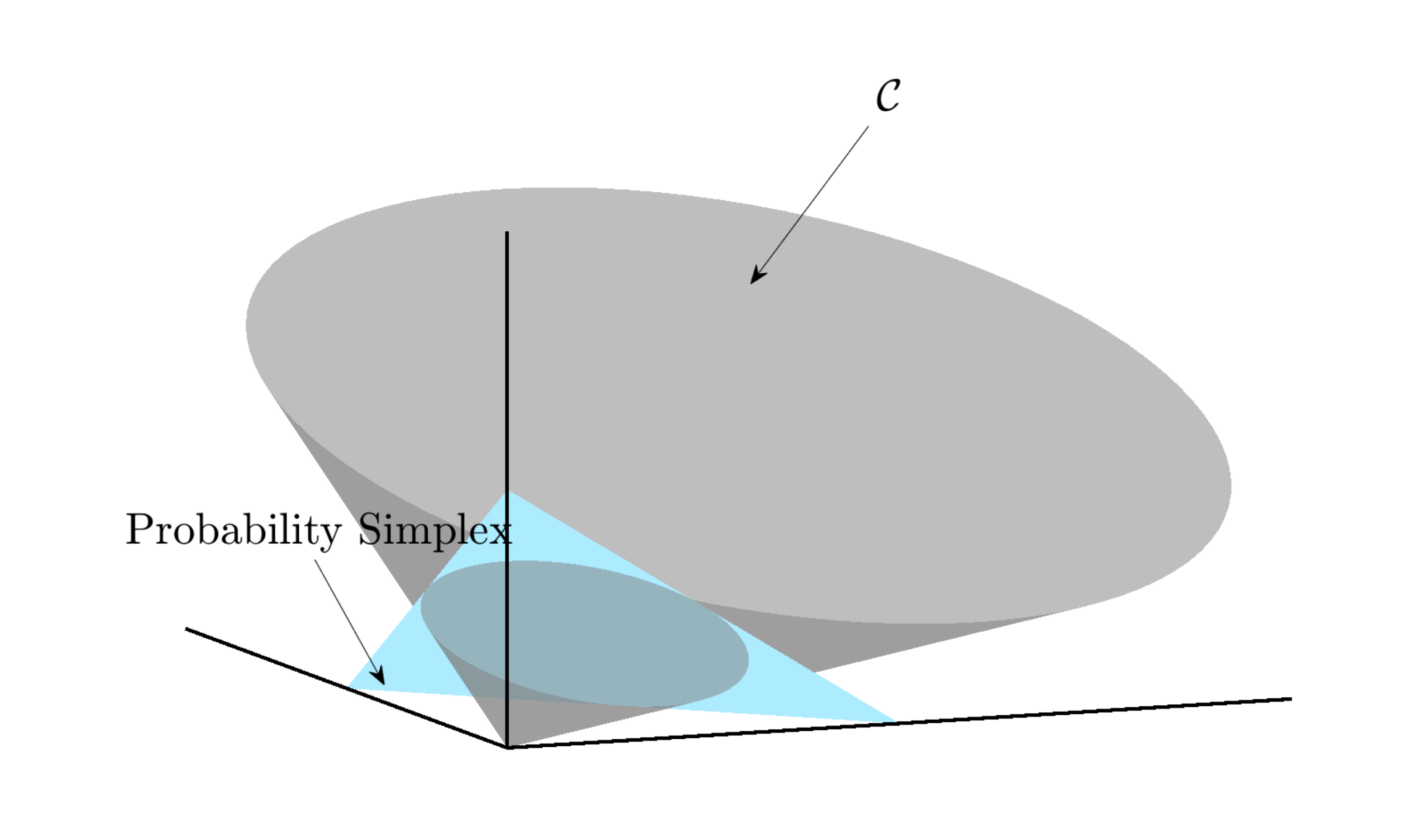}
	\caption{Illustration of the cone ${\cal C}$ in an $3$-dimensional space.}
	\label{fig:coneplot}
\end{figure}

\begin{figure}[h]
	\centering
	\begin{tikzpicture}
\filldraw[fill=pink]
( 75:1.4142) -- (105:1.4142) -- (195:1.4142) -- (225:1.4142) -- 
(-45:1.4142) -- (-15:1.4142) -- ( 75:1.4142);

\fill[fill=blue] ( 75:1.4142) circle(2pt);
\fill[fill=blue] (105:1.4142) circle(2pt);
\fill[fill=blue] (195:1.4142) circle(2pt);
\fill[fill=blue] (225:1.4142) circle(2pt);
\fill[fill=blue] (-45:1.4142) circle(2pt);
\fill[fill=blue] (-15:1.4142) circle(2pt);


\fill[fill=blue] ( 50:1) circle(2pt);
\fill[fill=blue] ( 80:1.2) circle(2pt);
\fill[fill=blue] (120:.9) circle(2pt);
\fill[fill=blue] (177:1) circle(2pt);
\fill[fill=blue] (200:1.2) circle(2pt);
\fill[fill=blue] (-99:.7) circle(2pt);
\fill[fill=blue] (-30:1.2) circle(2pt);

\fill[fill=blue] (  0:.6) circle(2pt);
\fill[fill=blue] (170:.3) circle(2pt);

\draw[very thick] (90:2cm) -- (210:2cm) -- (-30:2cm) -- (90:2cm);
\draw[very thick] (0,0) circle (1cm);
\end{tikzpicture}
	\caption{A case where $\bm H$ satisfies the sufficiently scattered condition. The inner circle corresponds to ${\cal C}$, the dots correspond to $\bm H(q,:)$'s, and the triangle corresponds to the nonnegative orthant. The shaded region is $\cone{\bm H^\T}$.  }\label{fig:scattered}
\end{figure}
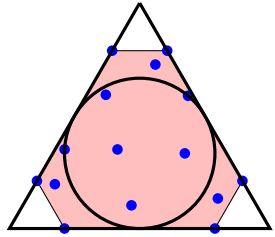

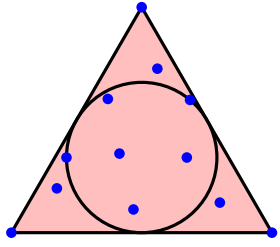
\begin{figure}[h]
	\centering
	\begin{tikzpicture}
\fill[fill=pink] (90:2) -- (210:2) -- (-30:2);

\draw[very thick] (90:2) -- (210:2) -- (-30:2) -- (90:2);
\draw[very thick] (0,0) circle (1cm);

\fill[fill=blue] ( 90:2) circle(2pt);
\fill[fill=blue] (210:2) circle(2pt);
\fill[fill=blue] (-30:2) circle(2pt);

\fill[fill=blue] ( 50:1) circle(2pt);
\fill[fill=blue] ( 80:1.2) circle(2pt);
\fill[fill=blue] (120:.9) circle(2pt);
\fill[fill=blue] (180:1) circle(2pt);
\fill[fill=blue] (200:1.2) circle(2pt);
\fill[fill=blue] (-99:.7) circle(2pt);
\fill[fill=blue] (-30:1.2) circle(2pt);

\fill[fill=blue] (  0:.6) circle(2pt);
\fill[fill=blue] (170:.3) circle(2pt);

\end{tikzpicture}
	\caption{A case where $\overline{\bm H}_m$ satisfies the condition for applying SPA (aka. the separability condition). The inner circle corresponds to ${\cal C}$, the dots correspond to $\overline{\bm H}_m(q,:)$'s, and the triangle corresponds to the nonnegative orthant. The shaded region is $\cone{ \overline{\bm H}_m^\T}$. }\label{fig:separable}
\end{figure}

\section{Proof of Theorem~\ref{thm:spa}}

\subsection{Identification Theory of SPA}\label{app:spa}
To understand Theorem~\ref{thm:spa}, let us start with a noisy nonnegative matrix factorization (NMF) model
\begin{equation}\label{eq:sepNMF}
\bm X= \bm W\bm H^\T +\bm N, 
\end{equation}
where ${\bm W}\in\mathbb{R}^{M\times K}$, $\bm H\in\mathbb{R}^{N\times K}$, $\bm W\geq \bm 0$ and $\bm H\geq \bm 0$, and $\bm N$ represents the noise.
Also assume that ${\rm rank}(\bm W)=K$ and ${\bm H}=\bm \Pi\begin{bmatrix}
\bm I_K\\
\bm H^\ast
\end{bmatrix};$ i.e., there exists $\varLambda=\{q_1,\ldots,q_K\}$
such that $\bm H(\varLambda,:)=\bm I$.
Also assume that
$   \bm H\bm 1=\bm 1. $

The SPA algorithm under this model is as follows \cite{Gillis2012,fu2014self,arora2012practical,VMAX}:
	\begin{shaded}
	\begin{subequations}\label{eq:spa2}
		\begin{align*}
		\widehat{q}_1& = \arg\max_{q\in\{1,\ldots,N\}}~\left\|{\bm X}(:,q)\right\|_2^2\\
		\widehat{q}_k& = \arg\max_{q\in\{1,\ldots,N\}}~\left\|\bm P^\perp_{\widehat{\bm W}(:,1:k-1)}{\bm X}(:,q)\right\|_2^2,~k>1.
		\end{align*}
	\end{subequations}
	where $$\widehat{\bm W}(:,1:k-1)=[\bm X(:,\widehat{q}_1),\ldots,\bm X(:,\widehat{q}_{k-1})]$$ collects all the previously estimated columns of $\bm W$ and $\bm P^\perp_{\widehat{\bm W}(:,1:k-1)}$ is a projector onto the orthogonal complement of ${\rm range}(\widehat{\bm W}(:,1:k-1))$.
\end{shaded}

When there is no noise, it was shown in the literature that SPA readily identifies $\varLambda$ \cite{Gillis2012,arora2012practical}.
To see this, consider
\begin{align*}
\|\bm X(:,q)\|_2 & = \left\|\sum_{k=1}^K\bm W(:,k)\bm H(q,k)\right\|_2\leq \sum_{k=1}^K\left\|\bm W(:,k)\bm H(q,k)\right\|_2\\
&= \sum_{k=1}^K\bm H(q,k)\left\|\bm W(:,k)\right\|_2\leq \max_{k=1,\ldots,K}\left\|\bm W(:,k)\right\|_2,
\end{align*}
where the two equalities hold simultaneously if and only if $\bm H(q,:)=\bm e_k^\T$ for a certain $k$, i.e., $q\in\varLambda$.
After identifying the first index $\widehat{q}_1$ in $\varLambda$, then by projecting all the data column onto the the orthogonal complement of $\bm X(:,\widehat{q}_1)$, the same $\bm W(:,k)$ will not come up again.
Hence, $K$ steps of SPA identifies the whole $\bm W$.

A salient feature of SPA is that it is provably robust to noise. To be specific, Gillis and Vavasis have shown that:
\begin{Lemma}\label{lem:gillis} \cite{Gillis2012}
	Under the described NMF model, assume that $\| \bm{N}(:,l) \|_2 \leq \delta$ for all $l$. If the below holds:
	\begin{equation*}
	\begin{aligned}
	\delta \le \sigma_{\rm min}(\bm{W})\text{\rm min}\left(\frac{1}{2\sqrt{K-1}},\frac{1}{4}\right)\left(1+80\kappa^2(\bm{W})\right)^{-1},
	\label{n2}
	\end{aligned}
	\end{equation*}
	then, SPA identifies an index set $\widehat{\varLambda}=\{\widehat{q}_1,\ldots\widehat{q}_K \}$ such that
	\begin{equation*}
	\begin{aligned}	
	\max_{1 \leq j \leq K} \min_{\widehat{q}_k \in \widehat{\varLambda}} \left\|\bm{W}(:,j) - \bm{X}(:,\widehat{q}_k)\right\|_2 \leq \delta\big(1+80\kappa^2(\bm{W})\big)
	\end{aligned}
	\end{equation*}
	where $\kappa(\bm{W}) = \frac{\sigma_{\max}(\bm{W})}{\sigma_{\min}(\bm{W})}$ is the condition number of $\bm{W}$. 
\end{Lemma}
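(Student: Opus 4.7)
The plan is to prove the noise-robustness of SPA by induction on the step index $k$, establishing that after $k$ iterations SPA has identified $k$ distinct columns of $\bm W$ to an accuracy that is at most $O(\delta\,\kappa^2(\bm W))$. I would first establish the noiseless baseline: since every column of $\bm X$ (with $\bm N=\bm 0$) is a convex combination of columns of $\bm W$ by virtue of $\bm H\bm 1=\bm 1$ and $\bm H\ge\bm 0$, strict convexity of $\|\cdot\|_2^2$ on the simplex forces the maximum-norm column of $\bm X$ to be attained only at a vertex of $\conv{\bm W(:,1),\ldots,\bm W(:,K)}$, i.e.\ only at some $q\in\varLambda$. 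This single-step argument, together with the observation that the projector $\bm P^\perp_{\widehat{\bm W}(:,1:k-1)}$ sends the already-identified vertex to $\bm 0$ while preserving the near-vertex structure of the remaining columns, gives the noiseless correctness of the $K$-step recursion.

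For the noisy case, I would carry out the following induction. Suppose that after $k-1$ steps, up to a partial permutation $\pi$, the selected columns $\bm X(:,\widehat{q}_j)$ for $j<k$ are $\varepsilon_{k-1}$-close to $k-1$ distinct columns $\bm W(:,\pi(j))$. Writing $\widehat{\bm W}^{(k-1)}$ for the matrix of already-selected noisy columns, the Wedin-type perturbation bound yields
\begin{equation*}
\bigl\|\bm P^\perp_{\widehat{\bm W}^{(k-1)}} - \bm P^\perp_{\bm W(:,\pi(1{:}k-1))}\bigr\|_2 \;\lesssim\; \frac{\varepsilon_{k-1}}{\sigma_{\min}(\bm W)},
\end{equation*}
so the \emph{effective} noise seen at step $k$ is at most $\delta + C\,\varepsilon_{k-1}\,\kappa(\bm W)$. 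In the projected coordinates, the noiseless images of the $K-(k-1)$ unpicked columns of $\bm W$ still have norm at least $\Omega(\sigma_{\min}(\bm W))$ (by applying the noiseless argument to the reduced system), while the images of the already-picked columns have norm $O(\varepsilon_{k-1})$. Under the hypothesis $\delta\le \sigma_{\min}(\bm W)\min\!\left(\tfrac{1}{2\sqrt{K-1}},\tfrac14\right)(1+80\kappa^2(\bm W))^{-1}$ this gap cannot be closed, so the $\arg\max$ at step $k$ must land near one of the remaining vertices, giving the recursion $\varepsilon_k \le \varepsilon_{k-1}+C'\delta\,\kappa(\bm W)$ with a matching gap-preservation guarantee.

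The main technical obstacle will be making the error-propagation bound quantitatively tight enough to produce exactly the factor $1+80\kappa^2(\bm W)$. Two places demand care. First, the perturbation of $\bm P^\perp$ in terms of $\varepsilon_{k-1}$ must be tracked in operator norm \emph{together with} an auxiliary bound on how the projector acts on the hidden columns $\bm W(:,j)$ that have not yet been selected; it is this compound effect — a $\sigma_{\min}^{-1}$ factor from Wedin multiplied by a $\sigma_{\max}$ factor from applying the projector to $\bm W(:,j)$ — that produces $\kappa^2(\bm W)$ rather than $\kappa(\bm W)$ in the final error constant. Second, one must quantify strict convexity of $\|\cdot\|_2^2$ on the simplex: if a coefficient vector $\bm h$ lies $\eta$-far from every vertex, then $\|\bm W\bm h\|_2$ lies at least an additive $\Omega\!\bigl(\eta^2\sigma_{\min}^2(\bm W)/\sigma_{\max}(\bm W)\bigr)$ below $\max_k \|\bm W(:,k)\|_2$. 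The threshold condition on $\delta$ is then calibrated so that the worst-case noise inflation over all $K$ recursive projections cannot swallow this quadratic gap. Unrolling the $\varepsilon_k$ recursion for $k=1,\ldots,K$ and bounding the accumulated constant by the geometric-series estimate $\sum_{j=0}^{K-1}(C\kappa(\bm W))^j \le (1+80\kappa^2(\bm W))$ under the stated noise budget yields the announced error bound $\delta(1+80\kappa^2(\bm W))$.
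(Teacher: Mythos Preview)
The paper does not prove this lemma at all: it is quoted verbatim as a known result of Gillis and Vavasis \cite{Gillis2012} and then used as a black box in the proof of Theorem~\ref{thm:spa}. So there is no ``paper's own proof'' to compare against; your proposal is really a sketch of the argument from the cited reference, not of anything in this paper.

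As a sketch of the Gillis--Vavasis argument your outline is broadly on the right track (induction on the SPA step, strict convexity of $\|\cdot\|_2^2$ on the simplex for vertex identification, perturbation of the orthogonal projector for error propagation), but the closing step is not right. You write the recursion as $\varepsilon_k \le \varepsilon_{k-1} + C'\delta\,\kappa(\bm W)$ and then ``unroll'' it as a geometric series $\sum_{j=0}^{K-1}(C\kappa(\bm W))^j$; these are inconsistent (the recursion you wrote is additive, not multiplicative), and neither version yields a $K$-independent bound of the form $\delta(1+80\kappa^2(\bm W))$ without further work. In the actual proof the crucial point is that the error at each step is controlled \emph{uniformly} by $O(\delta\kappa^2(\bm W))$, not by a quantity that compounds across steps; this requires a sharper projector-perturbation argument than a direct Wedin bound followed by multiplication by $\sigma_{\max}(\bm W)$. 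The constant $80$ and the $\min(\tfrac{1}{2\sqrt{K-1}},\tfrac14)$ threshold come out of that uniform analysis, not from summing a length-$K$ series.
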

\color{black}

\subsection{Proof of The Theorem} \label{app:theorem1}

Since $\bm R_{m,\ell}$ is obtained by sample averaging of a finite number of pairwise co-occurrences, the estimated $\widehat{\bm R}_{m,\ell } $ is always noisy; i.e., we have
\begin{align}
    \widehat{\bm R}_{m,\ell } = \bm R_{m,\ell} +{\bm N}_{m,\ell},
\end{align}
where the noise matrix ${\bm N}_{m,\ell}$ has same dimension as $\widehat{\bm R}_{m,\ell }$ or $\bm R_{m,\ell}$ and its norm can be bounded by Lemma \ref{lem:anandkumar}.

\begin{Lemma}\label{lem:anandkumar} \cite{anandkumar2014tensor}
	Let $\delta \in (0,1)$ and let $\widehat{\bm R}_{m,\ell }$ be the empirical average of $S$ independent co-occurrences of random variables $X_m$ and $X_{\ell}$ where $X_m,X_{\ell} \in \{1,\dots,K\}$, then the following holds 
	\begin{equation*}
	\begin{aligned}
	{\sf Pr}\bigg[\|\widehat{\bm R}_{m,\ell }-{\bm R}_{m,\ell }\|_F = \|{\bm N}_{m,\ell }\|_F \le \frac{1+\sqrt{{\rm ln}(1/\delta)}}{\sqrt{S}}\bigg] \ge 1-\delta
	\label{n2}
	\end{aligned}
	\end{equation*}
\end{Lemma}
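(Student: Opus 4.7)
The plan is to view $\widehat{\bm R}_{m,\ell}$ as an empirical average of $S$ i.i.d.\ random indicator matrices and combine (a) a Jensen-type bound on the expected deviation with (b) McDiarmid's bounded-differences inequality to control the fluctuation of the Frobenius-norm deviation around its mean.

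First, I would write $\widehat{\bm R}_{m,\ell} = \frac{1}{S}\sum_{s=1}^S \bm E^{(s)}$, where each $\bm E^{(s)}\in\{0,1\}^{K\times K}$ has a single entry equal to $1$ at position $(X_m^{(s)},X_\ell^{(s)})$, so $\|\bm E^{(s)}\|_F=1$, and $\mathbb{E}[\bm E^{(s)}]=\bm R_{m,\ell}$ by Eq.~\eqref{eq:expect}. A routine variance computation gives
\begin{equation*}
\mathbb{E}\!\left[\|\widehat{\bm R}_{m,\ell}-\bm R_{m,\ell}\|_F^2\right]
=\tfrac{1}{S}\,\mathbb{E}\!\left[\|\bm E^{(1)}-\bm R_{m,\ell}\|_F^2\right]
=\tfrac{1}{S}\bigl(1-\|\bm R_{m,\ell}\|_F^2\bigr)\le \tfrac{1}{S},
\end{equation*}
and Jensen's inequality then yields
\begin{equation*}
\mathbb{E}\!\left[\|\widehat{\bm R}_{m,\ell}-\bm R_{m,\ell}\|_F\right]\le \tfrac{1}{\sqrt{S}}.
\end{equation*}

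Second, view $f(\bm E^{(1)},\dots,\bm E^{(S)}) := \|\widehat{\bm R}_{m,\ell}-\bm R_{m,\ell}\|_F$ as a function of the $S$ independent inputs. Replacing one $\bm E^{(s)}$ by another indicator matrix $\bm E'^{(s)}$ changes $\widehat{\bm R}_{m,\ell}$ by $(\bm E'^{(s)}-\bm E^{(s)})/S$, which has Frobenius norm at most $\sqrt{2}/S$ (two indicator matrices differ in at most two entries, each $\pm 1$). By the reverse triangle inequality, $|f-f'|\le \sqrt{2}/S$, so McDiarmid's bounded-differences inequality applies with constants $c_s=\sqrt{2}/S$, giving
\begin{equation*}
{\sf Pr}\bigl[f\ge \mathbb{E}[f]+t\bigr]\le \exp\!\left(-\tfrac{2t^2}{\sum_{s=1}^S c_s^2}\right)=\exp(-S t^2).
\end{equation*}
Setting the right-hand side equal to $\delta$ yields $t=\sqrt{\ln(1/\delta)/S}$. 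Combining with the mean bound gives, with probability at least $1-\delta$,
\begin{equation*}
\|\widehat{\bm R}_{m,\ell}-\bm R_{m,\ell}\|_F\le \tfrac{1}{\sqrt{S}}+\sqrt{\tfrac{\ln(1/\delta)}{S}}=\tfrac{1+\sqrt{\ln(1/\delta)}}{\sqrt{S}},
\end{equation*}
which is exactly the stated bound.

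There is no real obstacle here; this is a standard concentration argument for empirical PMFs in Frobenius norm. The only point that requires mild care is the bounded-difference constant: one must notice that the worst-case perturbation of a single summand is $\sqrt{2}/S$ (not $2/S$ or $1/S$), because the two indicator matrices differ in at most two entries of magnitude one. An alternative route would be to treat $\mathrm{vec}(\bm E^{(s)}-\bm R_{m,\ell})$ as a bounded vector-valued martingale difference and invoke a vector Azuma/Hoeffding inequality, which yields the same rate; I would prefer the McDiarmid route since it avoids Banach-space machinery and gives the constants in the stated form directly.
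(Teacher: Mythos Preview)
Your argument is correct. The paper itself does not prove this lemma; it is stated with a citation to \cite{anandkumar2014tensor} and used as a black box in the proof of Theorem~\ref{thm:spa}. The route you outline---bounding $\mathbb{E}\|\widehat{\bm R}_{m,\ell}-\bm R_{m,\ell}\|_F$ via a second-moment computation and Jensen, then applying McDiarmid with bounded-difference constant $\sqrt{2}/S$---is precisely the standard proof of this fact (and is essentially what appears in the cited reference). Your handling of the constant is accurate: two single-entry indicator matrices differ in at most two positions, each by $\pm 1$, so the worst-case perturbation is $\sqrt{2}/S$, which makes $\sum_s c_s^2 = 2/S$ and yields the exponent $-St^2$ exactly as stated.
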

Using the estimates $\widehat{\bm R}_{m,\ell }$, $\widehat{\bm Z}_m$ is constructed according to \eqref{eqX} (with ${\bm R}_{m,\ell }$'s replaced by $\widehat{\bm R}_{m,\ell }$'s). The columns of $\widehat{\bm Z}_m$  are normalized before performing $\texttt{MultiSPA}$, essentially normalizing the columns of $\widehat{\bm R}_{m,\ell }$. 
Normalization complicates the analysis since the demonstrators used in this step are also noisy.
We derive
Lemma \ref{lem:normalization} to characterize the noise bound after column normalization.   

\begin{Lemma}\label{lem:normalization} 
	Assume that there exists at least $S$ joint responses from each of the annotator pairs $m,\ell$.  Let $\eta \in (0,1)$. If $\|\widehat{\bm R}_{m,\ell }(:,k)\|_1 \ge \eta$ and $ \|{\bm N}_{m,\ell }(:,k)\|_1 < \|{\bm R}_{m,\ell }(:,k)\|_1,  \forall k \in \{1,\dots,K\}, \forall m\neq \ell$, then with probability greater than $1-\delta$, the below holds $\forall k \in \{1,\dots,K\}, \forall m\neq \ell$,
	\begin{equation*}
	\begin{aligned}
	    \frac{\widehat{\bm R}_{m,\ell }(:,k)}{\|\widehat{\bm R}_{m,\ell }(:,k)\|_1} = \frac{{\bm R}_{m,\ell }(:,k)}{\|{\bm R}_{m,\ell }(:,k)\|_1} + {\overline{\bm N}}_{m,\ell }(:,k)
	\label{n2}
	\end{aligned}
	\end{equation*}	
	where $\|{\overline{\bm N}}_{m,\ell }(:,k)\|_2 \le  \frac{2\sqrt{K}(1+\sqrt{{\rm ln}(1/\delta)})}{\sqrt{S}\eta}$.
\end{Lemma}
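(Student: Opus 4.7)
The plan is to combine the concentration bound of Lemma~\ref{lem:anandkumar} with a standard perturbation analysis for the ratio $\widehat{\bm r}/\|\widehat{\bm r}\|_1$. Fix a pair $(m,\ell)$ and a column index $k$, and write $\bm r = \bm R_{m,\ell}(:,k)$, $\widehat{\bm r} = \widehat{\bm R}_{m,\ell}(:,k)$, $\bm n = \widehat{\bm r} - \bm r$, $a = \|\bm r\|_1$, $\widehat{a} = \|\widehat{\bm r}\|_1$. The assumption $\|\bm N_{m,\ell}(:,k)\|_1 < \|\bm R_{m,\ell}(:,k)\|_1$ ensures $\widehat{a}>0$, so both normalized columns are well-defined, while the hypothesis $\widehat{a}\ge \eta$ will supply a uniform lower bound on the denominator.

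The key identity is
\[
\frac{\widehat{\bm r}}{\widehat{a}} - \frac{\bm r}{a}
= \frac{\bm n}{\widehat{a}} \;+\; \bm r\cdot\frac{a-\widehat{a}}{a\widehat{a}},
\]
obtained by adding and subtracting $\bm r/\widehat{a}$. Taking $\ell_2$-norms and using the triangle inequality gives
\[
\Bigl\|\tfrac{\widehat{\bm r}}{\widehat{a}} - \tfrac{\bm r}{a}\Bigr\|_2
\;\le\; \frac{\|\bm n\|_2}{\widehat{a}} + \frac{\|\bm r\|_2}{a}\cdot\frac{|a-\widehat{a}|}{\widehat{a}}.
\]
Since $\bm r\ge\bm 0$ we have $\|\bm r\|_2 \le \|\bm r\|_1 = a$, so the factor $\|\bm r\|_2/a$ is at most $1$. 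Reverse triangle inequality yields $|a-\widehat{a}| \le \|\bm n\|_1$, and since $\bm n\in\mathbb{R}^K$ we have $\|\bm n\|_1 \le \sqrt{K}\,\|\bm n\|_2$. Combining these and using $1+\sqrt{K}\le 2\sqrt{K}$ for $K\ge 1$ gives the compact bound
\[
\Bigl\|\tfrac{\widehat{\bm r}}{\widehat{a}} - \tfrac{\bm r}{a}\Bigr\|_2
\;\le\; \frac{(1+\sqrt{K})\|\bm n\|_2}{\widehat{a}}
\;\le\; \frac{2\sqrt{K}\,\|\bm n\|_2}{\widehat{a}}.
\]

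It remains to control $\|\bm n\|_2$ and $\widehat{a}$. The column norm is dominated by the Frobenius norm, $\|\bm n\|_2 \le \|\bm N_{m,\ell}\|_F$, and Lemma~\ref{lem:anandkumar} gives $\|\bm N_{m,\ell}\|_F \le (1+\sqrt{\ln(1/\delta)})/\sqrt{S}$ with probability at least $1-\delta$. The denominator is bounded below by the hypothesis $\widehat{a}\ge\eta$. Substituting both yields exactly
\[
\|\overline{\bm N}_{m,\ell}(:,k)\|_2
\;\le\; \frac{2\sqrt{K}\,(1+\sqrt{\ln(1/\delta)})}{\sqrt{S}\,\eta},
\]
which is the claimed bound. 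I expect no real obstacle: the only non-routine step is isolating the right algebraic decomposition of the difference of the two normalized vectors so that the $\|\bm r\|_2/a \le 1$ cancellation becomes available; once that is done, the chain of triangle and norm-equivalence inequalities is mechanical, and the probabilistic content is entirely carried by the concentration bound already cited.
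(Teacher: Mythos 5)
Your proof is correct and reaches exactly the paper's bound, but by a genuinely different (and arguably cleaner) route. The paper sets $\mu=\sum_i n_i/\sum_i x_i$, invokes the hypothesis $\|\bm N_{m,\ell}(:,k)\|_1<\|\bm R_{m,\ell}(:,k)\|_1$ to guarantee $|\mu|<1$, expands $(1+\mu)^{-1}$ as a geometric series to isolate the perturbation term $\Gamma=\frac{\bm n-\mu\bm x}{(1+\mu)\sum_i x_i}$, bounds $\|\Gamma\|_1\le 2\|\bm n\|_1/\eta$, and finishes via $\|\Gamma\|_2\le\|\Gamma\|_1$ and the columnwise $\ell_1$ consequence of Lemma~\ref{lem:anandkumar}. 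You instead use the standard add-and-subtract decomposition
\begin{equation*}
\frac{\widehat{\bm r}}{\widehat{a}}-\frac{\bm r}{a}=\frac{\bm n}{\widehat{a}}+\bm r\cdot\frac{a-\widehat{a}}{a\widehat{a}},
\end{equation*}
control the second term with $\|\bm r\|_2\le\|\bm r\|_1=a$ and the reverse triangle inequality $|a-\widehat{a}|\le\|\bm n\|_1\le\sqrt{K}\|\bm n\|_2$, and absorb the $(1+\sqrt{K})$ factor into $2\sqrt{K}$. What your version buys: it avoids the series expansion entirely, so the hypothesis $\|\bm N_{m,\ell}(:,k)\|_1<\|\bm R_{m,\ell}(:,k)\|_1$ is needed only for well-definedness (and is in fact already implied by $\widehat{a}\ge\eta>0$), and it works with $\ell_2$ norms throughout rather than passing through $\ell_1$ and back. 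The probabilistic content is identical in both proofs --- a single application of Lemma~\ref{lem:anandkumar} per pair $(m,\ell)$ --- so your argument inherits the same mild looseness as the paper's regarding the ``for all $m\neq\ell$'' quantifier without a union bound; this is not a defect relative to the original.
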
	
\begin{proof}
	For simpler representation, let us assign ${\bm x} := {\bm R}_{m,\ell }(:,k)$, $\widehat{\bm x} := \widehat{{\bm R}}_{m,\ell }(:,k)$ and ${\bm n} :=\widehat{{\bm R}}_{m,\ell }(:,k)-{\bm R}_{m,\ell }(:,k) = {\bm N}_{m,\ell }(:,k) $.
	
	Let ${\bm x}=[x_1,\dots,x_K]^{\top}$ and ${\bm n}=[n_1,\dots,n_K]^{\top}$. Note that ${\bm x} \ge 0$ and ${\bm x} +{\bm n} \ge 0$---since $\bm x$ is a legitimate PMF and ${\bm x} +{\bm n}$ is averaged from co-occurrence counts. Then, we have
	\begin{align*}
	     \frac{{\widehat{\bm x}}}{\|{\widehat{\bm x}}\|_1} = \frac{{\bm x} +{\bm n}}{\|{\bm x} +{\bm n}\|_1} &= \frac{{\bm x} +{\bm n}}{\sum_{i} x_i +n_i}
	    =\frac{{\bm x} +{\bm n}}{\sum_{i} x_i +\sum_{i} n_i}\\
	    &=\frac{{\bm x} +{\bm n}}{\sum_{i} x_i\left(1 +\frac{\sum_{i} n_i}{\sum_{i} x_i}\right)}.
	\end{align*}
	Let $\mu =\frac{\sum_{i} n_i}{\sum_{i} x_i}$ . Using the assumption $\|{\bm N}_{m,\ell }(:,k)\|_1 < \|{\bm R}_{m,\ell }(:,k)\|_1$, then $|\mu| < 1 $, From this,
	\begin{align}
	    \frac{{\bm x} +{\bm n}}{\|{\bm x} +{\bm n}\|_1} &=\frac{({\bm x} +{\bm n})(1 +\mu)^{-1}}{\sum_{i} x_i} \nonumber
	    = \frac{({\bm x} +{\bm n})}{\sum_{i} x_i}(1 -\mu+{\mu}^2-{\mu}^3+\dots ) \nonumber\\
	    &= \frac{{\bm x}}{\sum_{i} x_i}-\mu\frac{{\bm x}}{\sum_{i} x_i}(1 -\mu+{\mu}^2-{\mu}^3+\dots )+\frac{{\bm n}}{\sum_{i} x_i}(1 -\mu+{\mu}^2-{\mu}^3+\dots ) \nonumber\\
	    &=\frac{{\bm x}}{\sum_{i} x_i}+{\frac{{\bm n}}{(1+\mu)\sum_{i} x_i}-\frac{\mu{\bm x}}{(1+\mu)\sum_{i} x_i}} \nonumber\\
	    &= \frac{{\bm x}}{\|{\bm x}\|_1}+\underbrace{\frac{{\bm n}-\mu{\bm x}}{(1+\mu)\sum_{i} x_i}}_{\Gamma} \label{eqxn}
	\end{align}	
	Now let us bound the term $\Gamma$:
	\begin{align}
	    \|\Gamma\|_1 &:= \left\Vert\frac{{\bm n}-\mu{\bm x}}{(1+\mu)\sum_{i} x_i}\right\Vert_1 \nonumber
	    \le \frac{\|{\bm n}\|_1}{\|{\bm x} +{\bm n}\|_1}+\frac{\|{\sum_{i} n_i}\|}{\|{\bm x} +{\bm n}\|_1} \nonumber\\
	    &\le 2\frac{\|{\bm n}\|_1}{\|{\bm x} +{\bm n}\|_1} \nonumber\\
	    &\le 2\frac{\|{\bm n}\|_1}{\eta}. \label{eqnn}
	\end{align}

The first and second inequalities are due to Cauchy-Schwartz ineqality and the last inequality is by  $\|\widehat{\bm R}_{m,\ell }(:,k)\|_1 \ge \eta$. 

From Lemma \ref{lem:anandkumar}, with probability greater than $1-\delta$, the below holds,
\begin{align*}
 \sum_{k=1}^{K}\|{\bm N}_{m,\ell }(:,k)\|^2_2 &=\|{\bm N}_{m,\ell }\|^2_F \le \frac{(1+\sqrt{{\rm ln}(1/\delta)})^2}{S} .
\end{align*}
By norm equivalence, $\frac{\|{\bm N}_{m,\ell }(:,k)\|_1}{\sqrt{K}} \le \|{\bm N}_{m,\ell }(:,k)\|_2$, Therefore,
\begin{align}
\sum_{k=1}^{K}\|{\bm N}_{m,\ell }(:,k)\|^2_1 \le \frac{K(1+\sqrt{{\rm ln}(1/\delta)})^2}{S}  \nonumber\\  
\implies \|{\bm N}_{m,\ell }(:,k)\|^2_1 \le \frac{K(1+\sqrt{{\rm ln}(1/\delta)})^2}{S}, \quad \forall k \nonumber\\
\implies \|{\bm N}_{m,\ell }(:,k)\|_1 \le \frac{\sqrt{K}(1+\sqrt{{\rm ln}(1/\delta)})}{\sqrt{S}}, \quad \forall k \label{eqN1}
\end{align}
From (\ref{eqnn}) and (\ref{eqN1}),
\begin{align*}
    \|\Gamma\|_1 = \|{\overline{\bm N}}_{m,\ell }(:,k)\|_1 &\le  \frac{2\sqrt{K}(1+\sqrt{{\rm ln}(1/\delta)})}{\sqrt{S}\eta}\\
\end{align*}

By norm equivalence, $\|{\overline{\bm N}}_{m,\ell }(:,k)\|_2 \le \|{\overline{\bm N}}_{m,\ell }(:,k)\|_1$, then
\begin{align*}
    \|{\overline{\bm N}}_{m,\ell }(:,k)\|_2 &\le  \frac{2\sqrt{K}(1+\sqrt{{\rm ln}(1/\delta)})}{\sqrt{S}\eta}\\
\end{align*}
This completes the proof of Lemma~\ref{lem:normalization}.
\end{proof}

With the above lemmas, we are ready to characterize the accuracy of applying SPA to identify the Dawid-Skene model given the assumptions in Eq.~\eqref{eq:thm1}.

Eq.~\eqref{eq:thm1} indicates that there exits a set of indices $\varLambda_q=\{q_1,\dots, q_K\}$ such that
\begin{equation}
\overline{\bm{H}}_m(\varLambda_q,:) = \bm{I}_{K} + \bm{E}
\end{equation}
where $\bm{I}_{K}$ is the identity matrix of size $K$ and $\bm{E}$ is the error matrix with $\text{max}_{j} | \bm{E}(l,j)| = \|\bm{E}(l,:)\|_{\infty} \le \epsilon$. By norm equivalence, we have $\|\bm{E}(l,:)\|_{2} \le \sqrt{K} \bm{E}(l,:)\|_{\infty} \le \sqrt{K} \epsilon$.

Without loss of generality, let us assume $\varLambda_q=\{1,\ldots,K\}$ and 
\[  \overline{\bm H}_m =\begin{bmatrix}
\bm I_K+\bm E\\\bm{H}_m^{*}
\end{bmatrix}. \]
Now we have, 
\begin{align}
\overline{\bm{Z}}_m 
&=\bm{A}_m \overline{\bm H}^{\top}_m +{\overline{\bm N}}\\
&= \bm{A}_m [\bm{I}_{K} + \bm{E}^\top \ \ \ (\bm{H}_m^{*})^\top]+{\overline{\bm N}} \\
&= \bm{A}_m [\bm{I}_{K}  \ \ \ (\bm{H}_m^{*})^\top] + [\bm{A}_m \bm{E}^\top \ \ \bm{0}]+{\overline{\bm N}}
\end{align}
where ${\overline{\bm N}}= \begin{bmatrix} 
    \overline{\boldsymbol{N}}_{m,m_1} ,\ldots,\overline{\boldsymbol{N}}_{m,m_{T(m)}}
    \end{bmatrix}$ and the zero matrix $\bm{0}$ has the same size as that of $\bm{H}_m^{*}$

This model is similar to the noisy NMF model, i.e., $\bm{X} = \bm{WH}^\top + \bm{N}$, where the noise matrix $\bm{N} = [\bm{A}_m \bm{E}^\top \ \ \bm{0}_m^*]+{\overline{\bm N}}$.
To be specifc, we have 
\begin{equation}
\bm{N}(:,l) = \bm{A}_m\bm{E}(l,:)^\T+{\overline{\bm N}}(:,l)
\end{equation}
Therefore, one can see that
\begin{align}
\| \bm{N}(:,l) \|_2 &= \|\bm{A}_m\bm{E}(l,:)^\T+{\overline{\bm N}}(:,l)\|_2 \\
& \le \|\bm{A}_m\|_2 \|\bm{E}(l,:)\|_2+ \|{\overline{\bm N}}(:,l)\|_2 \label{cauchy}\\
& \le \sigma_{\rm max}(\bm{A}_m)\sqrt{K}\epsilon + \frac{2\sqrt{K}(1+\sqrt{{\rm ln}(1/\delta)})}{\sqrt{S}\eta} \label{n1}
\end{align}
where \eqref{cauchy} is by the Cauchy-Schwartz inequality, \eqref{n1} is by Lemma \ref{lem:normalization} and  $\sigma_{\rm max}(\bm{A}_m)$ is the largest singular value of matrix $\bm{A}_m$.


%
%

Hence, we effectively have the same model as in \eqref{eq:sepNMF}.
Applying Lemma~\ref{lem:gillis}, we see that if
\begin{align*}
\epsilon  &\le  \frac{1}{\sqrt{K}\kappa(\bm{A}_m)} \text{min}\left(\frac{1}{2\sqrt{K-1}},\frac{1}{4}\right)\left(1+80\kappa^2(\bm{A}_m)\right)^{-1} - \frac{2(1+\sqrt{{\rm ln}(1/\delta)})}{\sigma_{\rm max}(\bm{A}_m)\sqrt{S}\eta} \\
	   &= \mathcal{O}\left(\text{max}\left(K^{-1}\kappa^{-3}(\bm{A}_m),\sqrt{{\rm ln}(1/\delta)}(\sigma_{\rm max}(\bm{A}_m)\sqrt{S}\eta)^{-1}\right)\right),
\end{align*}
then, with probability at least $1-\delta$, the SPA algorithm identifies the matrix $\bm{A}_m$ with an error bound given by
\begin{align*}
&\min_{\bm \Pi}\|\widehat{\bm A}_m\bm \Pi- {\bm A}_m\|_{2,\infty} \\
&= \max_{1 \leq j \leq K} \min_{\widehat{q}_k \in \widehat{\varLambda}_q} \left\|\bm{A}_m(:,j) - \overline{\bm{Z}}_m(:,\widehat{q}_k)\right\|_2 \\
& \leq \left(\sigma_{\rm max}(\bm{A}_m)\sqrt{K}\epsilon + \frac{2\sqrt{K}(1+\sqrt{{\rm ln}(1/\delta)})}{\sqrt{S}\eta}\right)\left(1+80\kappa^2(\bm{A}_m)\right) \\
&= \mathcal{O}\bigg(\sqrt{K}\kappa^2(\bm{A}_m) \text{max}\left(\sigma_{\rm max}(\bm{A}_m)\epsilon,\sqrt{{\rm ln}(1/\delta)}(\sqrt{S}\eta)^{-1}\right)\bigg).
\end{align*}
This completes the proof.

\section{Proof of Theorem~\ref{thm:Lm}}\label{app:thm2}

Assuming that the rows of $\overline{\bm{H}}_m$ are generated from the probability simplex uniformly at random, we now analyze under what conditions vectors close to all $K$ vertices of the probability simplex appear in the rows of $\overline{\bm H}_m$.

 Let us denote the probability simplex as $\mathcal{X} = \{ \bm x \in \mathbb{R}^K | \bm{x}^\top \bm{1} = 1 , \bm x \ge 0\}$. 
 
 Let us consider an $\epsilon$-neighbourhood of the $k$-th vertex ${\bm e}_k$  denoted as $\mathcal{Q}_{k}(\epsilon)$ such that
 \begin{align}
     \mathcal{Q}_{k}(\epsilon) := \{{\bm q} \in \mathcal{X} |~ \|{\bm q} - {\bm e}_k \|_2 \le \epsilon \}
 \end{align}
 
Geometrically, the continuous set $\mathcal{Q}_{k}(\epsilon)$ can be considered as the intersection of the probability simplex $\mathcal{X}$ and the euclidean ball of radius $\epsilon$ centered at ${\bm e}_k$, i.e, $\mathcal{Q}_{k}(\epsilon) = \mathcal{X}\cap \mathcal{B}({\bm e}_k,\epsilon)$ (see Fig. \ref{fig:simpedge}).

\begin{figure}[h]
	\centering
	\includegraphics[width=0.4\linewidth]{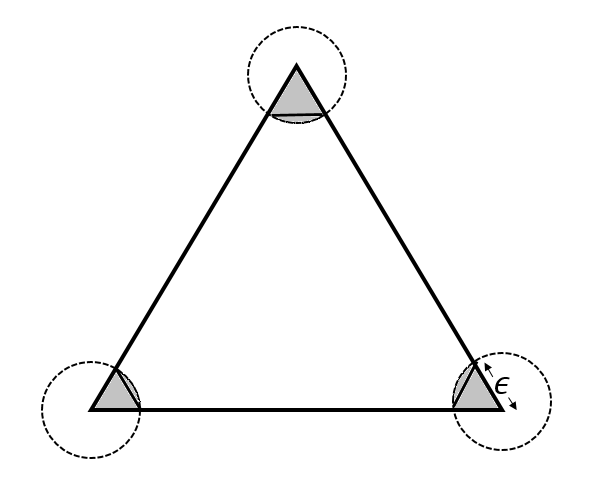}
	\caption{The big triangle represents the probability simplex $\mathcal{X}$ when $K=3$, the dotted circles denotes the euclidean balls $\mathcal{B}({\bm e}_k,\epsilon)$, the shaded region denotes $\mathcal{X}\cap \mathcal{B}({\bm e}_k,\epsilon)$. The small triangles near the vertices has the same volume as the simplex having edge lengths $\epsilon$ denoted as $\mathcal{X}_{\epsilon }$  }
	\label{fig:simpedge}
\end{figure}

Suppose we are uniformly sampling a set $\mathcal{P}$ of size $s$  from the probability simplex $\mathcal{X}$ such that $\mathcal{P} := \{{\bm p}_1,{\bm p}_2,\dots, {\bm p}_s\} $

Let us define an event $J_i$ such that for every $i \in \{1,\dots,s\}$,
\begin{equation}
 J_i=
    \begin{cases}
      1, & \text{if~}\ \bm p_i \in \mathcal{Q}_{k}(\epsilon) \\
      0, & \text{otherwise}
    \end{cases}
\end{equation}
Consider the probability such that event $J_i$ happens,
\begin{align}
    {\sf Pr}(J_i=1) &= \frac{\text{vol}(\mathcal{Q}_{k}(\epsilon))}{\text{vol}(\mathcal{X})}\\
    &= \frac{\text{vol}(\mathcal{X}\cap \mathcal{B}({\bm e}_k,\epsilon))}{\text{vol}(\mathcal{X})}\\
&\ge \frac{\text{vol}(\mathcal{X}_{\epsilon})}{\text{vol}(\mathcal{X})} \label{eqvol_2}\\
    &\ge \left(\frac{\epsilon}{\sqrt{2}}\right)^{K-1} \label{eqvol_1}
\end{align}
where $\mathcal{X}_{\epsilon}$ denotes the $(K-1)$-dimensional simplex which intersects the co-ordinate axes at $\frac{\epsilon}{\sqrt{2}} {\bm e}_k$, for every $k\in \{1,\dots K\}$ and its volume is given by $\frac{(\epsilon/\sqrt{2})^{K-1}}{(K-1)!}$ \cite{stein1966anote}. (Note that the probability simplex $\mathcal{X}$ intersects the co-ordinate axes at ${\bm e}_k$ for every $k$). The inequality in Eq.\eqref{eqvol_2} uses the geometric property that the volume of $\mathcal{X}\cap \mathcal{B}({\bm e}_k,\epsilon)$ is greater than the volume of $\mathcal{X}_{\epsilon}$ (see Fig. \ref{fig:simpedge}). 

Let us define the random variable $U = \sum_{i=1}^s J_i$. 
Then, 
\begin{align}
\mathbb{E}[U] &=  \mathbb{E}[\sum_{i=1}^s J_i] = \sum_{i=1}^s\mathbb{E}[ J_i]\\
&= \sum_{i=1}^s {\sf Pr}(J_i=1) = s {\sf Pr}(J_i=1)\\
&\ge s\left(\frac{\epsilon}{\sqrt{2}}\right)^{K-1} \label{eqexp1}
\end{align}

Now, if there exists at least one sample from set $\mathcal{P}$ which is in the $\epsilon$-neighbourhood of $k$-th vertex, ie the event $J_i$ happens at least once, then $U = \sum_{i=1}^s J_i\ge 1$. We are interested in finding the below probability,
\begin{align}
    {\sf Pr}(U \ge 1) &= 1- {\sf Pr}(U < 1)\\
    &= 1- {\sf Pr}(U \le 0)\\
    &= 1- {\sf Pr}(U = 0)
\end{align}

So, our goal boils down to finding ${\sf Pr}(U \le 0)$ and we will achieve this using Chernoff-Hoeffding bound.
\begin{Lemma} \label{lem:chernoff} \cite{stephane2004concentration}
 Let $J_1,\dots,J_s$ be independent bounded random variables such
that $J_i$ falls in the interval $[a_i,b_i]$ with probability one and let $U=\sum_{i=1}^s J_i$. Then for any $t > 0$,
\begin{align}
    {\sf Pr}(U - \mathbb{E}[U] \le -t) \le e^{-2t^2/\sum_{i=1}^s (b_i-a_i)^2}
\end{align}
\end{Lemma}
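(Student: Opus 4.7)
The plan is to use the classical Chernoff (exponential moment) method combined with Hoeffding's lemma for bounded random variables. Since the desired inequality is a lower-tail bound, I would first rewrite the event as
\[
{\sf Pr}(U - \mathbb{E}[U] \le -t) = {\sf Pr}\!\left(e^{-\lambda(U - \mathbb{E}[U])} \ge e^{\lambda t}\right)
\]
for a parameter $\lambda > 0$ to be chosen later, and apply Markov's inequality to the nonnegative random variable on the right, obtaining ${\sf Pr}(U - \mathbb{E}[U] \le -t) \le e^{-\lambda t}\, \mathbb{E}[e^{-\lambda(U - \mathbb{E}[U])}]$.

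The next step is to invoke the independence of $J_1, \ldots, J_s$ to factorize the moment generating function, yielding
\[
\mathbb{E}\!\left[e^{-\lambda(U - \mathbb{E}[U])}\right] = \prod_{i=1}^s \mathbb{E}\!\left[e^{-\lambda(J_i - \mathbb{E}[J_i])}\right].
\]
To each factor I would apply Hoeffding's lemma: for any centered random variable $Z$ almost surely contained in an interval of length $\ell$, $\mathbb{E}[e^{\mu Z}] \le e^{\mu^2 \ell^2 / 8}$. Since $-(J_i - \mathbb{E}[J_i])$ is centered and lies in an interval of length $b_i - a_i$, this gives $\mathbb{E}[e^{-\lambda(J_i - \mathbb{E}[J_i])}] \le e^{\lambda^2 (b_i - a_i)^2 / 8}$.

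Combining the two bounds, I obtain ${\sf Pr}(U - \mathbb{E}[U] \le -t) \le \exp\!\left(-\lambda t + (\lambda^2/8) \sum_{i=1}^s (b_i - a_i)^2\right)$. The final step is to optimize the exponent over $\lambda > 0$; this is a one-dimensional quadratic minimization giving $\lambda^{\star} = 4t / \sum_{i=1}^s (b_i - a_i)^2$, and substituting back yields the claimed tail bound $\exp\!\left(-2t^2 / \sum_{i=1}^s (b_i - a_i)^2\right)$.

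The only nontrivial ingredient is Hoeffding's lemma itself, which would be the main obstacle were one not allowed to cite it; the standard argument exploits convexity of the exponential to dominate $\mathbb{E}[e^{\mu Z}]$ by the linear interpolation between the endpoints of the support, followed by a short Taylor analysis of the resulting log-MGF whose second derivative is bounded by $\ell^2 / 4$. Given Hoeffding's lemma as a black box, the remainder of the proof is purely algebraic and proceeds exactly as above.
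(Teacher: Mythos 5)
Your proof is correct: it is the standard Chernoff--Hoeffding argument (exponentiate, apply Markov's inequality, factor the moment generating function by independence, bound each factor via Hoeffding's lemma, and optimize over $\lambda$), and the algebra checks out, including the optimal choice $\lambda^{\star}=4t/\sum_{i=1}^{s}(b_i-a_i)^2$ yielding the exponent $-2t^2/\sum_{i=1}^{s}(b_i-a_i)^2$. The paper does not prove this lemma at all --- it cites it as a known result from the Boucheron--Lugosi lecture notes --- so there is nothing internal to compare against; your argument is precisely the canonical proof found in that source.
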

It follows that
\begin{align}
    {\sf Pr}(U  \le \mathbb{E}[U]-t) \le e^{-2t^2/\sum_{i=1}^s (b_i-a_i)^2}
\end{align}

By assigning $\mathbb{E}[U]-t=0$, we get $t=\mathbb{E}[U]$ . Also, notice that in our case $b_i=1$, $a_i=0$, then
\begin{align}
    {\sf Pr}(U  \le 0) &\le e^{-\frac{2\mathbb{E}[U]^2}{s}} \\
    &\le e^{-\frac{s\epsilon^{2(K-1)}}{2^{K-2}}} \label{chernoff1}
\end{align}
Eq. \eqref{chernoff1} is obtained by using the inequality \eqref{eqexp1} and implies that, the probability such that the uniform sample $\mathcal{P}$ does not contain any points from $k$-th vertex is less than $e^{-\frac{s\epsilon^{2(K-1)}}{2^{K-2}}}$. 

Now we have to find the corresponding probability that considers all the $K$ vertices.

For this, let us define events $E_k$ as follows,
\begin{align}
    E_k = \{\text{There exists no point }{\bm p}\text{ in the uniform sample set }\mathcal{P}\text{ such that }{\bm p} \in \mathcal{Q}_{k}(\epsilon) \}
\end{align}

From Eq. \eqref{chernoff1}, it is clear that ${\sf Pr}(E_k) \le e^{-\frac{s\epsilon^{2(K-1)}}{2^{K-2}}} $. Since the points are uniformly sampled from the probability simplex $\mathcal{X}$, this bound is applicable for all $k \in \{1,\dots,K\}$.

Now let us define the event $E$ as below
\begin{align*}
    E = \{  \text{there exists at least one point in the uniform sample set }\mathcal{P}\text{ such that }{\bm p} \in \mathcal{Q}_{k}(\epsilon)\text{ for each }k \}
\end{align*}

We can observe that $E =\bigcap_{k=1}^K\overline{E_k} $ where $\overline{E_k}$ is the complement of the event $E_k$.

 Therefore,
\begin{align}
    {\sf Pr}(E) ={\sf Pr}\left(\bigcap_{k=1}^K\overline{E_k}\right) &= {\sf Pr}(\overline{\cup_{k=1}^K E_k}) \nonumber \\
    &= 1- {\sf Pr}\left(\bigcup_{k=1}^K E_k\right) \nonumber \\
    &\ge 1-\sum_{k}{\sf Pr}(E_k)\nonumber \\
    &\ge 1-K e^{-\frac{s\epsilon^{2(K-1)}}{2^{K-2}}} \label{eventprob1}
\end{align}

Eq. \eqref{eventprob1} implies that with probability greater than or equal to $1-K e^{-\frac{s\epsilon^{2(K-1)}}{2^{K-2}}}$, the points from the $\epsilon$-neighbourhood of all the vertices are contained by set $\mathcal{P}$.

If $s$ represents the number of rows in $\overline{\bm H}_m$, then for $s \ge \frac{2^{K-2}}{\epsilon^{2(K-1)}}{\rm log}\big(\frac{K}{\rho}\big)$ , with probability at least $1-\rho$, a uniform sample from the probability simplex $\mathcal{X}$ will contain $\epsilon$-near-vertex points of all the $K$ vertices. Note that $s=(M-1)K$ where $M$ is the number of annotators. This provides a bound on the number of annotators needed. 
Specifically, if there exists at least  $1+  \frac{2^{K-2}}{K\epsilon^{2(K-1)}}{\rm log}\big(\frac{K}{\rho}\big)$ annotators, then we have the conclusion of Theorem~\ref{thm:Lm}

\section{Proof of Theorem~\ref{thm:couple}}\label{app:thm3}
To show this theorem, we will use the following Lemma:
\begin{Lemma}\label{lem:huang}\cite{huang2014non}
	Consider a matrix factorization model $\bm R=\bm P_1\bm P_2^\T$, where $\bm R\in\mathbb{R}^{M\times N}$, $\bm P_1\in\mathbb{R}^{M\times K}$, $\bm P_2\in\mathbb{R}^{N\times K}$, and ${\rm rank}(\bm P_1)={\rm rank}(\bm P_2)=K$. If $\bm P_i\geq \bm 0$ for $i=1,2$ and both $\bm P_1$ and $\bm P_2$ are sufficiently scattered, we have any $\widehat{\bm P}_1\geq \bm 0$ and $\widehat{\bm P}_2\geq\bm 0$ that satisfy $\bm R=\widehat{\bm P}_1\widehat{\bm P}_2^\T$ must have the following form
	\begin{equation}
	\widehat{\bm P}_1 =\bm P_1\bm \Pi\bm \Sigma,~	\widehat{\bm P}_2 =\bm P_2\bm \Pi\bm \Sigma^{-1},
	\end{equation}
	where $\bm \Pi$ is a permutation matrix and $\bm \Sigma^{-1}$ is a diagonal nonnegative singular matrix.
\end{Lemma}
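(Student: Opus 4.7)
The plan is to reduce the uniqueness question to a question about a single invertible $K\times K$ matrix $\bm Q$ that relates the two factorizations, and then use the two clauses of the sufficiently scattered condition together with Hadamard's inequality and AM--GM to force $\bm Q$ to be a scaled permutation.

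First I would set up the change of basis. Since $\text{rank}(\bm R)=K$ and both $\bm P_1,\widehat{\bm P}_1\in\mathbb{R}^{M\times K}$ have rank $K$, their column spaces agree with that of $\bm R$; the analogous statement holds for the second factor. Consequently there exists an invertible $\bm Q\in\mathbb{R}^{K\times K}$ with $\widehat{\bm P}_1=\bm P_1\bm Q$ and $\widehat{\bm P}_2=\bm P_2\bm Q^{-\T}$. Write $\bm Q=[\bm q_1,\ldots,\bm q_K]$ and denote by $\bm g_k^\T$ the rows of $\bm Q^{-1}$. The constraints $\widehat{\bm P}_1\geq\bm 0$ and $\widehat{\bm P}_2\geq\bm 0$ translate, row by row, into $\bm P_1(i,:)\bm q_k\geq 0$ and $\bm P_2(j,:)\bm g_k\geq 0$ for all indices, i.e.\ $\bm q_k\in\mathcal{K}_1^{*}$ and $\bm g_k\in\mathcal{K}_2^{*}$, where $\mathcal{K}_i:=\cone{\bm P_i^{\T}}$. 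The first clause of the sufficiently scattered condition, $\mathcal{K}_i\supseteq\mathcal{C}$, dualises to $\mathcal{K}_i^{*}\subseteq\mathcal{C}^{*}$, so in particular every $\bm q_k$ and every $\bm g_k$ satisfies $\bm x^\T\bm 1\geq\|\bm x\|_2$.

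The main obstacle is showing that each $\bm q_k$ must actually lie on $\bd\mathcal{C}^{*}$; once this is done the second clause of the sufficiently scattered condition finishes the proof. To achieve this I would combine two classical inequalities. Let $u_k=\bm q_k^\T\bm 1$ and $v_k=\bm g_k^\T\bm 1$, both strictly positive because $\bm q_k,\bm g_k\in\mathcal{C}^{*}\setminus\{\bm 0\}$. Note that $\bm 1^\T\bm I\bm 1=K$ together with $\bm Q\bm Q^{-1}=\bm I$ yields $\sum_{k}u_k v_k=K$. Applying Hadamard's inequality to the columns of $\bm Q$ and to the rows of $\bm Q^{-1}$ gives
\begin{equation*}
|\det\bm Q|\leq\prod_{k}\|\bm q_k\|_2\leq\prod_{k}u_k,\qquad
|\det\bm Q^{-1}|\leq\prod_{k}\|\bm g_k\|_2\leq\prod_{k}v_k.
\end{equation*}
Multiplying these and using $|\det\bm Q||\det\bm Q^{-1}|=1$ produces $1\leq\prod_{k}u_k v_k$, while AM--GM applied to the positive numbers $u_k v_k$ summing to $K$ gives $\prod_{k}u_k v_k\leq(K/K)^K=1$. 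Hence every inequality must be an equality. The equality $u_k=\|\bm q_k\|_2$ means precisely that $\bm q_k\in\bd\mathcal{C}^{*}$, and similarly $\bm g_k\in\bd\mathcal{C}^{*}$.

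Finally I would invoke the second clause of the sufficiently scattered condition applied to $\bm P_1$: $\mathcal{K}_1^{*}\cap\bd\mathcal{C}^{*}=\{\lambda\bm e_k:\lambda\geq 0,\ k=1,\ldots,K\}$. Combined with $\bm q_k\in\mathcal{K}_1^{*}\cap\bd\mathcal{C}^{*}$, this forces $\bm q_k=\sigma_k\bm e_{\pi(k)}$ for some $\sigma_k>0$ and some map $\pi:\{1,\ldots,K\}\to\{1,\ldots,K\}$; the invertibility of $\bm Q$ then ensures that $\pi$ is a permutation. Writing $\bm Q=\bm\Pi\bm\Sigma$ with $\bm\Pi$ the permutation matrix encoding $\pi$ and $\bm\Sigma=\mathrm{Diag}(\sigma_1,\ldots,\sigma_K)$, a direct computation gives $\bm Q^{-\T}=\bm\Pi\bm\Sigma^{-1}$, so $\widehat{\bm P}_1=\bm P_1\bm\Pi\bm\Sigma$ and $\widehat{\bm P}_2=\bm P_2\bm\Pi\bm\Sigma^{-1}$, as claimed. (One can optionally verify consistency with the second clause for $\bm P_2$ applied to $\bm g_k$, which yields the same permutation and the inverse scaling, matching $u_k v_k=1$.) The delicate step is the coupling between $\bm Q$ and $\bm Q^{-1}$ through the Hadamard/AM--GM chain; once this tight chain is available, the geometric clause of sufficient scattering does the rest.
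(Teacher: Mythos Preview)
Your argument is correct and is essentially the proof from \cite{huang2014non}; the paper itself does not prove this lemma but simply quotes it as an established result from that reference and uses it as a black box in the proof of Theorem~\ref{thm:couple}. So there is nothing to compare against in the paper, but the Hadamard/AM--GM chain you wrote down, together with the dual-cone reading of the two clauses in Definition~\ref{defsuf}, is exactly the standard route to this NMF identifiability statement.
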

Lemma~\ref{lem:huang} addresses the identifiability of a conventional nonnegative matrix factorization (NMF) model.
Simply speaking, if both latent factors of $\bm R=\bm P_1\bm P_2^\T$ are sufficiently scattered, then the NMF of $\bm R$ is unique up to column permutation and scaling of the latent factors.

Now we start proving Theorem~\ref{thm:couple}.
Let us consider the following matrix $\bm R$:
\begin{equation*}
\bm R=\begin{bmatrix}
\bm R_{m_1,\ell_1}&\bm R_{m_1,\ell_2}&\ldots&\bm R_{m_1,\ell_{|{\cal P}_2|}}\\
\vdots &\vdots&\ldots&\vdots\\
\bm R_{m_{|{\cal P}_1|},\ell_1}&\bm R_{m_1,\ell_2}&\ldots&\bm R_{m_{|{\cal P}_2|},\ell_{|{\cal P}_2|}}\\
\end{bmatrix}
\end{equation*}
It is readily seen that
\begin{align*}
\bm R &= \begin{bmatrix}
\bm A_{m_1}\\
\vdots\\
\bm A_{m_{|{\cal P}_1|}}.
\end{bmatrix}\bm D [\bm A_{\ell_1}^\T,\ldots,\bm A_{\ell_{|{\cal P}_2|}}^\T]\\
&=\bm H^{(1)}\bm D(\bm H^{(2)})^\T.
\end{align*}
It suffices to show that $\bm R=\bm H^{(1)}\bm D(\bm H^{(2)})^\T$ is unique up to column permutations of $\bm H^{(i)}$ for $i=1,2$.
The reason is that if such uniqueness holds, then $\bm A_m$ for $m\notin{\cal P}_1\cup {\cal P}_2$ can be identified with via solving
\[  \bm R_{m,r} =\bm A_m \bm D\bm A_r^\T,~r\in{\cal P}_1\cup {\cal P}_2, \]
up to the same column permutation.

Note that since $\bm D$ and $\bm A_r$ have been identified from $\bm R$, solving $\bm A_m$ amounts to solving a system of linear equations, which has a unique solution under ${\rm rank}(\bm D)={\rm rank}(\bm A_m)=K$.
Under the assumption that all $m\notin {\cal P}_1\cup {\cal P}_2$ are connected to a certain $r\in {\cal P}_1\cup {\cal P}_2$ via ${\bm R}_{m,r}$, all the $\bm A_m$'s for $m\notin{\cal P}_1\cup {\cal P}_2$ can be identified up to the same column permutation.

To show the identifiability of $\bm R=\bm H^{(1)}\bm D(\bm H^{(2)})^\T$, consider re-writing the right hand side as
\begin{align*}
\bm R &=\underbrace{\bm H^{(1)}\bm D^{1/2}}_{\bm P_1} \underbrace{ \bm D^{1/2}(\bm H^{(2)})^\T}_{\bm P_2^\T}  \\
& = \bm P_1\bm P_2^\T.
\end{align*}     
It suffices to show that $\bm P_1$ and $\bm P_2$ are unique up to column permutation and scaling, since the constraints on $\bm 1^\T\bm A_m=\bm 1^\T(\Rightarrow \bm 1^\T\bm H^{(i)} = |{\cal P}_i|)$ removes the scaling ambiguity.

Assume that there is an alternative solution $\bm R =\widehat{\bm P}_1\widehat{\bm P}_2^\T$. By the fact ${\rm rank}(\bm A_m)=K$, we have 
\[  {\rm rank}(\bm H^{(i)})=K \Rightarrow  {\rm rank}(\bm P_i)=K. \]
Note that $\bm H^{(1)}$ and $\bm H^{(2)}$ are both sufficiently scattered. This directly implies that both $\bm P_1$ and $\bm P_2$ are sufficiently scattered,
since column scaling of $\bm H^{(1)}$ and $\bm H^{(2)}$ does not affect the cone of their respective transposes, i.e.,
\[   \cone{(\bm H^{(i)})^\T} =   \cone{(\bm H^{(i)}\bm \Sigma)^\T}  \]
for any nonnegative, nonsingular and diagonal $\bm \Sigma$.

Then, by Lemma~\ref{lem:huang}, it must hold that
\[   	\widehat{\bm P}_1 =\bm H^{(1)}\bm D^{1/2}\bm \Pi\bm \Sigma,~	\widehat{\bm P}_2 =\bm H^{(2)}\bm D^{1/2}\bm \Pi\bm \Sigma^{-1},   \]
for a certain $\bm \Pi$ and $\bm \Sigma$.
Nevertheless, $\bm \Sigma$ is automatically removed by the constraints $\bm 1^\T\bm A_m=\bm 1^\T$.

\section{Proof of Theorem~\ref{thm:SSM}} \label{app:thm4}

Let $\overline{\bm H}$ represent the row normalized matrix of ${\bm H}$. Geometrically,   $\overline{\bm H}$ can be viewed as the projection of the rows of ${\bm H}$ onto the $(K-1)$-probability simplex. (cf. Fig.~\ref{fig:coneplot}). 

The work \cite{lin2014identifiability} provides a characterization to the spread of the rows of $\overline{\bm H}$ in the probability simplex using a measure called as uniform pixel purity level (named in the context of hyperspectral imaging). The purity level is denoted by $\gamma$ and defined as follows: 
\begin{align}
    \gamma = \text{sup}\{r \le 1 | \mathcal{R}(r) \subseteq \text{conv}\{\overline{\bm H}^{\top} \}               \}
\end{align}
where 
\begin{align}
    \mathcal{R}(r) = \{\x \in \mathbb{R}^K | \|\x\|_2 \le r\} \cap \text{conv}\{{\bm e}_1,\dots,{\bm e}_K\}
\end{align}

Geometrically, $\mathcal{R}(\gamma)$ is the `largest' $\mathcal{R}(r)$ that can be inscribed inside $\text{conv}\{\overline{\bm H}^{\top}\}$. There is an interesting link between $\gamma$ and the sufficiently scattered condition shown in \cite{fu2016robust}:
\begin{Lemma}\label{lem:pure}\cite{fu2016robust}
Assume $K \ge 3$ holds. If $\gamma \ge \frac{1}{\sqrt{K-1}}$, then $\overline{\H}$ is sufficiently scattered.
\end{Lemma}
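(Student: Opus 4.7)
The plan is to translate the geometric hypothesis $\gamma \ge 1/\sqrt{K-1}$ into the two cone-theoretic requirements of Definition~\ref{defsuf}. The key preliminary observation is that $\mathcal{C}$ is precisely the conical hull of the ball-cap $\mathcal{R}(1/\sqrt{K-1})$. To see this, I would verify $\mathcal{C} \cap \{\x : \bm 1^\T \x = 1\} = \mathcal{R}(1/\sqrt{K-1})$, which reduces to showing that any $\x$ satisfying $\bm 1^\T \x = 1$ and $\|\x\|_2 \le 1/\sqrt{K-1}$ must be nonnegative. If $x_{i_0} < 0$, applying Cauchy--Schwarz to the coordinates $i \ne i_0$ gives $(1 - x_{i_0})^2 \le (K-1)(\|\x\|_2^2 - x_{i_0}^2) \le 1 - (K-1)x_{i_0}^2$, which rearranges to $x_{i_0}(Kx_{i_0} - 2) \le 0$, forcing $x_{i_0} \ge 0$, a contradiction. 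Condition (1) then follows in one line: the hypothesis gives $\mathcal{R}(1/\sqrt{K-1}) \subseteq \conv{\overline{\bm H}^\T} \subseteq \cone{\bm H^\T}$, so taking conical hulls yields $\mathcal{C} = \cone{\mathcal{R}(1/\sqrt{K-1})} \subseteq \cone{\bm H^\T}$.

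For Condition (2), I need to show $\cone{\bm H^\T}^* \cap \bd\mathcal{C}^* = \{\lambda \bm e_k : \lambda \ge 0\}$. The inclusion $\supseteq$ is immediate: $\bm H \ge \bm 0$ gives $\mathbb{R}^K_+ \subseteq \cone{\bm H^\T}^*$, and each $\bm e_k$ lies on $\bd\mathcal{C}^*$ since $\bm 1^\T \bm e_k = 1 = \|\bm e_k\|_2$. For $\subseteq$, suppose $\bm y \in \cone{\bm H^\T}^* \cap \bd\mathcal{C}^*$, so $\bm H \bm y \ge \bm 0$ and $\bm 1^\T \bm y = \|\bm y\|_2$. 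If $\bm y \ge \bm 0$, squaring the boundary equality gives $(\sum_i y_i)^2 = \sum_i y_i^2$, hence $\sum_{i \ne j} y_i y_j = 0$; with nonnegativity, each cross term $y_i y_j$ must vanish, so $\bm y$ is supported on a single coordinate and therefore equals $\lambda \bm e_k$.

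The remaining case $\bm y \in \bd\mathcal{C}^*$ with at least one strictly negative coordinate is the main obstacle. My plan is to exploit the self-dual SOC geometry: the polar point $\x^\star \propto \bm 1 - \bm y/\|\bm y\|_2$ lies on $\bd\mathcal{C}$ and satisfies $\bm y^\T \x^\star = 0$, providing a supporting hyperplane $\{\x : \bm y^\T \x = 0\}$ for $\mathcal{C}$ at $\x^\star$. Because $\cone{\bm H^\T}$ is polyhedral while $\mathcal{C}$ is a smooth SOC for $K \ge 3$, the containment $\mathcal{C} \subseteq \cone{\bm H^\T}$ forces the polytope $\conv{\overline{\bm H}^\T}$ to extend strictly beyond $\mathcal{R}(1/\sqrt{K-1})$ somewhere, and I would argue that this excess must cross into the open half-space $\{\x : \bm y^\T \x < 0\}$, yielding a row $\overline{\bm h}_j$ with $\bm y^\T \overline{\bm h}_j < 0$ and contradicting $\bm H \bm y \ge \bm 0$. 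The hard part is making this last step airtight under the nonstrict hypothesis $\gamma \ge 1/\sqrt{K-1}$, where $\conv{\overline{\bm H}^\T}$ may merely be tangent to $\mathcal{R}(1/\sqrt{K-1})$ from outside. I expect this is where the finer vertex/tangency analysis from \cite{fu2016robust} (ruling out degenerate polytopes whose supporting hyperplane aligns with $\{\x : \bm y^\T \x = 0\}$) is needed.
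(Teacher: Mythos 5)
First, a point of reference: the paper does not prove this lemma at all---it is imported verbatim from \cite{fu2016robust}---so there is no in-paper argument to compare yours against, and I am judging the proposal on its own terms. Your handling of condition (1) is correct and complete: the identification $\mathcal{C}\cap\{\x:\bm 1^\T\x=1\}=\mathcal{R}(1/\sqrt{K-1})$ via the Cauchy--Schwarz nonnegativity argument, hence $\mathcal{C}=\cone{\mathcal{R}(1/\sqrt{K-1})}\subseteq\cone{\overline{\bm H}^\T}$, is exactly right, as is the branch of condition (2) with $\bm y\geq\bm 0$.

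The gap you flag at the end, however, is not a technicality you can expect the reference to paper over: under the non-strict hypothesis $\gamma\geq 1/\sqrt{K-1}$ as quoted here, the conclusion is simply \emph{false}, so no tangency analysis will close it. Take $K=3$ and $\overline{\bm H}$ with rows $\bm e_1^\T$, $\bm e_2^\T$, $(1/3,0,2/3)$, $(0,1/3,2/3)$. Then $\conv{\overline{\bm H}^\T}=\{\x\in\Delta: x_3\leq 2/3\}$ where $\Delta$ is the probability simplex, and a direct computation shows $\max\{x_3:\x\in\Delta,\ \|\x\|_2\leq 1/\sqrt{2}\}=2/3$ (attained at $(1/6,1/6,2/3)$), so $\mathcal{R}(1/\sqrt{2})\subseteq\conv{\overline{\bm H}^\T}$ and $\gamma=1/\sqrt{K-1}$ exactly. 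Yet $\bm y=(2,2,-1)^\T$ satisfies $\bm 1^\T\bm y=3=\|\bm y\|_2$, hence $\bm y\in\bd\,\mathcal{C}^*$, and $\overline{\bm H}\bm y=(2,2,0,0)^\T\geq\bm 0$, hence $\bm y\in\cone{\overline{\bm H}^\T}^*$, while $\bm y$ is not a multiple of any $\bm e_k$: condition (2) fails. The result in \cite{fu2016robust} in fact requires the \emph{strict} inequality $\gamma>1/\sqrt{K-1}$ (the non-strict version only holds in the converse direction); the present paper misquotes it. Under strictness your own strategy closes cleanly and needs no smooth-versus-polyhedral reasoning. Normalize $\|\bm y\|_2=1$ so $\bm 1^\T\bm y=1$, and set $\overline{\x}^\star=(\bm 1-\bm y)/(K-1)$. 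Then $\bm 1^\T\overline{\x}^\star=1$, $\|\overline{\x}^\star\|_2=1/\sqrt{K-1}$, $\bm y^\T\overline{\x}^\star=0$, and $\overline{\x}^\star_i=(1-y_i)/(K-1)>0$ for all $i$ whenever $\bm y$ is not some $\bm e_k$. Since $\gamma>1/\sqrt{K-1}$ gives an $r>1/\sqrt{K-1}$ with $\mathcal{R}(r)\subseteq\conv{\overline{\bm H}^\T}$, a relatively open neighborhood of $\overline{\x}^\star$ in the affine hull of $\Delta$ lies in $\mathcal{R}(r)$; the functional $\x\mapsto\bm y^\T\x$ is nonconstant on that affine hull (as $\bm y\not\propto\bm 1$ for $K\geq 2$) and vanishes at $\overline{\x}^\star$, so some $\x'\in\mathcal{R}(r)\subseteq\conv{\overline{\bm H}^\T}$ has $\bm y^\T\x'<0$, contradicting $\overline{\bm H}\bm y\geq\bm 0$. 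So: correct the hypothesis to $\gamma>1/\sqrt{K-1}$, then finish the negative-coordinate case with this neighborhood argument.
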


In general cases, it is hard to check if $\gamma \ge \frac{1}{\sqrt{K-1}}$ holds \cite{huang2014non}.
However, in \cite{lin2014identifiability}, a sufficient condition for $\gamma \ge \frac{1}{\sqrt{K-1}}$ is derived:

\begin{Lemma}\label{lem:cond}\cite{lin2014identifiability}
 Suppose the following assumption holds true: for every $k,k^{'} \in \{1,\dots,K\}, k\neq k^{'}$, there exist a row index $q_{k k^{'}}$ in $\overline{{\bm H}}$ such that
\begin{align}
\overline{{\bm H}}(q_{k k^{'}},:) =  \alpha_{k k^{'}}{\bm e}_k^{\top}  +(1-\alpha_{k k^{'}}){\bm e}_{k^{'}}^{\top} \label{assump}
\end{align}
where $\frac{1}{2} < \alpha_{k k^{'}} < 1$ for $K\ge 4$, $\frac{2}{3} < \alpha_{k k^{'}} < 1$ for $K = 3$ and $\alpha_{k k^{'}} = 1$, for $K=2$ . Then $\gamma \ge \frac{1}{\sqrt{K-1}}$ and by Lemma \ref{lem:pure}, $\overline{\bm H}$ is sufficiently scattered.
\end{Lemma}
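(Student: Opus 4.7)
My plan is to reduce the statement to a one-dimensional extremal problem via support functions and then find the worst-case direction. By the definition of $\gamma$, the conclusion is equivalent to $\mathcal{R}(1/\sqrt{K-1}) \subseteq \text{conv}\{\overline{\bm H}^\top\}$. Writing $\bm v_{kk'} := \alpha_{kk'}\bm e_k + (1-\alpha_{kk'})\bm e_{k'}$ and $P := \text{conv}\{\bm v_{kk'} : k \neq k'\}$, the hypothesis yields $P \subseteq \text{conv}\{\overline{\bm H}^\top\}$, so it suffices to show $\mathcal{R}(1/\sqrt{K-1}) \subseteq P$. Both sets live in the affine hyperplane $\{\bm x : \bm 1^\top \bm x = 1\}$, so by the support-function criterion this inclusion is equivalent to $h_{\mathcal{R}(1/\sqrt{K-1})}(\bm n) \leq h_P(\bm n)$ for every direction $\bm n$; adding a multiple of $\bm 1$ shifts both sides equally, so I may restrict to $\bm 1^\top \bm n = 0$ and, by homogeneity, $\|\bm n\|_2 = 1$.

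Next I compute (or bound) the two support functions. For the LHS, dropping the non-negativity constraint yields the Euclidean-ball upper bound $h_{\mathcal{R}(1/\sqrt{K-1})}(\bm n) \leq \sqrt{1/(K-1) - 1/K} = 1/\sqrt{K(K-1)}$. For the RHS, $h_P(\bm n) = \max_{k\neq k'}[\alpha_{kk'} n_k + (1-\alpha_{kk'}) n_{k'}]$; relabeling so $n_1 \geq \cdots \geq n_K$ and using $\alpha_{kk'} > 1/2$, the maximum is attained at $(k,k') = (1,2)$ and is at least $\alpha n_1 + (1-\alpha) n_2$, where $\alpha$ denotes the infimum of the admissible $\alpha_{kk'}$ in each $K$-regime. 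The task thus reduces to proving
\[
\min\Bigl\{\alpha n_1 + (1-\alpha) n_2 \,:\, \textstyle\sum_i n_i = 0,\ \sum_i n_i^2 = 1,\ n_1\geq\cdots\geq n_K\Bigr\}\;\geq\;\frac{1}{\sqrt{K(K-1)}}.
\]

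To solve this extremal problem I would use the ordering constraint to bound $n_2$ from below: since $n_3, \ldots, n_K \leq n_2$ and $\sum_{i\geq 3} n_i = -(n_1 + n_2)$, one gets $n_2 \geq -n_1/(K-1)$, with equality iff $n_2 = n_3 = \cdots = n_K$. Substituting this tight boundary gives $\alpha n_1 + (1-\alpha) n_2 \geq n_1(\alpha K - 1)/(K-1)$, and on this boundary the normalization $\sum n_i^2 = 1$ forces $n_1 = \sqrt{(K-1)/K}$, yielding the explicit value $(\alpha K - 1)/\sqrt{K(K-1)}$. This meets the target $1/\sqrt{K(K-1)}$ precisely when $\alpha K \geq 2$, i.e.\ $\alpha \geq 2/K$, reproducing the stated thresholds: $\alpha = 1$ for $K=2$ (where the problem is one-dimensional), $\alpha > 2/3$ for $K=3$, and $\alpha > 1/2 \geq 2/K$ for $K \geq 4$. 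A complementary check on the opposite boundary $n_1 = \cdots = n_{K-1} \neq n_K$ (forcing $n_1 = 1/\sqrt{K(K-1)}$) gives value $1/\sqrt{K(K-1)}$ independently of $\alpha$, confirming tightness. Invoking Lemma~\ref{lem:pure} then delivers sufficient scatteredness.

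The main obstacle I anticipate is justifying that intermediate tie patterns of the form $n_2 = \cdots = n_p \neq n_{p+1} = \cdots = n_K$ for $2 < p < K$ cannot produce a value below the target. Either a KKT enumeration over such tie-patterns—each reducing to a two-block configuration whose value one computes explicitly as $\sqrt{(K-p)/(pK)}$, readily verified to exceed $1/\sqrt{K(K-1)}$ for all $p < K$—or a rearrangement/convexity argument on the ordered cone, is needed to rule them out. Once this bookkeeping is handled, the remaining content is the algebraic inequality $\alpha K \geq 2$, and the case split across $K = 2, 3, \geq 4$ is simply the regime in which this lower bound on $\alpha$ is, respectively, $1$, $2/3$, and at most $1/2$.
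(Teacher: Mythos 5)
First, a point of reference: the paper does not prove this lemma at all---it is imported verbatim from \cite{lin2014identifiability} and used as a black box in the proof of Theorem~\ref{thm:SSM}---so there is no in-paper argument to compare yours against, and I am assessing your proof on its own terms. Your architecture is sound and isolates the right content. The equivalence $\gamma\ge 1/\sqrt{K-1}\Leftrightarrow \mathcal{R}(1/\sqrt{K-1})\subseteq \text{conv}\{\overline{\bm H}^\top\}$, the reduction to $P=\text{conv}\{\bm v_{kk'}\}$, and the support-function criterion over unit $\bm n$ with $\bm 1^\top\bm n=0$ are all valid. (Your ``Euclidean-ball upper bound'' is in fact an equality: $\mathcal{R}(1/\sqrt{K-1})$ is exactly the inscribed ball of the simplex, of radius $1/\sqrt{K(K-1)}$ about $\bm 1/K$, since that is the distance from the centroid to each facet.) The bound $h_P(\bm n)\ge \alpha n_{(1)}+(1-\alpha)n_{(2)}$ with $\alpha=\min_{k\ne k'}\alpha_{kk'}$ is correct---your parenthetical claim that the maximum is attained at $(1,2)$ is neither needed nor literally true when the $\alpha_{kk'}$ differ, but the inequality you actually use holds---and the governing condition $\alpha K\ge 2$ does reproduce the three stated regimes (for $K=2$ one should bypass Lemma~\ref{lem:pure}, which assumes $K\ge 3$, and note that $\alpha=1$ gives separability directly).

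The gap you flag is genuine, and the patch you sketch does not close it. The dangerous configurations in your extremal problem are exactly those with $n_1>n_2$, where the objective $\alpha n_1+(1-\alpha)n_2$ is \emph{not} a common block value; your enumeration of two-block tie patterns with value $\sqrt{(K-p)/(pK)}$ presumes $n_1$ and $n_2$ lie in the same block, and therefore misses the very configuration $n_2=\cdots=n_K=-n_1/(K-1)$ that produces your binding value $(\alpha K-1)/\sqrt{K(K-1)}$, as well as all three-block patterns $n_1>n_2=\cdots=n_p>n_{p+1}=\cdots=n_K$ (which is what the KKT conditions actually generate here, since the objective involves two distinguished coordinates). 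A clean way to finish is a one-dimensional reduction: for fixed $n_1=a$ the smallest feasible $n_2$ is the larger root $m^*(a)=-\tfrac{a}{K-1}+\tfrac{1}{(K-1)\sqrt{K-2}}\sqrt{(K-1)-Ka^2}$ of the quadratic expressing that $K-1$ numbers, all at most $n_2$, can have sum $-a$ and sum of squares $1-a^2$. Since $\sqrt{(K-1)-Ka^2}$ is concave in $a$, so is $g(a)=\alpha a+(1-\alpha)m^*(a)$, hence its minimum over the feasible interval $a\in[1/\sqrt{K(K-1)},\,\sqrt{(K-1)/K}]$ is attained at an endpoint---and the two endpoints are precisely the two configurations you evaluate, with values $1/\sqrt{K(K-1)}$ and $(\alpha K-1)/\sqrt{K(K-1)}$. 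With that step supplied your proof is complete and correct; as written, it is incomplete at exactly the point you identify, and the repair you propose would not succeed in the form stated.
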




\begin{figure}[h]
	\centering
	\includegraphics[width=0.6\linewidth]{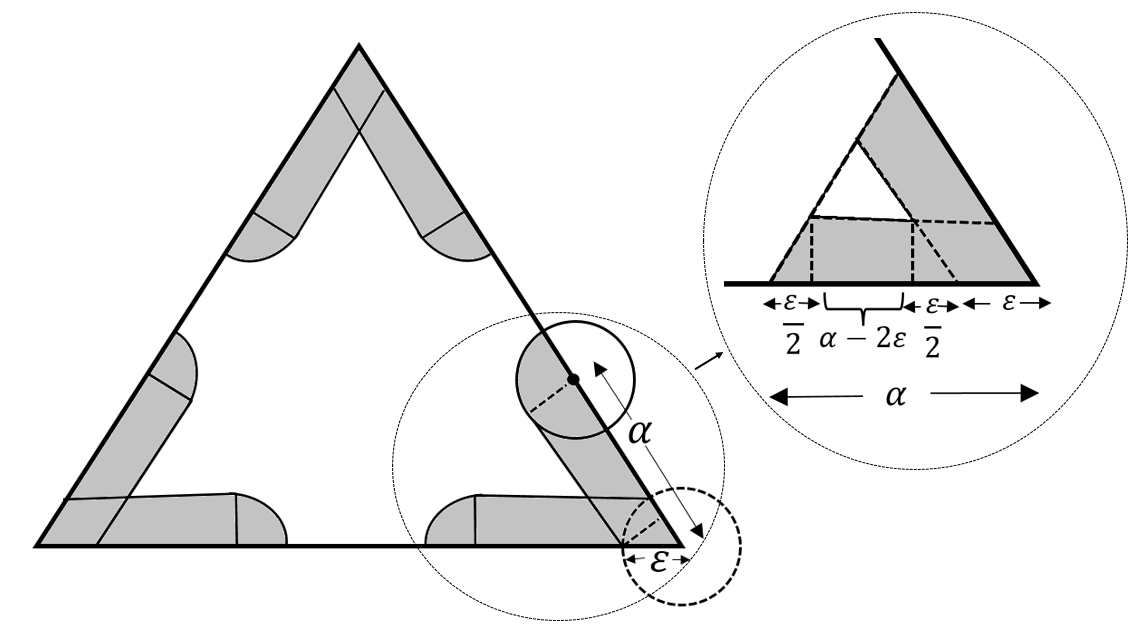}
	\caption{($\varepsilon$-sufficiently scatterd) The big triangle represents the probability simplex $\mathcal{X}$ when $K=3$, the shaded region denotes the region which is $\varepsilon$ near the edges and in the inset, the shaded region depicts the lower bound for the volume (area) of this region at each vertex. }
	\label{fig:suffpedge}
\end{figure}
%

Let us define $\alpha = \underset{{k, k^{'} \in \{1,\dots,K\},k\neq k^{'}}}{\text{min}} \alpha_{k k^{'}}$. Lemma \ref{lem:cond} states that for every edges of the probability simplex, if there exists at least one row  in $\overline{\bm H}$ which belongs to certain range in the edge which is of length $\alpha$ ($\alpha$-edge) (see Fig. \ref{fig:suffpedge}), then the matrix ${\overline{\bm H}}$ is sufficiently scattered. 

Let us denote the probability simplex as $\mathcal{X} = \{ \bm x \in \mathbb{R}^K | \bm{x}^\top \bm{1} = 1 , \bm x \ge 0\}$. 

For each vertex, there exists $K-1$ edges associated to it.  Let us denote an $\varepsilon$-neighbourhood of $\alpha$-edge connecting the vertices $k$ and $k^{'}$  as $\widetilde{\mathcal{Q}}_{k,k^{'}}(\varepsilon,\alpha)$. By the conditions in Lemma \ref{lem:cond} and the definition of $\varepsilon$- sufficiently scattered (cf. Def. \ref{def:suffepsilon}), it can be seen that, for every edges connecting $k$ and $k'$,  if there exists at least one row in $\overline{\H}$ belonging to $\widetilde{\mathcal{Q}}_{k,k^{'}}(\varepsilon,\alpha)$, then $\overline{\H}$ is $\varepsilon$-sufficiently scattered.

For each vertex $k$, the union of $\widetilde{\mathcal{Q}}_{k,k^{'}}(\varepsilon,\alpha)$, $k'=\{1,\dots,K-1\}$ forms a continuous neighbourhood around the vertex $k$ denoted as $\widetilde{\mathcal{Q}}_{k}(\varepsilon,\alpha)$ , i.e, 
\begin{align}
\widetilde{\mathcal{Q}}_{k}(\varepsilon,\alpha) = \bigcup_{k'=1}^{K-1} \widetilde{\mathcal{Q}}_{k,k^{'}}(\varepsilon,\alpha) \label{equnion}
\end{align}

Geometrically, the volume of the continuous set $\widetilde{\mathcal{Q}}_{k}(\varepsilon,\alpha)$ can be lower bounded as below (see Fig \ref{fig:suffpedge})
\begin{equation}
    \text{vol}(\widetilde{\mathcal{Q}}_{k}(\varepsilon,\alpha)) \ge \text{vol}(\mathcal{X}_{\alpha})-\text{vol}(\mathcal{X}_{\alpha-2\varepsilon}) \label{volsuff}
\end{equation}

where $\mathcal{X}_{\alpha'}$ is $(K-1)$-dimensional simplex which intersects the co-ordinate axes at $\frac{\alpha'}{\sqrt{2}} {\bm e}_k$ for every $k=\{1,\dots,K\}$ and thus has the edge lengths $\alpha'$. The volume of $\mathcal{X}_{\alpha'}$ is given by $\frac{(\alpha'/\sqrt{2})^{K-1}}{(K-1)!}$ \cite{stein1966anote}

Eq. \eqref{volsuff} can then be written as
\begin{align}
    \text{vol}(\widetilde{\mathcal{Q}}_{k}(\varepsilon,\alpha)) &\ge \frac{\alpha^{K-1}}{\sqrt{2}^{K-1}(K-1)!}-\frac{(\alpha-2\varepsilon)^{K-1}}{\sqrt{2}^{K-1}(K-1)!}\\
    &=\frac{\alpha^{K-1}}{\sqrt{2}^{K-1}(K-1)!}\left(1-\left(1-\frac{2\varepsilon}{\alpha}\right)^{K-1}\right)\\
    &=\frac{\alpha^{K-1}}{\sqrt{2}^{K-1}(K-1)!}\left(\frac{2\varepsilon}{\alpha}\left(1+\left(1-\frac{2\varepsilon}{\alpha}\right)+\dots+\left(1-\frac{2\varepsilon}{\alpha}\right)^{K-2}\right)\right) \label{eqbinom}\\
    &=\frac{\alpha^{K-1}}{\sqrt{2}^{K-1}(K-1)!}\left(\frac{2\varepsilon}{\alpha}+\frac{2\varepsilon}{\alpha}\left(1-\frac{2\varepsilon}{\alpha}\right)+\dots+\frac{2\varepsilon}{\alpha}\left(1-\frac{2\varepsilon}{\alpha}\right)^{K-2}\right)\\
    &\ge  \frac{\alpha^{K-1}}{\sqrt{2}^{K-1}(K-1)!}\frac{2\varepsilon}{\alpha} \label{ineqbinom}\\
    &= \frac{\alpha^{K-2}\varepsilon}{\sqrt{2}^{K-3}(K-1)!} \label{eqvolumeQ}
\end{align}

Eq. \eqref{eqbinom} uses the geometric series sum formula $1-a^n = (1-a)(1+a+\dots+a^{n-1})$ and the assumption that $\alpha > 2\varepsilon$.

From Eq. \eqref{equnion} and \eqref{eqvolumeQ}, the volume of the set $\widetilde{\mathcal{Q}}_{k,k^{'}}(\varepsilon,\alpha)$ can be lower bounded as
\begin{align}
 \text{vol}\left(\bigcup_{k'=1}^{K-1} \widetilde{\mathcal{Q}}_{k,k^{'}}(\varepsilon,\alpha) \right) &= \text{vol}\left(\widetilde{\mathcal{Q}}_{k}(\varepsilon,\alpha)\right)  \\
 \implies (K-1)\text{vol}\left(\widetilde{\mathcal{Q}}_{k,k'}(\varepsilon,\alpha)\right) &\ge \text{vol}\left(\widetilde{\mathcal{Q}}_{k}(\varepsilon,\alpha)\right)\\
 &\ge \frac{\alpha^{K-2}\varepsilon}{\sqrt{2}^{K-3}(K-1)!}\\
 \implies \text{vol}\left(\widetilde{\mathcal{Q}}_{k,k'}(\varepsilon,\alpha)\right) &\ge \frac{\alpha^{K-2}\varepsilon}{\sqrt{2}^{K-3}(K-1)(K-1)!}
\end{align}
 
Suppose we are uniformly sampling a set $\mathcal{P}$ of size $s$  from the probability simplex $\mathcal{X}$ such that $\mathcal{P} := \{{\bm p}_1,{\bm p}_2,\dots, {\bm p}_s\} $

Let us define an event $J_i$ such that for every $i \in \{1,\dots,s\}$,
\begin{equation}
 J_i=
    \begin{cases}
      1, & \text{if~}\ \bm p_i \in \widetilde{\mathcal{Q}}_{k,k'}(\varepsilon,\alpha) \\
      0, & \text{otherwise}
    \end{cases}
\end{equation}
Consider the probability such that event $J_i$ happens,
\begin{align}
    {\sf Pr}(J_i=1) &= \frac{\text{vol}(\widetilde{\mathcal{Q}}_{k,k'}(\varepsilon,\alpha))}{\text{vol}(\mathcal{X})}\\
    &\ge (K-1)!\frac{\alpha^{K-2}\varepsilon}{\sqrt{2}^{K-3}(K-1)(K-1)!} \label{eqvol2}\\
    &= \frac{\alpha^{K-2}\varepsilon}{(K-1)\sqrt{2}^{K-3}} \label{eqvol1}
\end{align}
  Eq. \eqref{eqvol1} uses the fact that the volume of the $(K-1)$-dimensional simplex $\mathcal{X}$ is given by $\frac{1}{(K-1)!}$ \cite{stein1966anote}.

Now, let us define the random variable $U = \sum_{i=1}^s J_i$. 
Then, 
\begin{align}
\mathbb{E}[U] &=  \mathbb{E}[\sum_{i=1}^s J_i] = \sum_{i=1}^s\mathbb{E}[ J_i]\\
&= \sum_{i=1}^s {\sf Pr}(J_i=1) = s {\sf Pr}(J_i=1)\\
&\ge s\frac{\alpha^{K-2}\varepsilon}{(K-1)\sqrt{2}^{K-3}} \label{eqexp}
\end{align}

Now, if there exists at least one sample from set $\mathcal{P}$ which is in the $\varepsilon$-neighbourhood of $\alpha$-edge, i.e, the event $J_i$ happens at least once, then $U = \sum_{i=1}^s J_i\ge 1$. Also,
\begin{align}
    {\sf Pr}(U \ge 1) &= 1- {\sf Pr}(U < 1)\\
    &= 1- {\sf Pr}(U \le 0)\\
    &= 1- {\sf Pr}(U = 0)\\
\end{align}

So, our goal boils down to finding ${\sf Pr}(U \le 0)$ and we will achieve this using Lemma \ref{lem:chernoff}.  

From Lemma \ref{lem:chernoff}, it follows that
\begin{align}
    {\sf Pr}(U  \le \mathbb{E}[U]-t) \le e^{-2t^2/\sum_{i=1}^s (b_i-a_i)^2}
\end{align}

By assigning $\mathbb{E}[U]-t=0$, we get $t=\mathbb{E}[U]$ . Also, notice that in our case $b_i=1$, $a_i=0$, then
\begin{align}
    {\sf Pr}(U  \le 0) &\le e^{-\frac{2\mathbb{E}[U]^2}{s}} \\
    &\le e^{-\frac{s\alpha^{2(K-2)}\varepsilon^{2}}{2^{K-4}(K-1)^2}} \label{chernoff}
\end{align}
Eq. \eqref{chernoff} is obtained by using the inequality $\mathbb{E}[U]  \ge s\frac{\alpha^{K-2}\varepsilon}{(K-1)\sqrt{2}^{K-3}} $ as in Eq. \eqref{eqexp} and implies that, the probability such that the uniform sample $\mathcal{P}$ does not contain any points from $\widetilde{\mathcal{Q}}_{k,k'}(\varepsilon,\alpha)$ is less than $e^{-\frac{s\alpha^{2(K-2)}\varepsilon^{2}}{2^{K-4}(K-1)^2}}$. 

Now we have to find the corresponding probability that considers all the $(K-1)$ edges for each vertex $k$.

For this, let us define events $\widetilde{E}_{kk^{'}}$ as follows,
\begin{align}
    \widetilde{E}_{kk^{'}} = \{\text{There exists no point }{\bm p}\text{ in the uniform sample set }\mathcal{P}\text{ such that }{\bm p} \in \mathcal{Q}_{k,k^{'}}(\varepsilon,\alpha) \}
\end{align}

From Eq. \eqref{chernoff}, it is clear that ${\sf Pr}(\widetilde{E}_{kk^{'}}) \le e^{-\frac{s\alpha^{2(K-2)}\varepsilon^{2}}{2^{K-4}(K-1)^2}} $. Since the points are uniformly sampled from the probability simplex $\mathcal{X}$, this bound is applicable for all $k,k^{'} \in \{1,\dots,K\}, k\neq k^{'}$.

Now let us define the event $\widetilde{E}$ as below
\begin{align*}
    \widetilde{E} = \{ \text{there exists at least one point in the set }\mathcal{P}\text{ such that }{\bm p} \in \mathcal{Q}_{k,k^{'}}(\varepsilon,\alpha)\text{ for all } k,k^{'}, k\neq k^{'} \}
\end{align*}

We can observe that $\widetilde{E} ={\bigcap}_{\substack{k,k^{'} \vspace{-0.4em}\\
  k \neq k^{'} }}\overline{\widetilde{E}_{kk^{'}}} $ where $\overline{\widetilde{E}_{kk^{'}}}$ is the complement of the event $\widetilde{E}_{kk^{'}}$.

 Therefore,
\begin{align}
    {\sf Pr}(E) ={\sf Pr}\left({\bigcap}_{\substack{k,k^{'} \vspace{-0.4em}\\
  k \neq k^{'} }}\overline{\widetilde{E}_{kk^{'}}}\right) &= {\sf Pr}(\overline{{\bigcup}_{\substack{k,k^{'} \vspace{-0.4em}\\
  k \neq k^{'} }} \widetilde{E}_{kk^{'}}}) \nonumber \\
    &= 1- {\sf Pr}\left({\bigcup}_{\substack{k,k^{'} \vspace{-0.4em}\\
  k \neq k^{'} }} \widetilde{E}_{kk^{'}}\right) \nonumber \\
    &\ge 1-\sum_{k,k^{'}}{\sf Pr}(\widetilde{E}_{kk^{'}})\nonumber \\
    &\ge 1-K(K-1) e^{-\frac{s\alpha^{2(K-2)}\varepsilon^{2}}{2^{K-4}(K-1)^2}} \label{eventprob}
\end{align}

Eq. \eqref{eventprob} implies that with probability greater than or equal to $1-K(K-1) e^{-\frac{s\alpha^{2(K-2)}\varepsilon^{2}}{2^{K-4}(K-1)^2}}$, the points from the $\varepsilon$-neighbourhood of all the $\alpha$-edges are contained by set $\mathcal{P}$. This essentially means that the rows of $\overline{\bm H}$ satisfy the assumption \eqref{assump} with $\varepsilon$ accuracy and thus the $\varepsilon$-sufficiently scattered condition is achieved.

 From Lemma \ref{lem:cond}, we get the lower bounds for $\alpha$ under various values of $K$:
 \begin{align}
     \alpha > \alpha_{min} , \quad  \alpha_{min} = 
  \begin{cases}
    1, & \text{for } K=2, \\
    \frac{2}{3}, & \text{for } K=3, \\
    \frac{1}{2}, & \text{for } K > 3.
  \end{cases}
 \end{align}
 
 Therefore, Eq. \eqref{eventprob} can be agian bounded as,
 \begin{align}
     {\sf Pr}(E) \ge 1-K(K-1) e^{-\frac{s\alpha_{min}^{2(K-2)}\varepsilon^{2}}{2^{K-4}(K-1)^2}}
 \end{align}

If $s$ represents the number of rows in $\overline{\bm H}$, then for $s \ge \frac{2^{K-4}(K-1)^2}{\alpha_{min}^{2(K-2)}\varepsilon^{2}}{\rm log}\big(\frac{K(K-1)}{\rho}\big)$ , with probability at least $1-\rho$, ${\bm H}$ is $\varepsilon$-sufficienty scattered. Note that $s=(M-1)K$ where $M$ is the number of annotators. This provides a bound on the number of annotators needed. 

Consequently, if there exists at least  $1+  \frac{2^{K-4}(K-1)^2}{K\alpha_{min}^{2(K-2)}\varepsilon^{2}}{\rm log}\big(\frac{K(K-1)}{\rho}\big)$ annotators, then we have the conclusion of Theorem~\ref{thm:SSM}.

\section{Sample complexity for second order and third order statistics} \label{sample}
In this section, we compare the sample complexity needed to estimate the second order statistics of the annotator responses from $m$ and $\ell$ denoted as ${\bm R}_{m,\ell}$ and the third order statistics of the annotator responses from $m$, $n$ and $\ell$ denoted as ${\bm R}_{m,n,\ell}$ given a dataset of $N$ samples to jointly label as one of the $K$ classes.

In crowdsourcing, not all samples are labeled by an annotator. To be specific, an annotator $m$ labels each sample with probability $p_m \in (0,1]$ and in most of the practical cases, $p_m << 1$. For simpler analysis, let us take $p_m = p$, for all annotators. Then, this results to have an average of $\ceil{Np^2}$ joint responses from annotators $m$ and ${\ell}$ and $\ceil{Np^3}$ joint responses from annotators $m$, $n$ and $\ell$. With this and using the matrix and tensor concentration results from  \cite{zhang2014spectral}, the estimation error for ${\bm R}_{m,\ell}$ and ${\bm R}_{m,n,\ell}$ can be re-stated as, with probability at least $1-\delta$,
\begin{align}
    \|\bm R_{m,\ell}-\widehat{ \bm R}_{m,\ell}\|_{F}\leq \frac{1+\sqrt{{\rm log}(1/\delta)}}{p\sqrt{N}} \label{sample1}\\
     \| \bm R_{m,n,\ell}-\widehat{ \bm R}_{m,n,\ell}\|_{F}\leq \frac{1+\sqrt{{\rm log}(K/\delta)}}{p^{\frac{3}{2}}\sqrt{N/K}}  \label{sample2} 
\end{align}

It is clear from Eq. \eqref{sample1} and \eqref{sample2} that in order to achieve the same accuracy, third order statistics need much higher number of samples compared to second order statistics when $p$ is smaller and $K$ is larger. 




\end{document}